\def\BibTeX{{\rm B\kern-.05em{\sc i\kern-.025em b}\kern-.08em
    T\kern-.1667em\lower.7ex\hbox{E}\kern-.125emX}}
\newcommand{\code}[1]{\texttt{#1}}
\NewDocumentCommand{\circleddefaults}{
    O{black}
    O{black}
    O{white}
    m
    }{
    \tikz[baseline=(char.base)]{
    \node[shape=circle, fill=#1, draw=#2, text=#3, inner sep=0.5pt] (char) {\small#4};}}
\setlist{nolistsep}
\newcommand{\thickhline}{%
    \noalign {\ifnum 0=`}\fi \hrule height 1pt
    \futurelet \reserved@a \@xhline
}
\newcolumntype{"}{@{\hskip\tabcolsep\vrule width 1pt\hskip\tabcolsep}}
\definecolor{codegreen}{rgb}{0,0.6,0}
\definecolor{codegray}{rgb}{0.5,0.5,0.5}
\definecolor{codepurple}{rgb}{0.58,0,0.82}
\definecolor{backcolour}{rgb}{0.95,0.95,0.92}
\lstdefinestyle{mystyle}{
    backgroundcolor=\color{backcolour},   
    commentstyle=\color{codegreen},
    keywordstyle=\color{magenta},
    numberstyle=\small\color{codegray},
    stringstyle=\color{codepurple},
    basicstyle=\ttfamily,
    breakatwhitespace=false,         
    breaklines=true,                 
    captionpos=b,                    
    keepspaces=true,                 
    numbers=left,                    
    numbersep=5pt,                  
    showspaces=false,                
    showstringspaces=false,
    showtabs=false,                  
    tabsize=2
}
\newtheorem{theorem}{Theorem}
\newcommand\fs@betterruled{%
  \def\@fs@cfont{\bfseries}\let\@fs@capt\floatc@ruled
  \def\@fs@pre{\vspace*{5pt}\hrule height.8pt depth0pt \kern2pt}%
  \def\@fs@post{\kern2pt\hrule\relax}%
  \def\@fs@mid{\kern2pt\hrule\kern2pt}%
  \let\@fs@iftopcapt\iftrue}
\begin{document}
\pagenumbering{gobble}

\title{Hexcute: A Compiler Framework for Automating Layout Synthesis in GPU Programs}


\author{\IEEEauthorblockN{Xiao Zhang}
\IEEEauthorblockA{\textit{University of Toronto}\\
\textit{NVIDIA}\\
Toronto, Canada
\\ zita.zhang@mail.utoronto.ca
}
\\
\IEEEauthorblockN{Yang Hu}
\IEEEauthorblockA{\textit{NVIDIA} \\
Toronto, Canada
\\ maxhu@nvidia.com
}
\and
\IEEEauthorblockN{Yaoyao Ding}
\IEEEauthorblockA{\textit{University of Toronto}\\
\textit{NVIDIA}\\
\textit{Vector Institute}\\
Toronto, Canada
\\ yaoyao@cs.toronto.edu
}
\\
\IEEEauthorblockN{Tatiana Shpeisman}
\IEEEauthorblockA{\textit{NVIDIA} \\
Toronto, Canada 
\\ tshpeisman@nvidia.com
}
\and
\IEEEauthorblockN{Bolin Sun}
\IEEEauthorblockA{\textit{University of Toronto}\\
\textit{NVIDIA}\\
Toronto, Canada 
\\ bolin.sun@mail.utoronto.ca
}
\\
\IEEEauthorblockN{Gennady Pekhimenko}
\IEEEauthorblockA{\textit{University of Toronto}\\
\textit{NVIDIA}\\
\textit{Vector Institute}\\
Toronto, Canada
\\ pekhimenko@cs.toronto.edu
}
}
\IEEEaftertitletext{\vspace{-1\baselineskip}} 

\maketitle
\thispagestyle{plain}
\pagestyle{plain}

\begin{abstract} Efficient GPU programming is crucial for achieving high performance in deep learning (DL) applications. The performance of GPU programs depends on how data is parallelized across threads and arranged within memory subsystems. The mapping functions describing tensors on GPUs are known as \emph{tensor layouts}. Low-level programming frameworks, such as CUTLASS and Hidet, provide expressive layout abstractions but often require \emph{considerable programming effort} to manually specify optimal layouts. High-level GPU programming languages, such as Triton, rely on compiler heuristics to generate dataflow, layouts, and pipelining strategies in GPU programs. However, the heuristics for dataflow and pipelining strategies are not generalizable to complex operators.  To balance expressiveness and programmability, we propose Hexcute, a compiler framework that automates layout synthesis while providing explicit control over dataflow and pipelining. Hexcute formalizes layout synthesis as a constraint programming problem and solves it with a type-inference-based algorithm. This approach enables systematic exploration of optimal layouts and instructions. 

Our evaluation shows that Hexcute matches the performance of libraries like cuBLAS and FlashAttention on GEMM, Attention, and their variants, while reducing the amount of code by 1.27$\times$-7.94$\times$ compared to CUTLASS. For mixed-type mixture-of-experts (MoE) operators, Hexcute achieves an average speedup of 6.46$\times$ over Triton. In the end-to-end evaluations of vLLM, Hexcute delivers up to 2.60$\times$ speedup on DeepSeek-R1-AWQ and 2.04$\times$ on a Mamba-based model.
\end{abstract}

\begin{IEEEkeywords}
Tensor Core programming, machine learning compiler, GPU code generation
\end{IEEEkeywords}

\section{Introduction}
Deep learning (DL) has achieved breakthroughs across a wide range of applications, such as image recognition~\cite{He_2016_CVPR,diffuser,fastrcnn}, natural language processing~\cite{llama,multitask}, autonomous driving~\cite{yurtsever2020survey,autonomousdriving}, and drug discovery~\cite{drug}. Recent large language models (LLMs)\cite{lang_few_shot} have further broadened DL’s impact. Deep neural networks (DNNs) rely on computationally intensive tensor operations and therefore require specialized hardware accelerators such as GPUs~\cite{gpu,amdgpu}, TPUs~\cite{tpu}, and NPUs~\cite{npu,huaweinpu}. Among these, GPUs remain the most accessible accelerators for both research and industry.

Efficient GPU programming is challenging due to complex architectures and execution models. Modern GPUs feature specialized units, such as NVIDIA’s Tensor Cores and AMD’s Matrix Cores, which execute instructions collectively rather than following the traditional SIMT model~\cite{gpu_2}. The GPU memory hierarchy comprises global memory (DRAM), a shared L2 cache, software-managed shared memory, and fast register memory managed by threads. Memory access on GPUs requires careful optimization. For example, global memory accesses must be coalesced~\cite{davidson1994memory} and shared memory accesses must avoid bank conflicts~\cite{tretter2017minimising}. The way tensors are distributed among thread registers and organized in memory, referred to as \emph{tensor layout}~\cite{Triton}, has a critical impact on performance. Poor layouts can lead to uncoalesced memory accesses or shared-memory bank conflicts, which limit performance.

Low-level programming frameworks, such as CUTLASS~\cite{Thakkar_CUTLASS_2023}, Graphene~\cite{graphene}, and Hidet~\cite{Hidet}, provide expressive abstractions for constructing layouts and mapping threads to data. These systems enable programmers to explicitly specify complex mappings and support optimizations such as bank conflict avoidance, optimal dataflow, and efficient pipelining. However, users must manually specify layouts and ensure they remain consistent across tensor operations and hardware instructions. This process is both error-prone and time-consuming, requiring deep hardware expertise. 

High-level programming models, such as Triton, automatically generate dataflow and tensor layouts, as well as pipelining strategies. While this approach reduces programming effort, it fails to generalize to emerging operators, such as mixed-type operators for weight-only quantization~\cite{AWQ,frantar2023gptq} and scan operators in the Mamba architecture~\cite{mamba}. First, the heuristics for dataflow and pipelining are not generalizable. Triton sometimes places tensors in suboptimal memory, which causes excessive copies across memory hierarchies. Users also cannot control tensor placement. Moreover, Triton adopts a case-by-case layout system, which cannot systematically infer the complex layouts needed for mixed-type operations.

To balance programmability and explicit control, we propose Hexcute, a compiler framework that automates layout synthesis in GPU programs. Hexcute is built on top of a tile-level programming interface with shared memory and register abstractions, which allows explicit expression of optimizations, including dataflow and pipelining. Hexcute provides a principled, general framework for layout synthesis and collective instruction selection. Hexcute (1) models layouts as functions, (2) embeds them in tensor types, (3) builds constraints that respect hardware instruction requirements using algebraic operations, and (4) solves them with a type-inference-based algorithm~\cite{type_infer}. 

We implement Hexcute and integrate Hexcute kernels into the state-of-the-art (SOTA) inference engine called vLLM~\cite{vLLM}. Our evaluation shows that Hexcute matches the performance of libraries across various operators while reducing the amount of code by 1.27$\times$-7.94$\times$. For the mixed-type MoE~\cite{MoE} operator, Hexcute achieves an average speedup of 6.46$\times$ over Triton. In the end-to-end evaluations of vLLM, Hexcute delivers up to 2.60$\times$ speedup on DeepSeek-R1-AWQ and 2.04$\times$ on a Mamba-based model. We summarize our contributions as follows:

\begin{itemize}
\item To balance programmability and explicit control, we propose a compiler framework called Hexcute for automating layout synthesis. It is built on top of a tile-based programming model with shared memory and register abstractions, allowing for the explicit expression of optimizations such as dataflow and pipelining strategies. 
\item To synthesize complex layouts in GPU programs, we propose a type-inference-based approach that ensures correctness by construction and achieves high performance through systematic instruction selection. 
\item We demonstrate that Hexcute matches the performance of libraries while reducing the amount of code by 1.27$\times$-7.94$\times$ compared to CUTLASS. For mixed-type MoE operators, Hexcute achieves an average speedup of 6.46$\times$ over Triton. In the end-to-end evaluations of vLLM, Hexcute delivers up to 2.60$\times$ speedup. 
\end{itemize}
\begin{figure}[t]
     \begin{subfigure}[b]{0.48\linewidth}
        \centering
        \includegraphics[width=\linewidth]{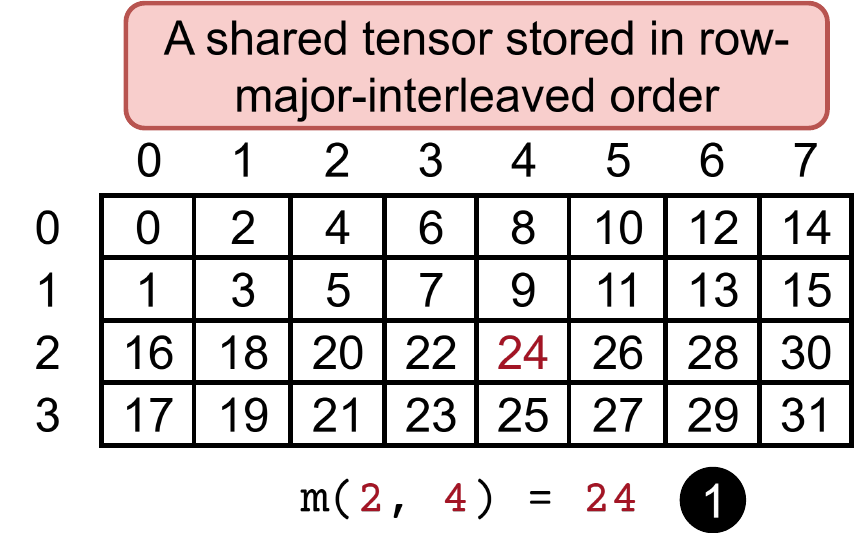}
        \caption{A shared tensor of shape 4$\times$8}
        \includegraphics[width=\linewidth]{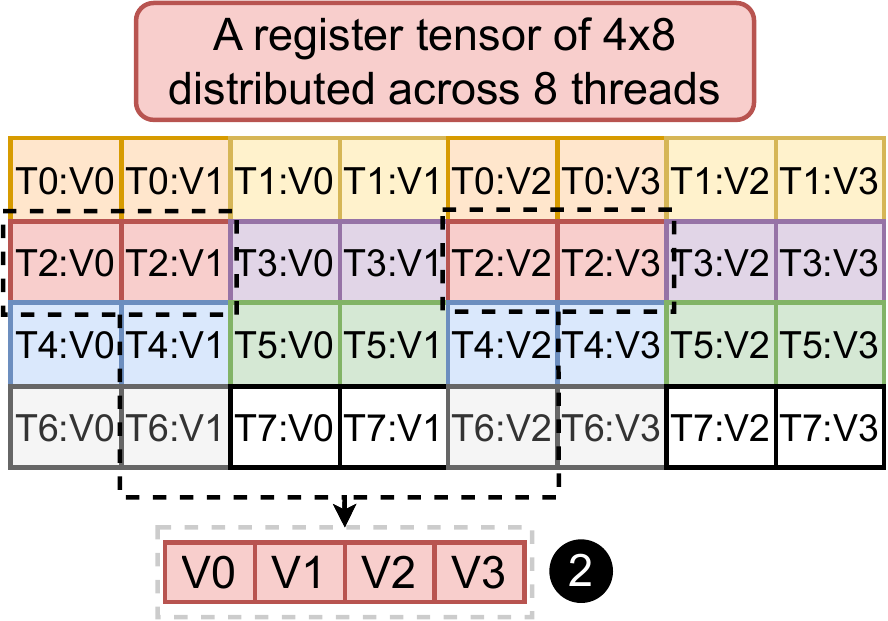}
        \caption{A register tensor of shape 4$\times$8}
    \end{subfigure}
    \begin{subfigure}[b]{0.49\linewidth}
        \centering
        \includegraphics[width=\linewidth]{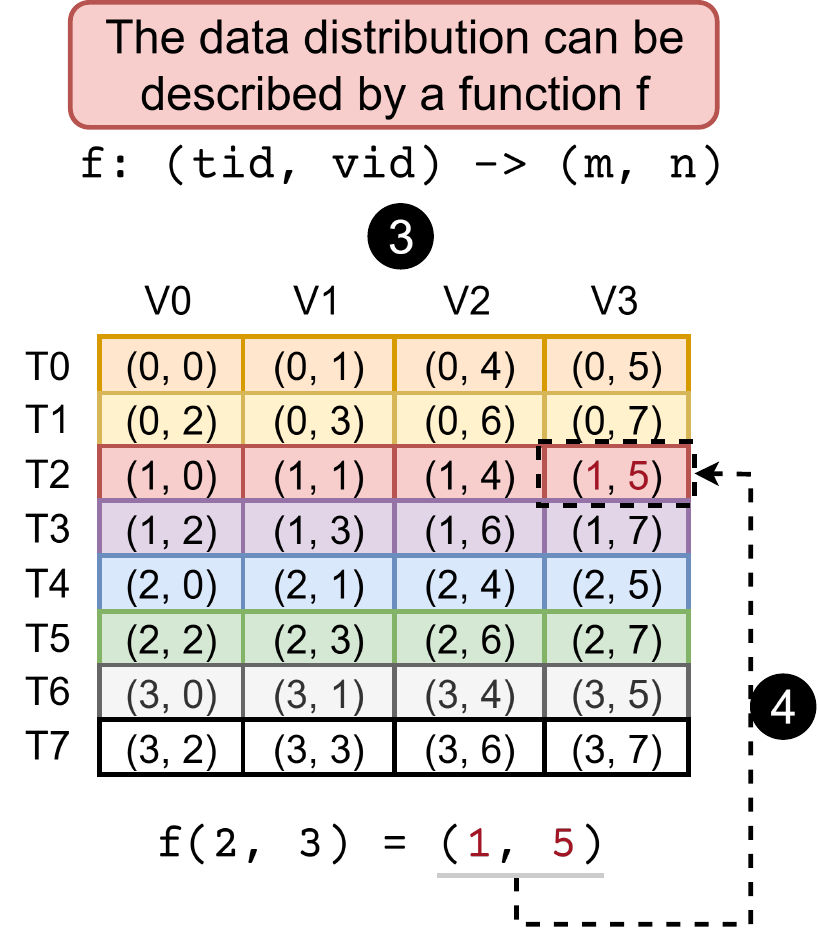}
        \caption{The distribution of the tensor in (b) can be represented with a function \texttt{f}.}
    \end{subfigure}
\caption{Shared memory and register tensors in GPU programs. (a) A shared memory tensor stored in row-major-interleaved order. (b) A register tensor distributed across 8 threads, with each thread holding 4 elements. (c) The distribution of the register tensor in (b) is described by a function \texttt{f}.} 
\label{fig:layout}
\end{figure}
\begin{figure}[t!]
    \begin{subfigure}[b]{0.59\linewidth}
        \centering
        \includegraphics[width=\textwidth]{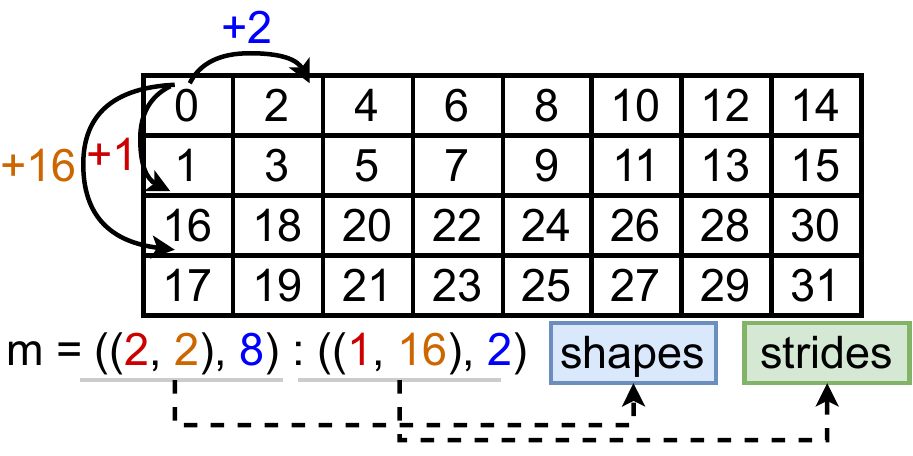}
        \caption{Row-major-interleaved layout \texttt{A}}      \includegraphics[width=\textwidth]{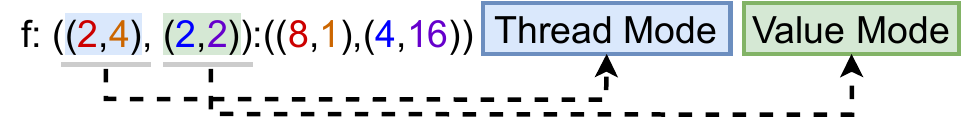}
        \caption{Thread-value layout \texttt{f}}
    \end{subfigure}
    \begin{subfigure}[b]{0.4\linewidth}
        \centering
        \includegraphics[width=\textwidth]{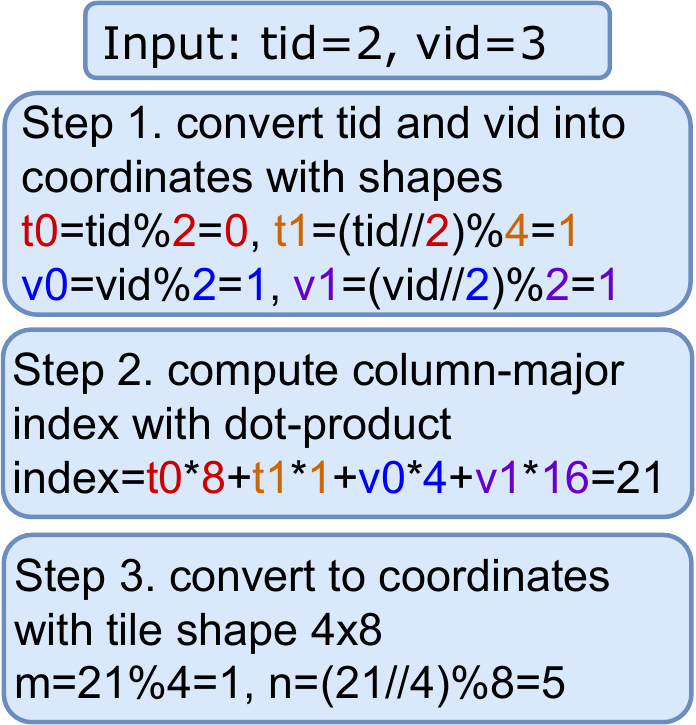}
        \caption*{(c) Compute \texttt{f(2,3)}}
    \end{subfigure}
\caption{Layout concept in CuTe. (a) Layout of Fig.~\ref{fig:layout} (a) represented as  \texttt{m=((2,2),8):((1,16),2)}. (b) Layout of Fig.~\ref{fig:layout} (b) represented as thread-value layout \texttt{f}. (c) Illustration of how \texttt{f} maps the pair $(t,v)=(2,3)$ to the position $(1,5)$ in the 4$\times$8 tensor. Colors highlight the correspondence between coordinates, shapes, and strides in (a), (b), and (c).}
\vspace{-3mm}
\label{fig:memory_layout}
\end{figure}
\section{Background and Motivation}
\subsection{Layout in GPU Programming Model}
GPU programs, known as GPU kernels~\cite{gpu}, define the behavior of thread blocks. Each thread block typically performs identical computations on statically shaped tensors~\cite{cypress}, called tiles~\cite{Triton}. These tensors can reside either in shared memory or register files. A \emph{shared memory tensor} resides in the shared memory subsystem. Fig.~\ref{fig:layout} (a) shows a 4$\times$8 shared tensor stored in row-major-interleaved order. Its layout is defined by a function \texttt{m} mapping tensor coordinates to memory addresses. The numbers in the boxes indicate outputs of \texttt{m}; for example, \circleddefaults{1} shows \texttt{m} maps coordinates \texttt{(2,4)} to memory address 24. A \emph{register tensor} is distributed across the threads of a thread block, with each thread holding a local array. Fig.~\ref{fig:layout}~(b) illustrates a 4$\times$8 register tensor distributed across 8 threads. Each thread holds a local array of 4 elements; for example, the array held by the thread with \texttt{tid=2} is highlighted by \circleddefaults{2} in Fig.~\ref{fig:layout} (b). The distribution of this tensor is described by a function \texttt{f} shown in Fig.~\ref{fig:layout} (c), where the coordinates in the boxes represent outputs of \texttt{f}. As illustrated by \circleddefaults{3} and \circleddefaults{4}, \texttt{f} maps a pair consisting of thread index \texttt{tid} and local array index \texttt{vid} to logical coordinates \texttt{(m, n)} within the tensor; for example, \texttt{(tid, vid)=(2, 3)} is mapped to coordinates \texttt{(1, 5)}.

These mapping functions, such as \texttt{m} and \texttt{f}, are known as  \emph{layouts}~\cite{linearlayouts}. Layouts are critical to the performance of GPU programs, as they determine whether memory accesses are coalesced~\cite{davidson1994memory} or free of bank conflicts~\cite{tretter2017minimising}. We refer to the problem of determining optimal layouts as the layout synthesis problem. Solving the layout synthesis problem is challenging for several reasons. First, since layouts are functions rather than integer-enumerable values, they create a vast search space that is difficult to explore exhaustively. Second, heuristic-based layout assignment lacks generality; new deep learning operators often require new heuristics~\cite{linearlayouts}. Finally, traditional approaches, such as integer linear programming~\cite{layoutselectionpbqp}, are ineffective because layouts cannot be represented as a finite set of integer variables.

\subsection{Programming Models with Manual Layout Specification}
Because layouts are essential yet difficult to derive, low-level frameworks, such as CUTLASS\cite{Thakkar_CUTLASS_2023}, Graphene~\cite{graphene}, and Hidet~\cite{Hidet}, require programmers to specify them manually. Fig.~\ref{fig:low-level_PL} illustrates an example of these programming models, where programmers first define layouts and then implement GPU kernels. Hidet~\cite{Hidet} abstracts layouts as task mappings and exposes primitives to construct task-mapping functions, while CuTe~\cite{Thakkar_CUTLASS_2023} expresses layouts as functions mapping integers to integers. In CuTe~\cite{Thakkar_CUTLASS_2023}, the \emph{Layout} is expressed as a pair of integer tuples: the \emph{shape}, which specifies the tensor dimensions, and the \emph{strides}, which map coordinates to memory addresses, written in the form (\texttt{shapes: strides}). 
CuTe~\cite{Thakkar_CUTLASS_2023} generalizes this notion to hierarchical layouts, where parentheses group dimensions into sub-layouts, called \emph{modes}. 
For example, the row-major interleaved layout \texttt{m} in Fig.~\ref{fig:layout}~(a) can be expressed using nested tuples as shown in Fig.~\ref{fig:memory_layout}~(a), and the sub-layout \texttt{(2,2):(1,16)} is a mode of layout \texttt{m}.
CuTe~\cite{Thakkar_CUTLASS_2023} also introduces the concept of \emph{thread-value layout} (TV layout) to represent the layouts of register tensors. A TV layout consists of two modes: a thread mode and a value mode. Together, they define a function mapping a thread index and a local array index to a position within a tile. For example, function \texttt{f} shown in Fig.~\ref{fig:layout} (c) can be expressed as \texttt{f=((2,4),(2,2)):((8,1),(4,16))}, illustrated in Fig.~\ref{fig:memory_layout}~(b). Fig.~\ref{fig:memory_layout}~(c) illustrates how this layout maps the pair \((t, v)=(2, 3)\) to the coordinates \((1, 5)\) in the 4$\times$8 tile.
\begin{figure}[t]
    \centering
    \includegraphics[width=\linewidth]{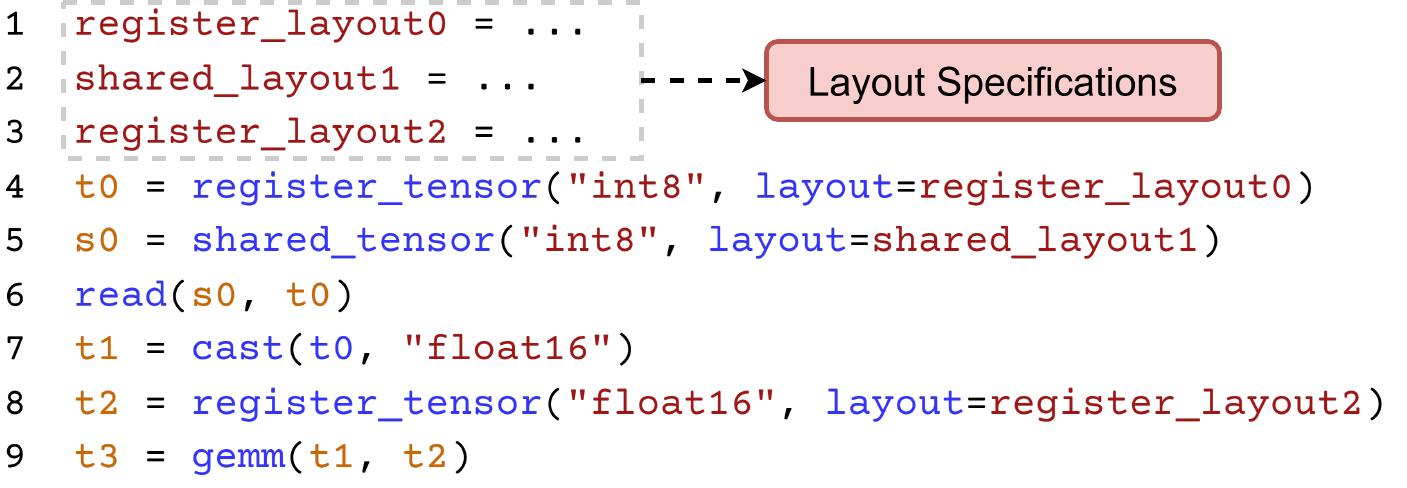}
\caption{Typical workflow in low-level programming models (e.g., Hidet, CUTLASS), where layouts are manually defined before kernel implementation.}
\vspace{-3mm}
\label{fig:low-level_PL}
\end{figure}

These programming models provide users with greater control over GPU programming, enabling the expression of optimizations such as efficient dataflow and software pipelining. However, these approaches lack an automated mechanism for synthesizing the optimal layouts. Manually specifying these layouts is challenging, as they are complex mapping functions. Additionally, layouts and operations have interdependencies. In the example of Fig.~\ref{fig:low-level_PL}, the register tensor \texttt{t0} in \textbf{Line} 4 is cast to \texttt{float16} type and then consumed by the \texttt{gemm} operation in \textbf{Line} 9. As a result, \texttt{register\_layout0} must conform to the requirements of the Tensor Core instruction used in the \texttt{gemm} operation, which imposes specific constraints on how data is distributed across threads. Tracking these dependencies during kernel development is challenging, and inconsistent layouts can result in incorrect programs. Finally, layouts are interrelated and impact performance together. For instance, \textbf{Line} 6 loads data from a shared memory tensor to a register tensor. The layouts \texttt{shared\_layout1} and \texttt{register\_layout0} determine whether vectorized instructions can be used, and whether the memory access pattern avoids bank conflicts. These complexities highlight the need for a programming model that reduces the manual effort required for layout specification.

\begin{figure}[t]
    \begin{subfigure}[t]{\linewidth}
        \centering
        \includegraphics[width=\linewidth]{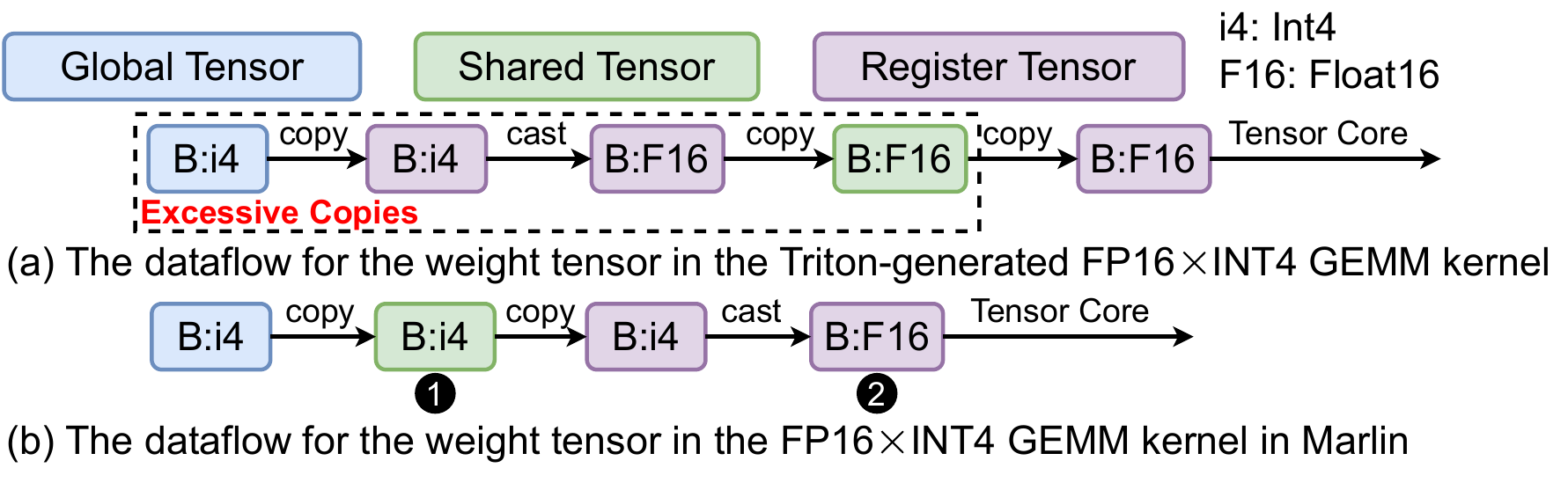}
    \end{subfigure}
\caption{Comparison of dataflows in FP16$\times$INT4 GEMM kernels implemented by Triton and Marlin.}
\vspace{-3mm}
\label{fig:dataflow_mixed_gemm}
\end{figure}
\begin{figure}[t]
    \begin{subfigure}[t]{\linewidth}
        \centering
        \includegraphics[width=\textwidth]{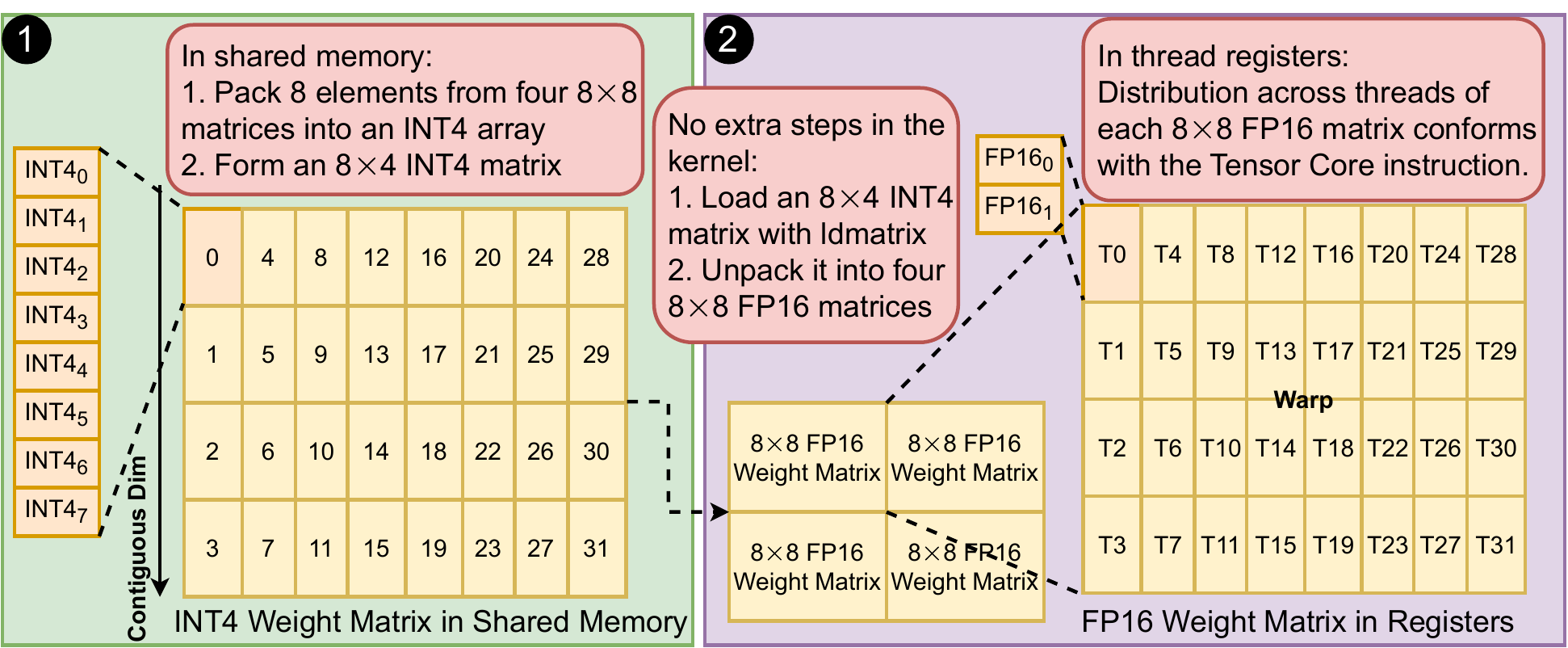}
    \end{subfigure}
\caption{The layouts of shared memory tensor \circleddefaults{1} and register tensor \circleddefaults{2}  in the efficient dataflow of  Fig.~\ref{fig:dataflow_mixed_gemm} (b). These layouts enable loading INT4 weights from shared memory with the \texttt{ldmatrix} instruction and casting INT4 weights to FP16 without additional inter-thread data exchange.}
\vspace{-3mm}
\label{fig:int4_packed_layout}
\end{figure}

\begin{figure*}[t!]
    \begin{subfigure}[b]{0.49\linewidth}
        \centering
        \includegraphics[width=\textwidth]{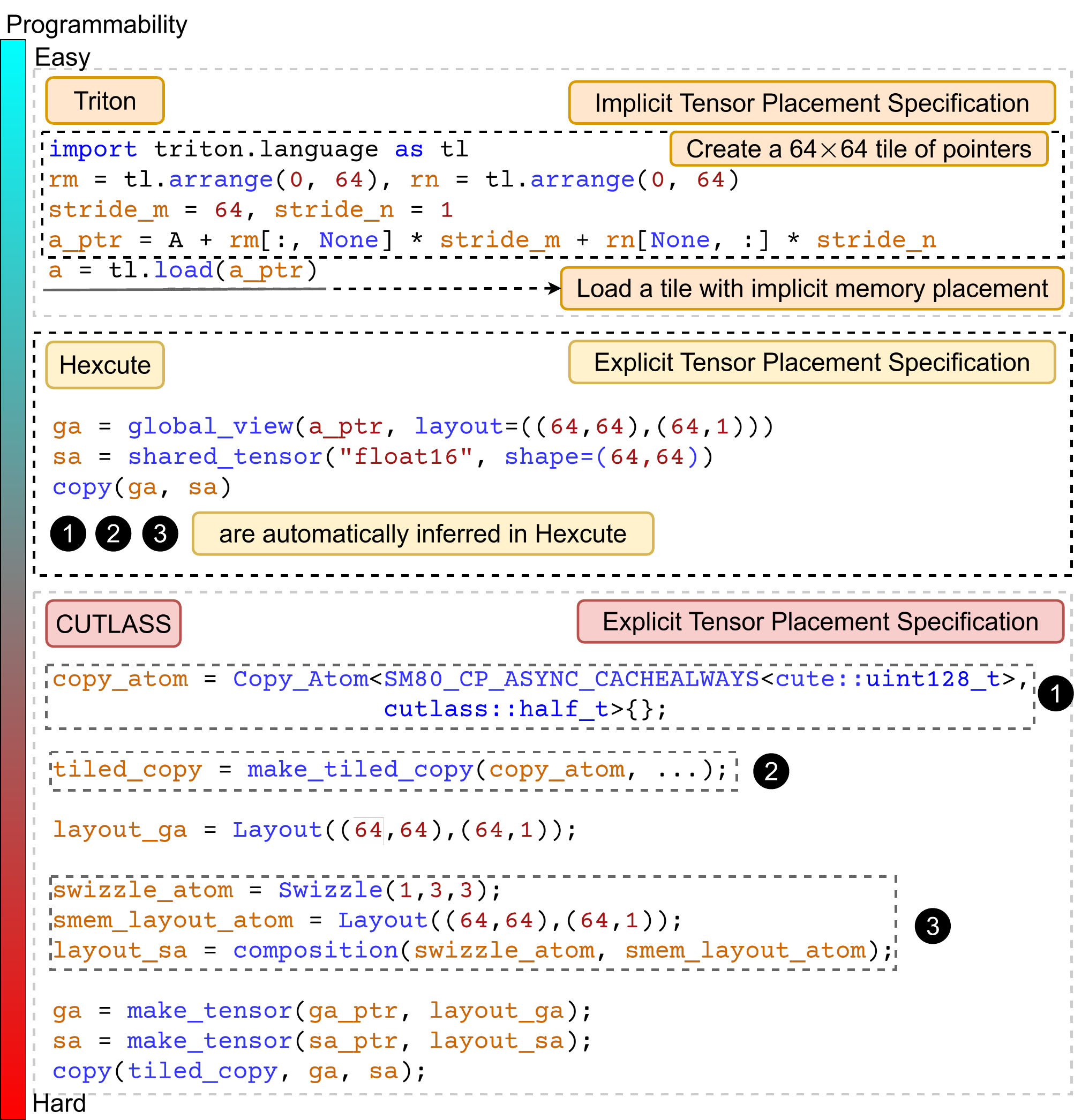}
        \caption{Copy a tensor of shape 64$\times$64 in Triton, Hexcute, and CUTLASS}
    \end{subfigure}
    \begin{subfigure}[b]{0.51\linewidth}
    \centering
    \includegraphics[width=\textwidth]{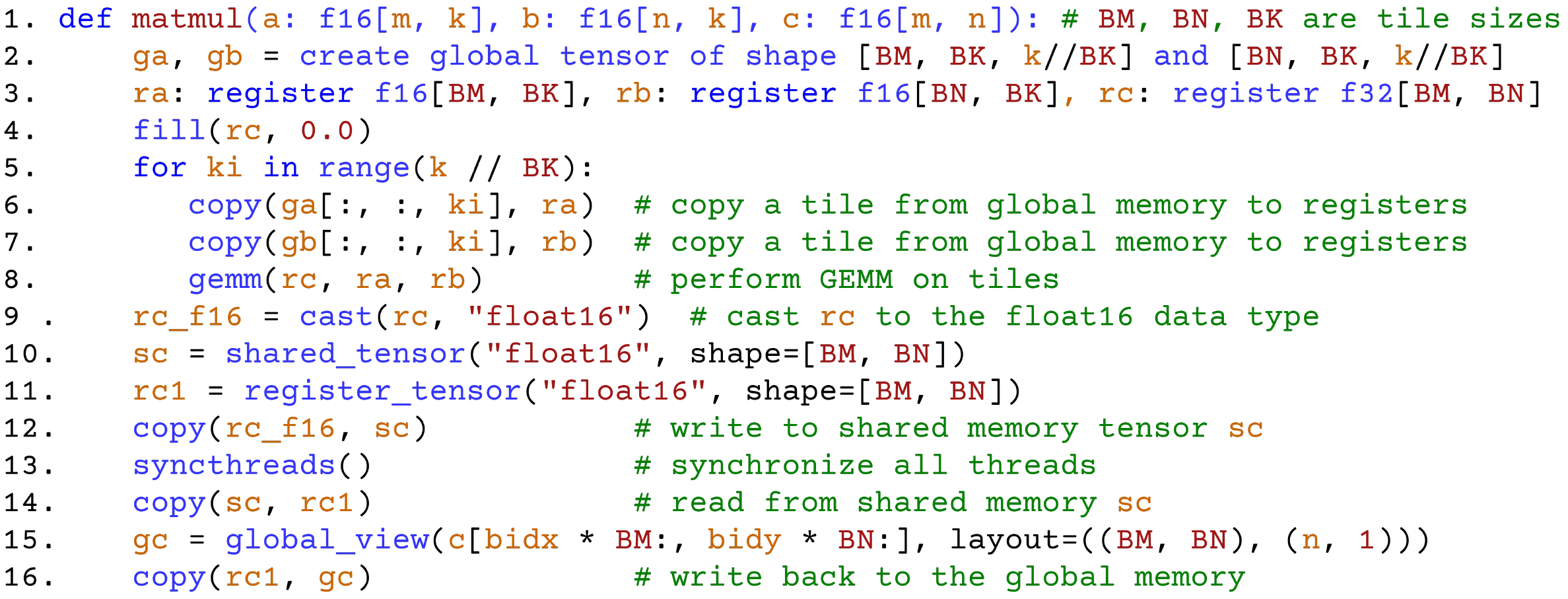}
    \caption{A GEMM kernel in Hexcute}
    \includegraphics[width=\textwidth]{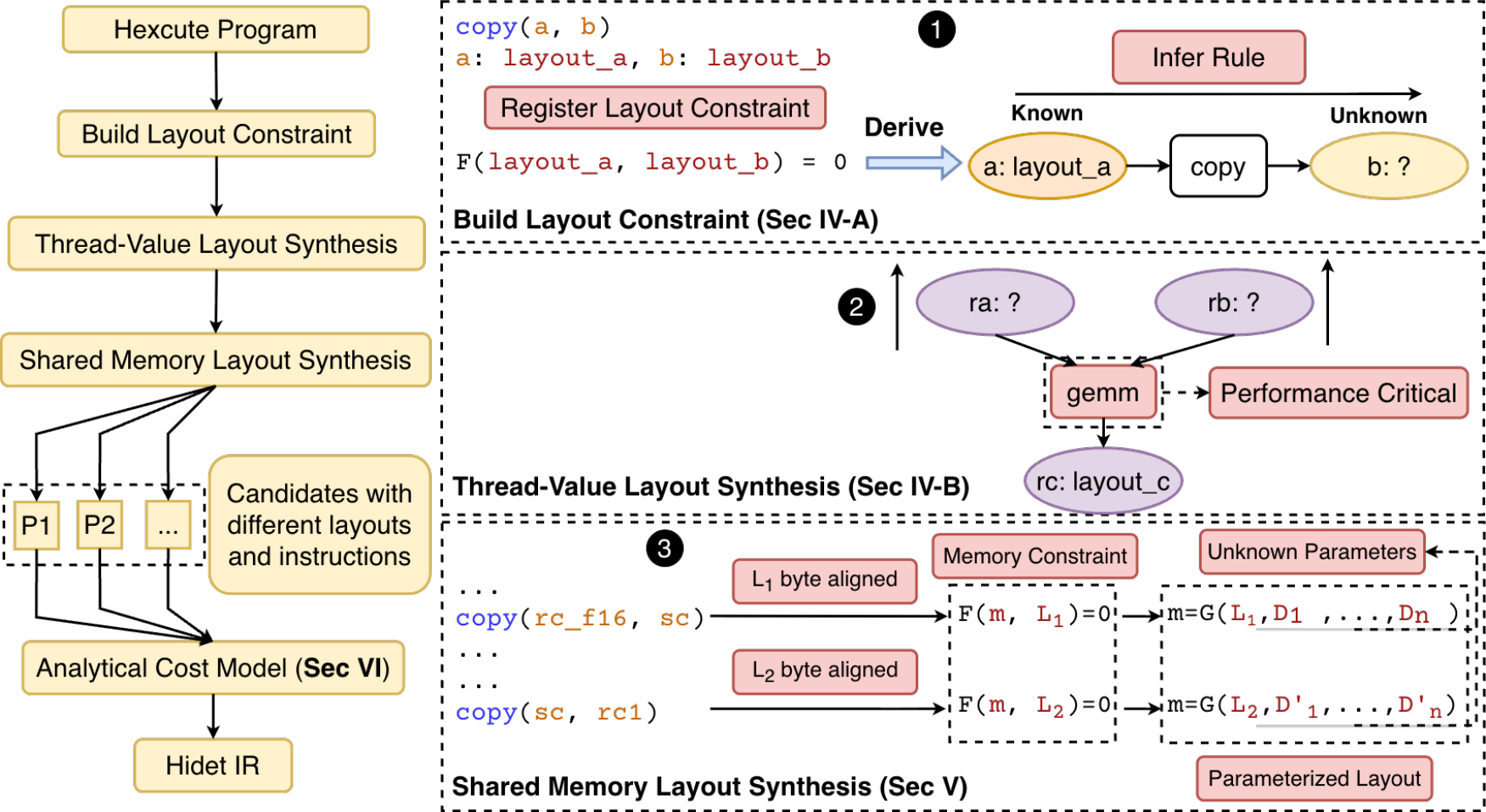}
    \caption{Compilation Workflow}
    \end{subfigure}
\caption{(a) Comparison between Hexcute, Triton, and CUTLASS. Hexcute allows explicit tensor placement, which Triton does not. CUTLASS requires users to \circleddefaults{1} select the instruction, \circleddefaults{2} map threads to their tasks, and \circleddefaults{3} specify layouts, whereas Hexcute automates all three; (b) A GEMM kernel in Hexcute; (c) The compilation workflow of Hexcute. }
\vspace{-3mm}
\label{fig:hexcute_overview}
\end{figure*}
\subsection{Programming Models with Automatic Layout Synthesis}
High-level programming models, such as Triton~\cite{Triton}, automatically generate dataflows and layouts for tensors. However, the heuristics for dataflow and pipelining strategies cannot be generalized to complex deep learning operators, such as mixed-type operations. For example, Fig.~\ref{fig:dataflow_mixed_gemm} (a) illustrates Triton's dataflow in an FP16$\times$INT4 GEMM kernel, where the activation tensor uses the FP16 data type and the weights use INT4. The dataflow is inefficient because Triton places the weight tensor in suboptimal memory spaces, causing excessive data movement between registers and shared memory (highlighted by the dashed-line box). In contrast, Fig.~\ref{fig:dataflow_mixed_gemm} (b) shows the optimized dataflow implemented in the hand-written kernel library called Marlin~\cite{frantar2024marlin}, which avoids the excessive copies. Moreover, the efficient dataflow requires shared memory and register layouts that enable loading INT4 weights from shared memory to registers and converting them to FP16 without inter-thread data exchange. The layouts, illustrated in Fig.~\ref{fig:int4_packed_layout}, of shared tensor \circleddefaults{1} and register tensor \circleddefaults{2} are complex mapping functions that cannot be handled by Triton systematically. Addressing these limitations in Triton is hard: it would require (1) a fundamental refactor of the programming interface to allow more explicit control, and (2) a synthesis algorithm that treats tensor layouts as general functions to infer the layouts in Fig.~\ref{fig:int4_packed_layout}. These limitations become severe on the H100 architecture for the MoE layer~\cite{MoE}, which is critical for SOTA models like DeepSeek-R1~\cite{deepseekr1}. Our evaluation in Section~\ref{evl:mixed_type_moe} shows that Triton’s approach can incur a latency increase of up to 10$\times$. These shortcomings highlight the need for a high-level programming model with more explicit control.

\section{Key Ideas}
\subsubsection*{Explicit Tile-level Programming Model}
To bridge the gap between high-level programming models and low-level frameworks, we propose Hexcute, a compiler framework for automating layout synthesis in GPU programs. Hexcute is built on a Python-embedded domain-specific language (DSL). Fig.~\ref{fig:hexcute_overview} (a) compares Hexcute with Triton and CUTLASS. While Triton targets novice users who primarily focus on prototyping new algorithms, Hexcute targets experienced kernel engineers who require low-level control over dataflow and pipelining and aim to deliver close-to-peak performance with minimal programming effort. Like Triton, Hexcute adopts the tile-level programming model. It extends Hidet Script~\cite{hidetscript} with additional tile-level primitives listed in Table~\ref{tab:semantic_and_syntax}. Unlike Triton, Hexcute exposes shared memory and registers directly, enabling optimizations such as custom dataflow and pipelining. Hexcute provides control comparable to CUTLASS~\cite{Thakkar_CUTLASS_2023}, but without the heavy coding effort. CUTLASS requires users to \circleddefaults{1} select instructions, \circleddefaults{2} map threads to computational tasks, and \circleddefaults{3} specify layouts for tensors, whereas Hexcute systematically infers layouts and automates all three. 

Hexcute represents layouts as functions using CuTe’s Layout abstraction~\cite{CuTe-doc}. Shared-memory layouts are described with CuTe layouts, and register layouts are represented using thread-value layouts. CuTe distinguishes itself from other layout systems, such as 
task mappings~\cite{Hidet} and affine maps~\cite{affinemap}, by defining composition and inversion directly in the sense of function composition and inversion. Because layouts form a monoid under composition, complex layouts can be constructed systematically from simpler ones. This algebra provides Hexcute with the foundation for reasoning about and constructing layout constraints. It further allows Hexcute to reduce tensor-program optimization to solving constraints over a monoid. To our knowledge, Hexcute is the first compiler to automate CuTe layout synthesis on modern GPUs.

Fig.~\ref{fig:hexcute_overview} (b) shows a GEMM kernel in Hexcute with implicit layout specifications; its line-by-line explanation is provided in Appendix~\ref{apx:gemm}, and its formal syntax in Appendix~\ref{apx:lang_spec}. Fig.~\ref{fig:hexcute_overview} (c) then illustrates the compilation workflow. Hexcute accepts a Hexcute program and generates Hidet IR, which is lowered to CUDA code via the compiler passes in Hidet~\cite{Hidet}. 
\begin{table}[t]
{
\footnotesize
  \centering
  \caption{Hexcute tile-level primitive semantics}
  \begin{tabular}[t]{p{0.38\linewidth}p{0.52\linewidth}}
\thickhline
  \textbf{Primitives} & \textbf{Semantics} \\
\hline
  \code{global\_view(b,l)}
  & View a global memory buffer \texttt{b} as a tensor with layout \texttt{l} \\
\hline
  \code{register\_tensor(t,s)} & Create a tensor with data type \texttt{t} and shape \texttt{s} in the register files  \\
\hline
  \code{shared\_tensor(t,s)} & Create a tensor with data type \texttt{t} and shape \texttt{s} in shared memory \\
\hline
  \code{copy(a,b)} & Copy data from tensor \texttt{a} to tensor \texttt{b}  \\
\hline
  \code{gemm(c,a,b)} & Perform matmul on tensors \texttt{a}, \texttt{b}, and \texttt{c}  \\
\hline
  \code{cast(a,t)} & Cast the tensor \texttt{a} to another data type \texttt{t}  \\
\hline
  \code{rearrange(a,l)} & Redistribute register tensor across the threads to layout \texttt{l} via shared memory   \\
\hline
  \code{elementwise(a1,...,an)} & Perform elementwise operations on tensors \texttt{a1,...,an} \\
\hline
  \code{reduce(a,d)} & Reduce the tensor \texttt{a} along the dimension \texttt{d} \\
\thickhline
  \end{tabular}
  \label{tab:semantic_and_syntax}
}
\end{table}
\subsubsection*{Constraint-based Thread-Value Layout Synthesis}
During compilation, Hexcute lowers tile-level operations in Table~\ref{tab:semantic_and_syntax} to collective instructions. Following CuTe, Hexcute also models the semantics of collective instructions such as \lstinline{ldmatrix}~\cite{ldmatrix-ptx-inst} and \lstinline{mma}~\cite{mma-ptx-inst} using thread-value layouts. In Hexcute, the tile-level operations operate on tensors, whereas collective instructions operate on smaller tiles within operation-level tensors. An operation may invoke an instruction multiple times to process the entire tensor. The mapping between instruction-level tiles and tensor-level regions is not fixed by prior assumptions, and Hexcute infers it during compilation. Hexcute relates these two levels by constructing constraints through function composition and inversion, as detailed in Section~\ref{sec:tv_layout_constraint}. This provides a principled mechanism for layout propagation and systematic instruction selection. 

Hexcute builds constraints for thread-value layouts based on the semantics of collective instructions and propagates them to infer register tensor layouts, as detailed in Section~\ref{sec:tv_layout_synthesis}. For example, consider a \texttt{copy} operation that moves data from a shared memory tensor \texttt{a} to a register tensor \texttt{b}. Let $layout\_a$ denote the tensor pointer distribution and $layout\_b$ denote the register distribution. The instruction-specific constraint can be expressed as $F_I(layout\_a, layout\_b) = 0$, where $I$ denotes the instruction implementing the \lstinline{copy} operation. This formulation allows deriving forward and backward inference rules: if $layout\_a$ is known, we can derive $layout\_b$, and vice versa, as shown in \circleddefaults{1} of Fig.~\ref{fig:hexcute_overview} (c). After building these constraints, Hexcute uses them to propagate layouts starting from performance-critical operations. For example, in Fig.~\ref{fig:hexcute_overview} (c), \circleddefaults{2} propagates the layouts from the output to the inputs in a \texttt{gemm} operation. When multiple valid instruction choices exist, Hexcute enumerates them via a search tree to generate candidate programs. 

\subsubsection*{Constraint-based Shared Memory Layout Synthesis}
After synthesizing thread-value layouts, Hexcute derives shared-memory layouts for each candidate program in the search tree using a constraint-based approach. For example, in the GEMM kernel in Fig.~\ref{fig:hexcute_overview} (b), Lines 12 and 14 write to and read from the same shared memory tensor \texttt{sc}. To synthesize the layout for \texttt{sc}, Hexcute builds alignment-aware constraints for the two \lstinline{copy} operations, as shown in \circleddefaults{3} of Fig.~\ref{fig:hexcute_overview} (c). Solving these constraints yields parameterized layouts with undetermined stride variables (e.g., $D_1,...,D_n$ and $D'_1,...,D'_n$ in \circleddefaults{3} of Fig.~\ref{fig:hexcute_overview} (c)). Since both layouts refer to the same shared memory tensor, they must be compatible. Hexcute checks the satisfiability of the resulting layouts and generates code when the constraints can be satisfied. If multiple valid candidates exist, Hexcute employs the analytical cost model described in Section~\ref{sec:cost_model} to select the one with the lowest latency, as illustrated in Fig.~\ref{fig:hexcute_overview} (c). Unlike other heuristic-based layout propagation systems, this approach systematically explores optimal layouts and instructions for GPU programs. To our knowledge, Hexcute is the first compiler framework to do so.
\section{Thread-Value Layout Synthesis}
\label{sec:task_mapping_inference}
\subsection{Thread-Value Layout Constraints}
\label{sec:tv_layout_constraint}
To synthesize thread-value layouts, we construct constraints that connect operations with the hardware instructions used to implement them. We start by illustrating the derivation of a \lstinline{copy} operation moving data from tensor \texttt{a} to tensor \texttt{b}.

\subsubsection*{\texttt{Copy} Operation Constraint} Let $f$ and $g$ denote the thread-value layouts of tensors \lstinline{a} and \lstinline{b}, respectively. These layouts are functions that map a thread ID $t$ and a value index $v$ to a logical coordinate in the tensor: $f:(t, v)\mapsto(m_T, n_T),\quad g:(t, v)\mapsto(m_T, n_T)$, where $m_T$ and $n_T$ represent the row and column coordinates within the tensors. Suppose a memory instruction $I$ (used to implement the \lstinline{copy} operation) is specified by its own input and output layouts, denoted by $p$ and $q$, respectively. These layouts map the same thread-value pair to coordinates in the instruction tile. For instance, we might have: $p:(t, v)\mapsto(m_I, n_I),\quad q:(t, v)\mapsto(m_I, n_I)$, where $m_I$ and $n_I$ denote the row and column coordinates within the instruction tile. 

By computing the inverse functions $p^{-1}$ and $q^{-1}$, we can convert instruction tile coordinates back to a thread-value pair. Composing these inverses with the tensor layouts gives us the composite mappings: $f\circ p^{-1}$ and $g\circ q^{-1}$, which map coordinates from the instruction tile to the corresponding positions in the input and output tensors, respectively. Since the \lstinline{copy} operation transfers an identical logical region from the input tensor to the output tensor, these composite mappings must be identical: \(f\circ p^{-1} = g\circ q^{-1}\). 
\begin{figure}[t]
\begin{subfigure}[b]{0.49\linewidth}
        \centering
        \includegraphics[width=0.9\textwidth]{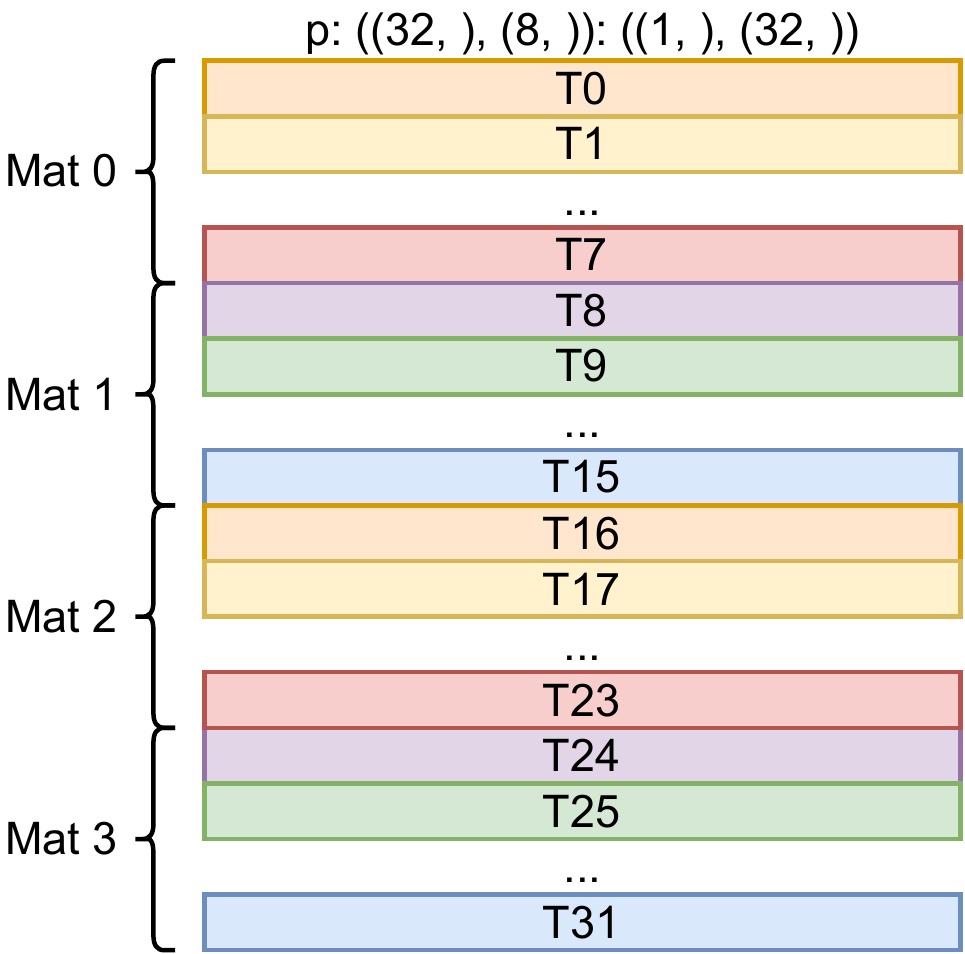}
        \caption{Layout \texttt{p}}
    \end{subfigure}
     \begin{subfigure}[b]{0.5\linewidth}
        \centering
        \includegraphics[width=0.9\textwidth]{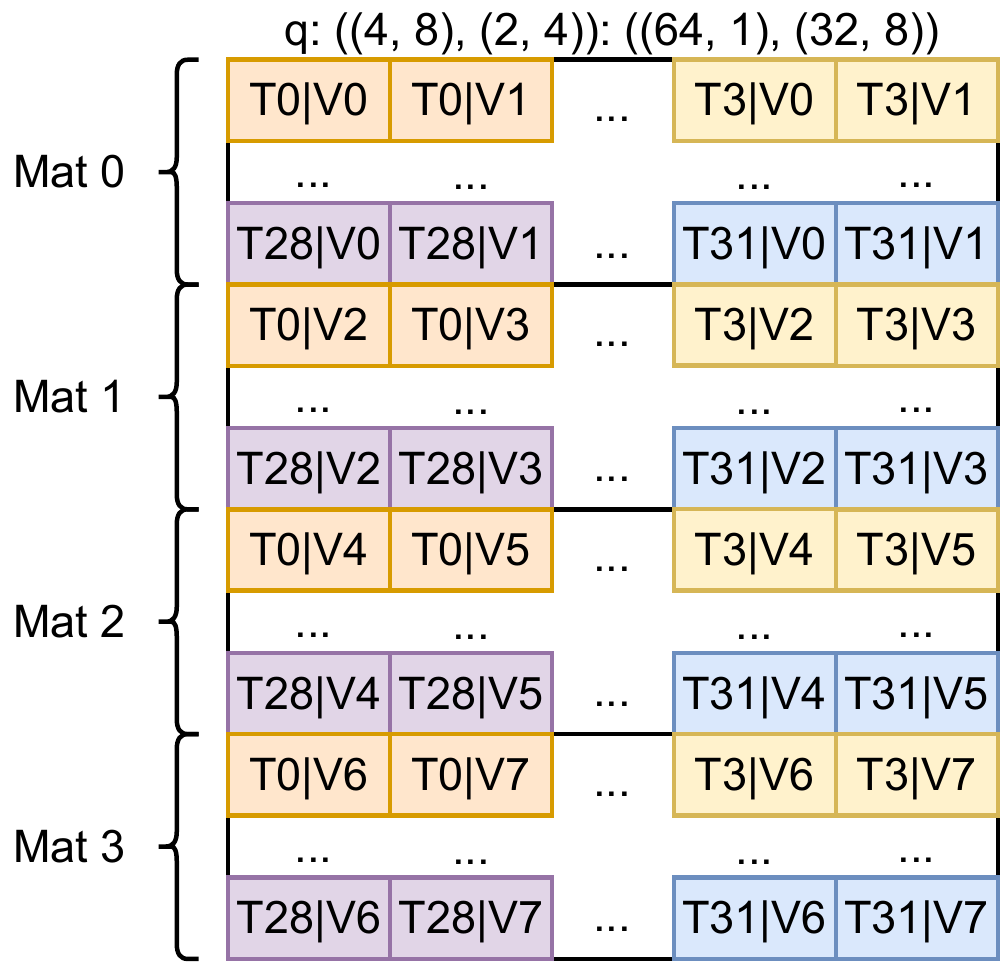}
        \caption{Layout \texttt{q}}
    \end{subfigure}
    \begin{subfigure}[t]{\linewidth}
        \centering
        \includegraphics[width=\textwidth]{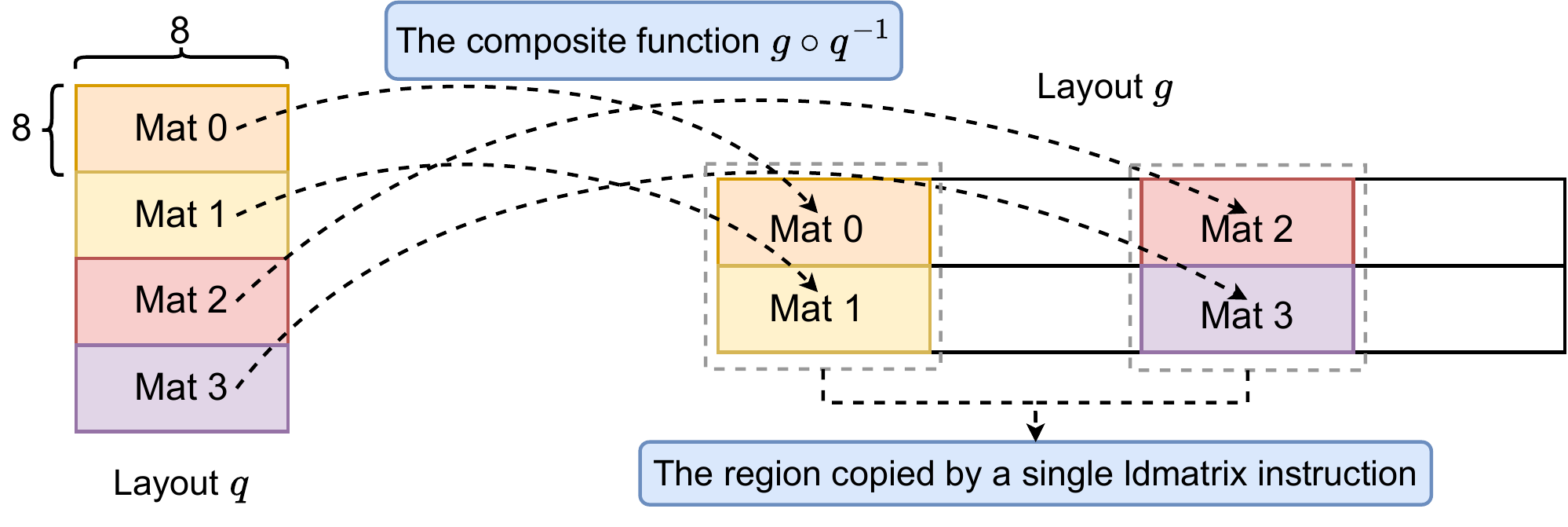}
        \caption{The composite function $g\circ q^{-1}.$}
    \end{subfigure}
\caption{The \texttt{ldmatrix} instruction is modeled by layout \texttt{p} and \texttt{q} in (a) and (b). (c) shows that the composite function $g\circ q^{-1}$ maps the four 8$\times$8 matrices loaded by the \texttt{ldmatrix} instruction to the corresponding positions in the copied tensor.}
\label{fig:mapping}
\end{figure}
\begin{figure}[t]
    \begin{subfigure}{0.50\linewidth}
        \centering
        \includegraphics[width=\textwidth]{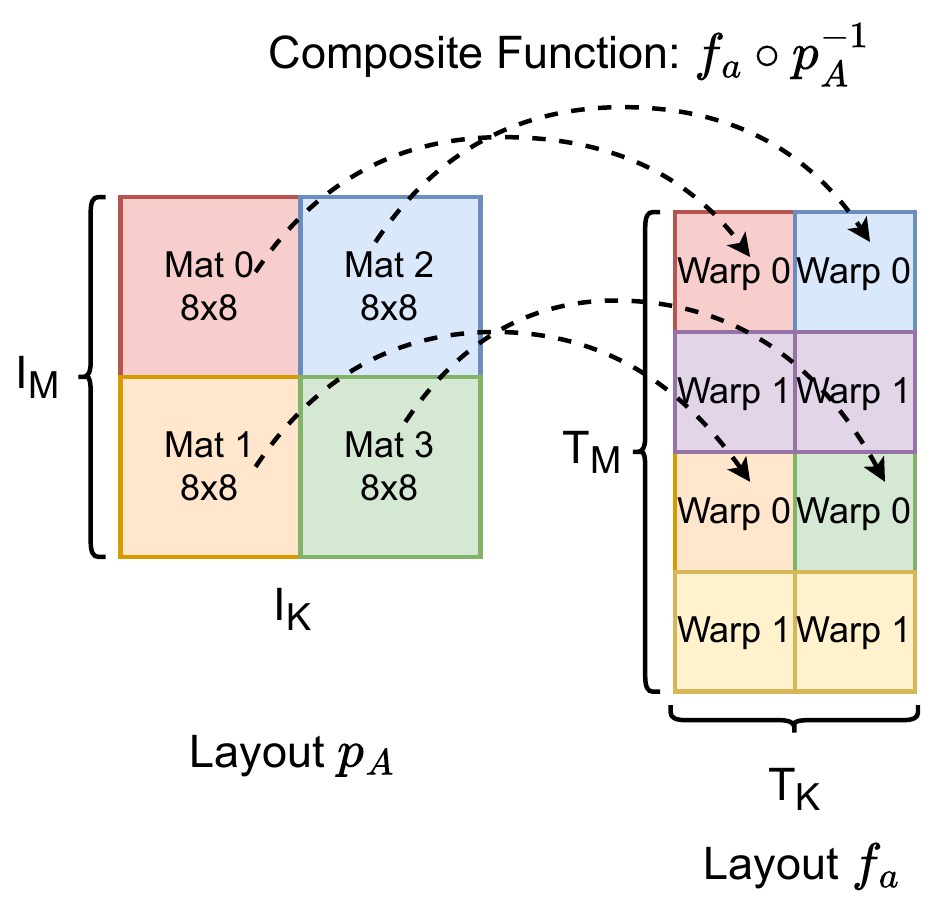}
        \caption{Composite function: \(f_a\circ p^{-1}_{A}\)}
    \end{subfigure}
    \begin{subfigure}{0.49\linewidth}
        \centering
        \includegraphics[width=\textwidth]{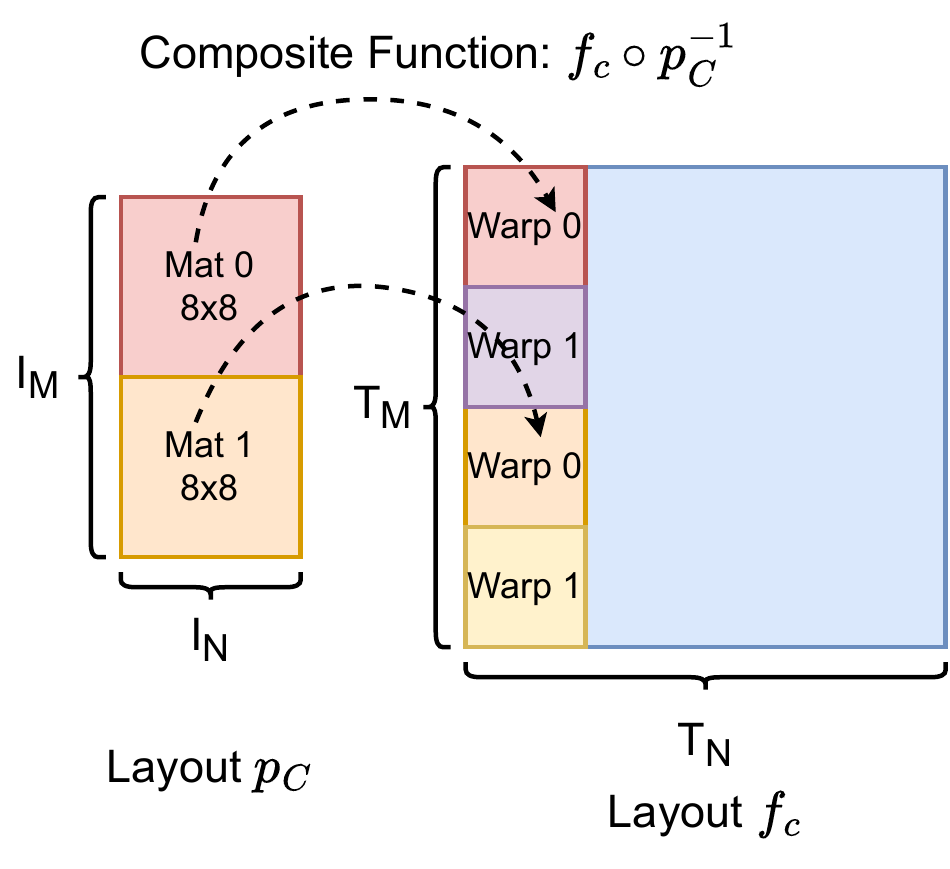}
        \caption{Composite function: \(f_c\circ p^{-1}_{C}\)}
    \end{subfigure}
  \caption{Composite mappings $f_a \circ p_A^{-1}$ and $f_c \circ p_C^{-1}$ that connect the
\texttt{gemm} operation and the \texttt{mma} instruction, showing how instruction fragments map into tensors
\texttt{a} and \texttt{c}.}
\vspace{-3mm}
\label{fig:mapping_mma}
\end{figure}

To make this concrete, consider a \texttt{copy} operation implemented with the \texttt{ldmatrix} instruction. Fig.~\ref{fig:mapping} (a) and (b) show the layouts of the \texttt{ldmatrix} instruction: 32 threads cooperatively load four 8$\times$8 matrices from shared memory, described by layout $p$, where each row is loaded by one thread. After loading, each thread holds an array of eight elements, and the register distribution is described by layout $q$. Fig.~\ref{fig:mapping} (c) then illustrates how the composite function $g \circ q^{-1}$ maps these four 8$\times$8 matrices from the instruction tile onto the corresponding regions of tensor $b$. The gray boxes highlight the region copied by a single \lstinline{ldmatrix} instruction, and the dashed arrows show how the matrices are mapped into $b$. A detailed construction of $g \circ q^{-1}$ is provided in Appendix~\ref{apx:composite_func}. To ensure consistency, the function $f \circ p^{-1}$ must define the same mapping.

\subsubsection*{\texttt{Gemm} Operation Constraint}
We apply the same principle to the \lstinline{gemm} operation. 
Let the tile-level tensors \lstinline{a}, \lstinline{b}, and \lstinline{c} have thread-value layouts 
$f_a$, $f_b$, and $f_c$. Suppose the Tensor Core instruction 
\lstinline{mma.sync.aligned.m16n8k16.row.col}~\cite{mma-ptx-inst} 
is used to implement the \texttt{gemm} operation and is specified by operand layouts 
$p_A$, $p_B$, and $p_C$. 
Fig.~\ref{fig:mapping_mma} (a) and (b) illustrate how the composite functions 
$f_a \circ p_A^{-1}$ and $f_c \circ p_C^{-1}$ map instruction-level tiles into the corresponding 
tile-level tensors.

Tensors \lstinline{a} and \lstinline{c} reside in different coordinate spaces: 
\lstinline{a} is indexed over an $M\times K$ grid, whereas \lstinline{c} uses an $M\times N$ grid. 
To relate their composite mappings, we introduce simple embedding and projection functions. 
For example, the embedding $\eta_M : m_I \mapsto (m_I, 0)$ maps the row coordinate of the 
instruction tile into the two-dimensional space of \texttt{C}, and the projection 
$\mu_M : (m_T, n_T) \mapsto m_T$ extracts the $M$-dimension coordinate from \lstinline{c}. 
Using these helper functions, we compare the restrictions of 
$f_c \circ p_C^{-1}$ and $f_a \circ p_A^{-1}$ along the $M$ dimension; both must define 
the same mapping from $I_M$ to $T_M$. 
Applying the same construction to the other tensor pairs and remaining dimensions yields the full set of constraints for the \texttt{gemm} operation, which we summarize in 
Fig.~\ref{fig:thread_value_layout_constraints} (b) in Appendix~\ref{apx:general_constraints}.

The observations above for the \texttt{copy} and \texttt{gemm} operations form the basis of 
the thread-value layout constraints. We extend and formalize these constraints for all tile-level 
operations in Fig.~\ref{fig:thread_value_layout_constraints}. 
In the next section, we present an algorithm that synthesizes these thread-value layouts 
in GPU programs.

\subsection{Thread-Value Layout Synthesis}
\label{sec:tv_layout_synthesis}
{
\begin{algorithm}[!t]
\caption{Thread-Value Layout Synthesis Algorithm}
\label{alg:task_mapping_inference}
\begin{algorithmic}[1] 
\small
    \Require A directed acyclic graph $G=(V, E)$ of tile-level operations. $V$ denotes the operators, and $E$ represents the tensors. 
    \Ensure Thread-value layouts ${L_1, L_2, \ldots, L_n}$ for all tensors in $E$  
    
    \State Partition the graph $G$ into connected components $\mathscr{S}={S_1, S_2, \ldots, S_n}$
    \For{$S_i$ in $\mathscr{S}$}
        \State $\mathscr{C}$ $\gets$ BuildConstraints($S_i$) \Comment{The set of the constraints in $S_i$}
        \State $\mathscr{L}$ $\gets \{\}$ \Comment{The set of synthesized layouts in $S_i$}
        \State $Rq$ $\gets \{\}$ \Comment{The constraints in $S_i$ that are ready to solve}
        \If{$S_i$ includes a \lstinline[basicstyle=\ttfamily\footnotesize]|gemm| operation}
            \For{\lstinline[basicstyle=\ttfamily\footnotesize]|gemm| operation in $S_i$}
                \State Select the fastest Tensor Core instruction $I$ on the target GPU.
                \State Tile matrix $C$ with instruction $I$ and instantiate layout $L_C$.
                \State Solve layouts $L_A$ and $L_B$ with constraints of \lstinline[basicstyle=\ttfamily\footnotesize]|gemm|.
                \State Update $L_A$, $L_B$, $L_C$ in $\mathscr{L}$.
            \EndFor
        \Else
            \State Pick an anchor \lstinline[basicstyle=\ttfamily\footnotesize]|copy| operation $Ac$.
            \State Instantiate layout $L$ by coalescing memory accesses.
            \State Update $L$ in $\mathscr{L}$.
        \EndIf
        \State UpdateReadyQueue($Rq$, $\mathscr{C}$, $\mathscr{L}$)
        \While{ $\mathscr{C}$ is not empty }
            \While {$Rq$ is not empty}
                \State $C$ $\gets$ Dequeue($Rq$)
                \State $L$ $\gets$ Solve($C$, $\mathscr{L}$)
                \State Update $L$ in $\mathscr{L}$.
                \State Remove $C$ from $\mathscr{C}$.
            \EndWhile
            \State UpdateReadyQueue($Rq$, $\mathscr{C}$, $\mathscr{L}$)
        \EndWhile
    \EndFor
    \State \Return
\end{algorithmic}
\end{algorithm}
}
Building on the thread-value layout constraints in Section~\ref{sec:tv_layout_constraint}, we design Algorithm~\ref{alg:task_mapping_inference} to synthesize these layouts. 
\subsubsection*{Graph Partition} 
The algorithm first constructs a directed acyclic graph (DAG) of tile-level operations and partitions it into connected subgraphs separated by shared memory reads and writes (\textbf{Line} 1). For each subgraph, the algorithm selects anchor operators and initializes the thread-value layouts for the inputs and outputs. It then builds thread-value layout constraints and propagates the layouts based on the constraints detailed in Fig.~\ref{fig:thread_value_layout_constraints}. In subgraphs containing \lstinline{gemm} operations, the \lstinline{gemm} operations are chosen as anchors due to their critical impact on kernel performance. If \lstinline{gemm} operations are absent, the algorithm selects the \lstinline{copy} operation that transfers the most data as the anchor. We choose this design because non-\lstinline{gemm} operations are typically memory-bound, which makes the \lstinline{copy} operation the most critical to kernel performance. Additionally, this is feasible because every connected component contains at least one \lstinline{copy} operation that reads or writes data; otherwise, the subgraph would be removed during dead code elimination.

\subsubsection*{Initialization} For a \lstinline{gemm} anchor, the algorithm selects the fastest Tensor Core instruction available on the target GPU and tiles the matrix \lstinline{C} with the instruction, which determines the layout for \lstinline{C} (\textbf{Lines} 8-9). It then solves the layouts for \lstinline{A} and \lstinline{B} using the constraints in Fig.~\ref{fig:thread_value_layout_constraints}~(b) (\textbf{Lines} 10-11). For a \lstinline{copy} anchor, the layout is constructed by coalescing memory accesses (\textbf{Line} 15). The process begins by sorting the memory layout dimensions by their strides. The vector size for the \lstinline{ld}/\lstinline{st} instruction is then determined by analyzing the divisibility of the strides. Finally, the thread-value layout is constructed such that consecutive threads access contiguous vectors in memory, thereby coalescing memory accesses.

\subsubsection*{Layout Solving} The algorithm maintains a set of remaining constraints, \(\mathscr{C}\), and a ready list of constraints, \(Rq\) for each subgraph (\textbf{Lines} 3-5). A constraint is added to \(Rq\) when only one thread-value layout variable remains unknown. At that point, we rewrite the constraint so that the unknown variable appears on the left-hand side of the equation (\textbf{Line} 22). For example, if the constraint for a \lstinline{copy} operation is ready and the layout \(g\) is known, we select an instruction \(I\) with input and output layouts \(p\) and \(q\), and rewrite the equation from Fig.~\ref{fig:thread_value_layout_constraints}~(a) as \(f = g \circ q^{-1} \circ p\). Using this equation, we derive the unknown layout \(f\) by applying the algebraic operations of layouts. As new layouts become known, we update the ready list accordingly. This process repeats until all constraints in \(\mathscr{C}\) are resolved.

\subsubsection*{Conflict Handling}
\begin{figure}[t]
    \begin{subfigure}[t]{\linewidth}
        \centering
        \includegraphics[width=0.8\textwidth]{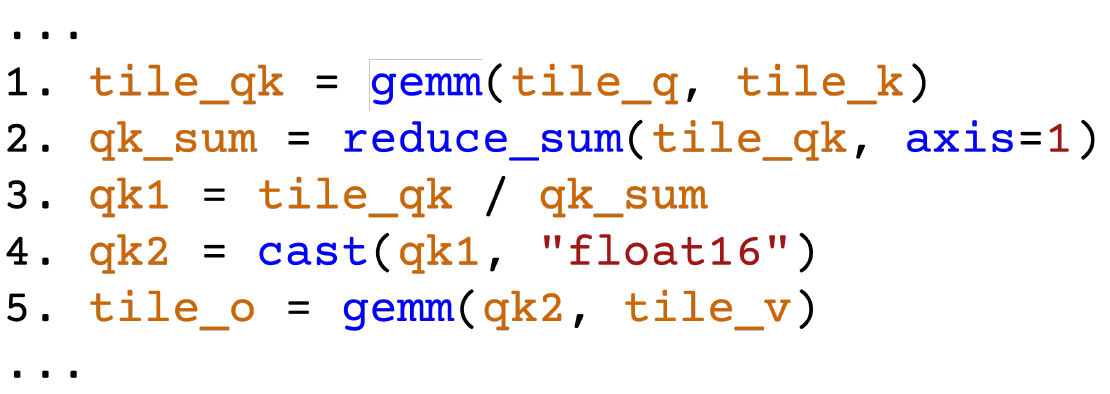}
    \end{subfigure}
\caption{Example: two \lstinline{gemm}s with a \lstinline{reduce_sum} in between.}
\vspace{-3mm}
\label{fig:flashattn}
\end{figure}
For kernels with multiple \lstinline{GEMM} operations (e.g., FlashAttention~\cite{dao2022flashattention}), the algorithm propagates thread-value layouts from all \lstinline{gemm} operations and inserts \lstinline{rearrange} operators to resolve layout conflicts. Consider an example with two \lstinline{gemm} operations and a reduction in between (Fig.~\ref{fig:flashattn}). The compiler first infers the thread-value layouts for the \lstinline{gemm} operations at \textbf{Line}~1 and \textbf{Line}~3. The constraints for the \lstinline{reduce_sum} and \lstinline{cast} operations at \textbf{Line}~2 and \textbf{Line}~4 then become solvable. Once these constraints are resolved, and the layouts of \lstinline{qk_sum}, \lstinline{tile_qk}, and \lstinline{qk1} are determined, the compiler checks for consistency to ensure that register tensors are distributed across threads in the same way. If the condition is not satisfied, the compiler inserts \lstinline{rearrange} operators to convert layouts, which requires inter-thread data exchange and may introduce extra overhead. To avoid this, Hexcute allows users to annotate a consistent thread arrangement for both \lstinline{gemm} operations, eliminating the need for layout conversion.
\subsubsection*{Expanding Search Tree}
\label{sec:expand_search_tree}
In cases where multiple valid instructions exist for a single \lstinline{copy} operation, propagating the layout for the \lstinline{copy} may lead to multiple candidate solutions. We extend Algorithm~\ref{alg:task_mapping_inference} with depth-first search (DFS) and backtracking, transforming the search space into a tree where each leaf node represents a candidate program. We build an analytical latency model, detailed in Section~\ref{sec:cost_model}, which uses the instruction microbenchmark~\cite{microbench} to estimate the latency of each leaf program. For each candidate, we synthesize shared memory layouts as described in Section~\ref{sec:shared_layout_synthesis}, discard invalid ones, and select the valid program with the lowest latency.
\begin{figure}[t]
    \begin{subfigure}[t]{\linewidth}
        \centering
        \includegraphics[width=\textwidth]{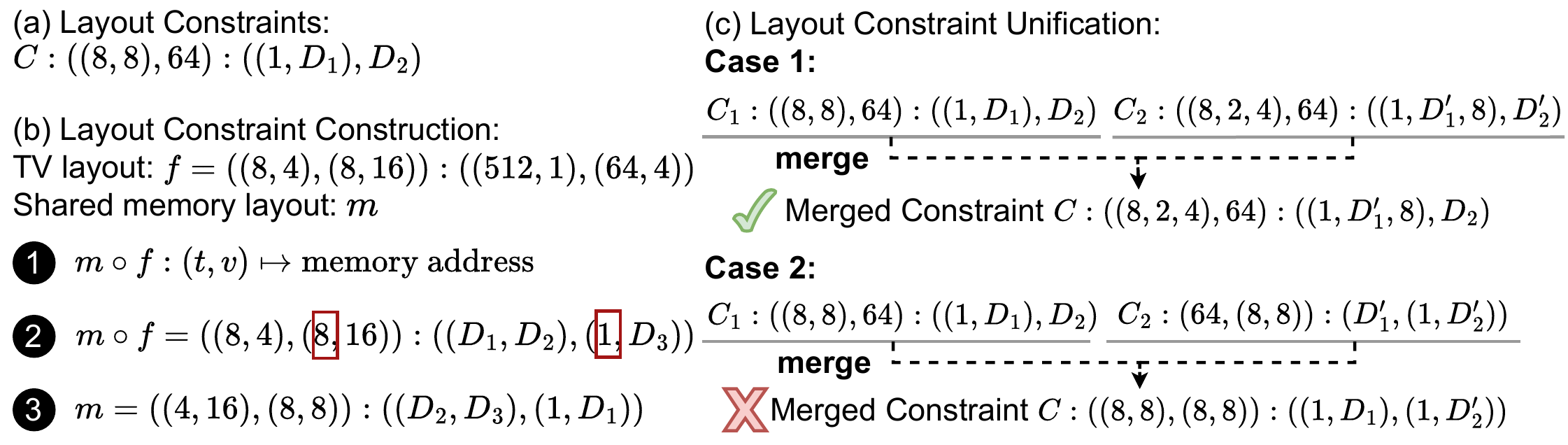}
    \end{subfigure}
\caption{Shared memory layout synthesis. (a) Shared memory layout constraints. (b) Layout constraint construction. (c) Layout constraint unification. In Case 1, the unification is successful, whereas in Case 2, it fails due to stride conflicts.}
\vspace{-3mm}
\label{fig:swizzle_functor}
\end{figure}
\section{Shared Memory Layout Synthesis}
\label{sec:shared_layout_synthesis}
For shared-memory layout synthesis, we represent the layout \(M\) as the composition
\(M = S \circ m\)~\cite{CuTe-doc}, where \(m : \mathbb{Z} \rightarrow \mathbb{Z}\) is a memory-layout
function subject to instruction alignment constraints, and \(S : \mathbb{Z} \rightarrow \mathbb{Z}\)
is a swizzle function that permutes addresses to avoid bank conflicts. Because \(m\) determines the
base memory layout and \(S\) only permutes it, Hexcute first builds layout constraints for \(m\),
derives a satisfiable layout via a layout operation called \emph{unification}, and then selects a
swizzle function \(S\) to eliminate bank conflicts.

\subsubsection*{Layout Constraint}
A layout constraint is a layout in which some modes are known while others remain undetermined. For example, consider a shared memory tensor of shape $(64,64)$. A layout constraint for this tensor could be written as $((8,8),64):((1,D_1),D_2)$, as shown in Fig.~\ref{fig:swizzle_functor} (a). Here, the modes $8:D_1$ and $64:D_2$ denote layouts of sizes 8 and 64, respectively, where $D_1$ and $D_2$ are tuples representing the strides.

\subsubsection*{Layout Constraint Construction} 
We construct layout constraints with function composition and inversion. As a concrete example, consider a \lstinline{copy} operation performed by 32 threads that loads a \lstinline{float16} tile of size 64$\times$64 from shared memory to registers. Assume a 16-byte aligned load instruction is selected, and the thread-value layout $f$ in Fig.~\ref{fig:swizzle_functor} (b), which describes how data is distributed across threads, is inferred during thread-value layout synthesis. In this case, each thread holds an 8$\times$16 local array.

The composite function $m\circ f$ defines the mapping from each thread to the memory addresses it copies, as shown in \circleddefaults{1}. Since alignment requires addresses to be multiples of 16 bytes (8 \lstinline{float16} elements), we obtain the equation in \circleddefaults{2}, where the right-hand side is a parameterized layout. The mode $8:1$ highlighted in red boxes ensures that each thread accesses 8 contiguous elements. By multiplying \circleddefaults{2} by the inverse $f^{-1}$, we derive the layout constraint shown in \circleddefaults{3} of Fig.~\ref{fig:swizzle_functor} (b).

\subsubsection*{Layout Constraint Unification}
The unification of multiple layout constraints aims to find a layout constraint that incorporates all determined modes from the inputs without conflicts. Fig.~\ref{fig:swizzle_functor} (c) illustrates two cases. \textbf{Case 1} shows the unification of two layout constraints $C_1$ and $C_2$. The resulting layout is \(C = ((8,2,4),64):((1,D_1',8),D_2)\), which contains all determined modes in both $C_1$ and $C_2$. During unification, the mode $8:D_1$ from $C_1$ is split into two modes, yielding $(2,4):(D_1',8)$, a refinement of the original mode. In \textbf{Case 2}, no valid layout can be found. Merging $C_1$ and $C_2$ produces \(((8,8),(8,8)):((1,D_1),(1,D_2'))\), which is invalid because it includes two modes of shape 8 and stride 1. This causes different data elements to map to the same memory address, violating the requirement that a shared memory layout must represent a contiguous memory space.


\subsubsection*{Integration}
Based on the analysis above, the compiler traverses the program and applies the approach to each \lstinline{copy} operation involving shared memory tensors. It merges layout constraints from all \lstinline{copy} operations to construct a unified layout constraint that satisfies all alignment requirements. If the constraints are satisfiable, the compiler materializes the undetermined strides and synthesizes a concrete layout $m$.

If the layout constraints cannot be unified, the compiler falls back to scalar instructions. As described in Section~\ref{sec:expand_search_tree}, the DFS-based approach ensures that one valid fallback program exists, in which all \lstinline{copy} operations use scalar instructions.

For \lstinline{copy} operations that use Tensor Memory Accelerator (\lstinline{TMA}) instructions, the thread-value layout approach is not applicable because \lstinline{TMA} is issued by a single thread. We therefore first merge and resolve all layout constraints from non-\lstinline{TMA} operations. We then check whether the resulting shapes and strides satisfy the requirements of \lstinline{TMA}. If so, we instantiate the unknown strides with a heuristic that maximizes the tensor region copied by each \lstinline{TMA} instruction while minimizing the number of issued instructions. If the constraints for \lstinline{TMA} cannot be satisfied, the compiler falls back to 
non-\lstinline{TMA} instructions. 

\subsubsection*{Eliminating Bank Conflicts} 
To eliminate bank conflicts in shared memory accesses, we leverage the generic swizzle function defined in CuTe~\cite{CuTe-doc}. We build a compiler pass that traverses the program and identifies all \lstinline{copy} operations that access the same shared memory buffer. The pass then enumerates the applicable swizzle functions and selects the one that most effectively reduces bank conflicts. This is done by analyzing the memory addresses accessed by the threads within a warp and choosing the swizzle function that minimizes overlapping accesses to the same bank.

\section{Analytical Cost Model}
\label{sec:cost_model}
When multiple memory instructions are available for a copy operation, Hexcute expands layout propagation into a search tree, where each leaf node corresponds to a candidate program with a different combination of instructions. To choose the best candidate without exhaustively compiling and running all of them, Hexcute employs an analytical cost model inspired by the instruction scheduling algorithm~\cite{instsche}.

We model a candidate program as a sequence of tile-level operations $O_1, O_2, \ldots, O_n$. Modern GPUs allow memory operations to run in flight and overlap with computation, so the cost model tracks both the issue cycles and the completion cycles of each operation. Let $L_k$ denote the cumulative cycles required to issue the first $k$ operations, and let $S_k$ be the set of operations still in flight when issuing $O_k$. The transition from $L_k$ to $L_{k+1}$ depends on the issue latency of $O_{k+1}$ and the additional delay caused by its read-after-write (RAW) dependencies. Suppose an instruction $I$ is chosen to implement $O_{k+1}$. The issue latency of $O_{k+1}$ is computed by multiplying the number of invocations of $I$ by the issue latency of a single instance of $I$. The number of invocations is determined by the operand layouts of $O_{k+1}$. We use microbenchmarks~\cite{microbench} to obtain the instruction $I$'s issue and completion cycles, which together determine both the total issue latency and the remaining completion latency of $O_{k+1}$. These values update the set $S_{k+1}$ to reflect operations still in flight.

We repeat this process for all operations in the sequence and add the remaining latencies of unfinished operations to obtain the estimated execution latency of the candidate program.
\section{Evaluation}
We implemented Hexcute in Python with about 10K lines of code. We extend the Hidet Script~\cite{hidetscript} with tile-level primitives and lower these primitives into Hidet~\cite{Hidet} IR by analyzing the synthesized layouts. Hexcute also supports warp specialization~\cite{warpspecialization}, a technique for
overlapping memory accesses with computation, through NVDSL-style~\cite{nvdsl} Python context managers such as \texttt{warp\_groups\_producer} and \texttt{warp\_groups\_consumer}. These constructs serve as syntactic sugar for writing warp-specialized kernels, and our evaluation shows that layout synthesis is fully compatible with this optimization. We then evaluate Hexcute on NVIDIA A100 and H100 GPUs via extensive benchmarks. In these benchmarks, we set the GPU frequency to \SI{1.41}{GHz} to ensure stable results.

\begin{table*}[t!]
\caption{Programmability and performance comparison of Hexcute, CUDA, and Triton on various GPU operations. Speedups (higher is better) are the geometric means over manually optimized CUDA baselines, evaluated across multiple input shapes. CUDA kernels are implemented using different frameworks. Hexcute reduces the amount of code by 1.27$\times$-7.94$\times$ compared to CUDA by automating tensor layout synthesis. While Hexcute kernels are sometimes more verbose than Triton, Hexcute supports explicitly expressing fine-grained optimizations, achieving 1.13$\times$-2.34$\times$ speedups over Triton.}
\begin{center}
\begin{tabular}{cccccccccc}
\hline
 \multirow{2}{*}{\textbf{GPU}} & \multirow{2}{*}{\textbf{Operator}} & \multirowcell{2}{\textbf{Evaluated}\\\textbf{Shapes}} & \multicolumn{3}{c}{\textbf{Lines of Code}}&\multirow{2}{*}{\textbf{Performance Baseline}}&\multicolumn{2}{c}{\textbf{Normalized Performance}} \\
 & & & \textbf{CUDA}& \textbf{Triton}& \textbf{Hexcute}& &\textbf{Triton}& \textbf{Hexcute}
\\
\hline
\multirowcell{3}{NVIDIA\\ A100 GPU} & FP16 GEMM&40& 703$^\mathrm{b}$ & 71 & 98 & cuBLAS & 0.75$\times$ & \textbf{1.00$\times$} &  \\
& Fused MHA$^\mathrm{a}$ Forward&20& 577 & 114 & 172 & FlashAttention2 & 0.93$\times$ & \textbf{1.05}$\times$ \\
& Fused MHA Decoding&24& 322 & 224 & 253 & FlashInfer & 0.50$\times$ & \textbf{1.02}$\times$ \\
\hline
\multirowcell{3}{NVIDIA \\ H100 GPU} & Blockwise Scaled FP8 GEMM & 35 & 900 & 87 & 180 & CUTLASS & 0.50$\times$ & \textbf{1.17}$\times$\\
& Warp Specialized FP16 GEMM & 40 & 1024$^\mathrm{b}$ & 71 & 169 & cuBLAS & 0.64$\times$ & \textbf{1.25}$\times$ \\
& Fused MHA Forward & 20 & 1684 & 114 & 212 & FlashAttention3 & 0.56$\times$ & \textbf{1.27$\times$} \\
\hline
\multicolumn{10}{l}{$^{\mathrm{a}}$ Multi-head Attention}\\
\multicolumn{10}{l}{$^{\mathrm{b}}$ For GEMM, speedups are reported against cuBLAS, while LoC comparisons use CUTLASS because cuBLAS is closed‑source.}
\end{tabular}
\vspace{-3mm}
\label{general-operators}
\end{center}
\end{table*}

\subsection{Kernel Programmability and Performance}
\label{evl:programmability}
We compare the programmability and performance of Hexcute with best-effort CUDA kernels and Triton on a wide range of deep learning operators, using A100 and H100 GPUs. Table~\ref{general-operators} reports the lines of code and the performance of these operators. Performance is normalized against CUDA baselines and summarized as geometric means across various input shapes. The CUDA baselines include cuBLAS~\cite{cublas} (v12.4), CUTLASS (v4.1.0), FlashAttention~\cite{dao2022flashattention} (v2.8.2), and FlashInfer~\cite{flashinfer} (v0.2.10), while Triton is evaluated with v3.3.0. 

We summarize our findings as follows. Hexcute reduces the amount of code by 1.27$\times$-7.94$\times$ compared to CUTLASS and CUDA kernels by automating layout synthesis and removing manual specifications. For example, FlashAttention3~\cite{flashattention3} implements the attention layer using CUTLASS in about 1600 lines of code (LoC), with roughly 300 LoC dedicated to the template specifications for layouts and the rest for the kernel body, whereas Hexcute implements the same kernel in only 212 LoC. In Hexcute, users still manually express the algorithm logic in the kernel body, but the compiler automatically infers tensor layouts that are manually specified in FlashAttention3.

Hexcute generates shape-specific kernels and supports dynamic shapes via symbolic dimensions. Across a range of shapes, Hexcute matches or surpasses manually optimized CUDA libraries, demonstrating the robustness and generality of our layout synthesis algorithm. In the warp-specialized GEMM benchmark, Hexcute outperforms cuBLAS, the strongest GEMM baseline, by exhaustively tuning kernel hyperparameters, such as tile sizes. We find that non-power-of-two tiles are critical, as 28 of 40 shapes select them, and disabling these choices degrades performance by up to 13.4\% (from \SI{500}{TFLOPS} to \SI{433}{TFLOPS}) for some shapes on H100.

Compared to Triton, the LoC in Hexcute kernels ranges from 1.12$\times$ to 2.38$\times$ that of Triton kernels. Hexcute kernels are sometimes more verbose because they explicitly express critical optimizations such as optimal dataflow, warp specialization, and fine-grained software pipelining. Our layout synthesis algorithm is fully compatible with these techniques. In contrast, Triton implements these optimizations in the compiler as hard-coded heuristics, limiting its programming flexibility. By fine-tuning these optimization strategies, Hexcute achieves a 1.95$\times$-2.34$\times$ speedup over Triton in benchmarks on the H100 GPU.

\subsection{Complex Deep Learning Layers}
To demonstrate that Hexcute's layout synthesis algorithm improves performance via systematic instruction selection, we evaluate two complex layers used in deep learning workloads: mixed-type MoE~\cite{MoE} and Mamba~\cite{mamba} scan. We compare Hexcute against Triton and hand-tuned kernel libraries. Specifically, for the mixed-type MoE, we compare against two Marlin-style kernels
from two versions of vLLM (v0.8.2 and v0.9.2), and for the Mamba scan, we compare against the Mamba library~\cite{mambalib}.

\subsubsection*{Mixed-type MoE}
\label{evl:mixed_type_moe}
On an H100 GPU, Hexcute consistently outperforms Triton and the older Marlin implementation ("Marlin-old," v0.8.2) across various token numbers, achieving an average speedup of 6.46$\times$ over Triton and 28.42$\times$ over Marlin-old, as shown in Fig.~\ref{fig:moe}. Compared to the latest Marlin version ("Marlin-new," v0.9.2), Hexcute achieves comparable performance ranging from 0.89$\times$ to 1.01$\times$. Hexcute outperforms Triton because it explicitly expresses the optimal dataflow in Fig.~\ref{fig:dataflow_mixed_gemm} (b), avoiding the suboptimal dataflows caused by Triton’s heuristics. Furthermore, Hexcute’s layout inference algorithm synthesizes the optimal tensor layouts in Fig.~\ref{fig:int4_packed_layout}, leading to better instruction selection and improved performance. Table~\ref{tab:insts_moe} highlights that Hexcute extensively utilizes vectorized instructions to maximize memory bandwidth, while Triton relies on scalar instructions in some cases, limiting its performance. 

\begin{figure}[t!]
    \begin{subfigure}{\linewidth}
        \centering
        \includegraphics[width=\textwidth]{Figures/MoE_mixed_type.pdf}
    \end{subfigure}
  \caption{Latency comparison for an MoE layer with 256 experts on the H100 GPU across Marlin-old, Triton, Marlin-new, and Hexcute. Hexcute achieves an average speedup of 6.46$\times$ over Triton and 28.42$\times$ over Marlin-old, while reaching approximately 96\% of Marlin-new's performance on average.}
  \vspace{-2mm}
\label{fig:moe}
\end{figure}
\begin{table}[t!]
  \centering
  \caption{Bytes per instruction for the MoE kernels with the same tile size in Hexcute and Triton. Larger numbers indicate wider, more vectorized instructions. Hexcute consistently generates wider instructions compared to Triton, resulting in improved memory bandwidth utilization.}
    \begin{tabular}{ cccccc }
     \hline
     Tensor & \multicolumn{3}{c}{Triton (bytes)} & \multicolumn{2}{c}{Hexcute (bytes)} \\
     \hline
     \texttt{A} & 16$^{\mathrm{G2S}}$ & 8$^{\mathrm{S2R}}$ & & 16$^{\mathrm{G2S}}$ & 8$^{\mathrm{S2R}}$ \\
     \texttt{B} & 1$^{\mathrm{G2R}}$ & 2$^{\mathrm{R2S}}$ & 8$^{\mathrm{S2R}}$ & 16$^{\mathrm{G2S}}$ & 8$^{\mathrm{S2R}}$\\
     \texttt{scale} & 16$^{\mathrm{G2S}}$ & 2$^{\mathrm{S2R}}$ & & 16$^{\mathrm{G2S}}$ & 8$^{\mathrm{S2R}}$  \\
     \texttt{zp}$^{\mathrm{a}}$ & 1$^{\mathrm{G2R}}$ & & & 16$^{\mathrm{G2S}}$ & 8$^{\mathrm{S2R}}$ \\
     \hline
     \multicolumn{4}{l}{$^{\mathrm{G2S}}$ Global memory to shared memory copy.} \\
     \multicolumn{4}{l}{$^{\mathrm{G2R}}$ Global memory to register file copy.} \\
     \multicolumn{4}{l}{$^{\mathrm{S2R}}$ Shared memory to register file copy.} \\
     \multicolumn{4}{l}{$^{\mathrm{R2S}}$ Register file to shared memory copy.} \\
     \multicolumn{4}{l}{$^{\mathrm{a}}$ Zero-point tensor.} \\
    \end{tabular}
\label{tab:insts_moe}
\end{table}
\begin{table}[t!]
  \centering
  \caption{Bytes per instruction generated by Hexcute and used in the handwritten Mamba library for the selective scan kernel. Larger numbers indicate wider, more vectorized instructions. Hexcute selects wider instructions, enabling higher memory bandwidth utilization.}
    \begin{tabular}{ W{c}{1cm} W{c}{1.5cm} W{c}{1.5cm} W{c}{1.5cm} }
     \hline
      \multirow{2}{*}{Tensor} & Mamba (bytes) & \multicolumn{2}{c}{Hexcute (bytes)} \\
      & G2R$^{\mathrm{a}}$ & G2S$^{\mathrm{b}}$ & S2R$^{\mathrm{c}}$ \\
     \hline
     \texttt{u} & 2 & 16 & 16 \\
     \texttt{$\Delta$} & 2 & 16 & 16 \\
     \texttt{A} & 4 & 4 & 8 \\
     \texttt{B} & 2 & 16 & 16 \\
     \texttt{C} & 2 & 16 & 16 \\
     \texttt{Z} & 2 & 16 & 16 \\
     \hline
     \multicolumn{4}{l}{$^{\mathrm{a}}$ Global memory to register file copy.} \\
     \multicolumn{4}{l}{$^{\mathrm{b}}$ Global memory to shared memory copy.} \\
     \multicolumn{4}{l}{$^{\mathrm{c}}$ Shared memory to register file copy.} \\
    \end{tabular}
\vspace{-3mm}
\label{tab:insts_in_kernel}
\end{table}
Hexcute significantly outperforms Marlin-old due to the high kernel-launch overhead of the Marlin-old implementation, which launches a separate GEMM kernel for each expert. The Marlin-old kernel is manually adapted from Marlin's mixed-type GEMM kernels and contains 1411 LoC. While Hexcute delivers moderate performance improvements compared to Marlin-new, Marlin-new requires approximately 1889 LoC and took around two months to implement before being released in vLLM v0.8.5. In contrast, Hexcute’s mixed-type MoE kernel comprises only 342 LoC and was developed within one week of the DeepSeek-R1-AWQ model's release.

\subsubsection*{Mamba Scan}
\label{evl:mamba_scan}
The selective scan operator is central to selective state-space models (SSMs)~\cite{mamba}. We implement this kernel in Hexcute and compare it against the hand-optimized Mamba library on the H100 GPU. Across 20 different input sizes, Hexcute achieves an average speedup of 4.17$\times$ over the Mamba library. The kernel scans over the sequence dimension and performs arithmetic and reduction operations on six tensors: $u$, $\Delta$, $A$, $B$, $C$, and $Z$. Efficiently loading these tensors across GPU memory hierarchies and parallelizing computations to GPU threads poses significant challenges. Hexcute outperforms Mamba because its layout synthesis and instruction selection enable wide, vectorized memory instructions. Table~\ref{tab:insts_in_kernel} compares the load and store instructions generated by Hexcute with those used in the Mamba library. The Mamba library relies on \lstinline{cub::BlockLoad}~\cite{cub} for coalesced global memory accesses; however, it cannot select optimal instructions and instead leverages scalar loads and stores. In contrast, Hexcute generates vectorized load/store instructions that fully exploit the GPU's memory bandwidth. 

Moreover, software pipelining improves this layer's performance by up to 16\%. Hexcute makes this tuning easy by letting users specify the high-level kernel skeleton, while the compiler automatically determines layouts and selects instructions.
\begin{figure}[t!]
    \begin{subfigure}{\linewidth}
        \centering
        \includegraphics[width=\textwidth]{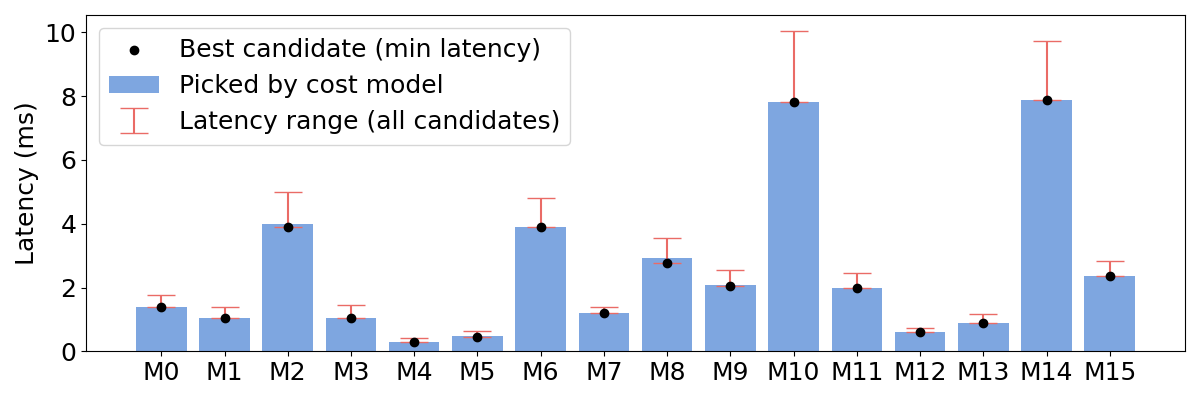}
    \end{subfigure}
  \caption{Accuracy of the analytical cost model across 16 GEMM shapes. The model consistently selects the best candidates within 1.01$\times$ of the true optimal latency.}
\label{fig:cost_model}
\end{figure}
\subsection{Compilation Time}
\label{evl:comp_time}
The compilation time of Hexcute is comparable to that of Triton. For example, compiling a GEMM layer with 102 kernel candidates takes \SI{48.39}{seconds} in Hexcute on an H100 GPU with 20 CPU cores, compared to \SI{57.10}{seconds} in Triton. Although each kernel candidate yields multiple programs with different layouts and instruction choices, Hexcute uses an analytical cost model to estimate their latencies and selects the fastest one. As shown in Fig.~\ref{fig:cost_model}, the analytical cost model selects candidates within 1.01$\times$ of the true optimal latency. 
\begin{figure}[t!]
    \begin{subfigure}{\linewidth}
        \centering
        \includegraphics[width=\textwidth]{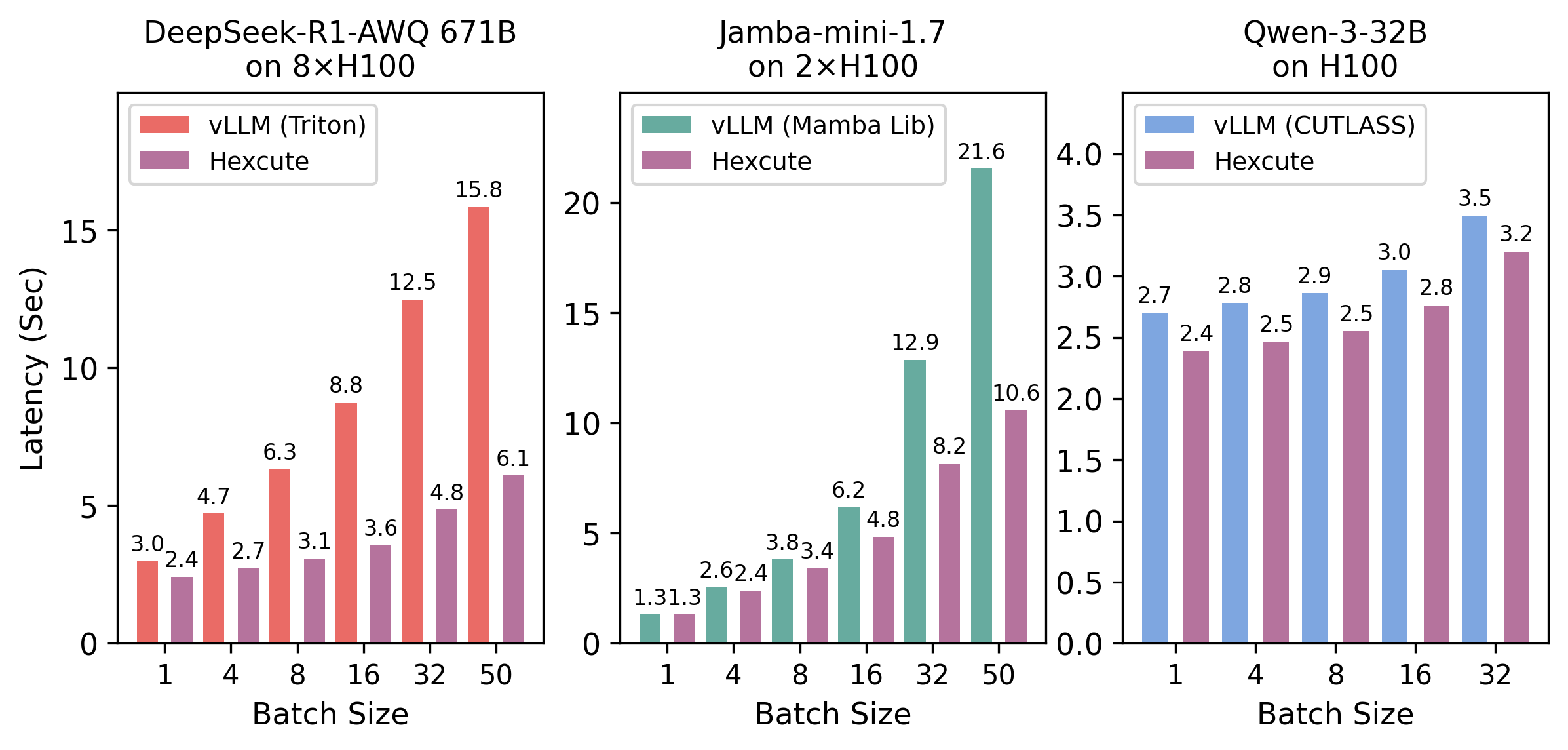}
    \end{subfigure}
  \caption{End-to-end latency of vLLM with Hexcute-integrated kernels on DeepSeek-R1-AWQ, Jamba-mini-1.7, and Qwen-3-32B. Hexcute achieves up to 2.60$\times$, 2.04$\times$, and 1.13$\times$ speedups on the three workloads.}
\vspace{-3mm}
\label{fig:e2e}
\end{figure}
\subsection{End-to-end Evaluation}
We integrate the mixed-type MoE, Mamba scan, and blockwise scaled FP8 GEMM kernels into vLLM and evaluate three end-to-end workloads: DeepSeek-R1-AWQ, Jamba-mini-1.7, and Qwen-3-32B. The latency for generating 100 output tokens is reported in Fig.~\ref{fig:e2e} across multiple H100 GPUs. On DeepSeek-R1-AWQ, Hexcute achieves an average speedup of 2.04$\times$ (up to 2.60$\times$) over the original vLLM Triton implementation. On the Mamba-based model, Hexcute achieves an average speedup of 1.30$\times$ (up to 2.04$\times$). For Qwen-3-32B, Hexcute outperforms the CUTLASS-backed vLLM implementation, achieving up to 1.13$\times$ lower latency.

\subsection{Ablation Study}
\label{evl:ablation}
We conduct an ablation study on an H100 GPU using the same MoE layer evaluated in Fig.~\ref{fig:moe}. Reproducing Triton's dataflow or enforcing Triton's shared memory layout in the Hexcute kernel results in average performance degradations of 28.5\% and 37.5\%, respectively. This demonstrates that both the efficient dataflow and the layout synthesis are critical for achieving peak performance. Fig.~\ref{fig:ablation} shows the results across varying token numbers. Notably, even when Hexcute reproduces Triton’s dataflow, it still outperforms Triton’s MoE kernel, underscoring the effectiveness of Hexcute’s layout synthesis algorithm. In this dataflow, Hexcute moves weights across memory hierarchies with 16-byte instructions, which are wider and more vectorized than those used in the Triton kernel. 
\begin{figure}[t!]
    \begin{subfigure}{\linewidth}
        \centering
        \includegraphics[width=\textwidth]{Figures/ablation_study_4.pdf}
    \end{subfigure}
  \caption{Ablation study on the MoE layer in Fig.~\ref{fig:moe}, showing average performance degradations of 28.5\% and 37.5\% when reproducing Triton's dataflow or tensor layouts in Hexcute.}
\vspace{-3mm}
\label{fig:ablation}
\end{figure}
\section{Related Work}
Hexcute is a compiler framework that automates layout synthesis in GPU programs. It models layouts as functions and employs a type-inference-based algorithm to generate complex layouts. Prior work either requires manual layout annotations, relies on layout systems that lack generality, or depends heavily on heuristics. To our knowledge, Hexcute is the first compiler framework to systematically explore layouts and instructions on modern GPUs. We summarize related work below:

\subsubsection*{Loop-Transformation-Based Compilers}
Most existing deep learning compilers, such as Halide~\cite{Halide}, TVM~\cite{Chen2018TVMAA}, FlexTensor~\cite{FlexTensor}, TensorIR~\cite{feng2022tensorir}, SparsIR~\cite{sparsetir}, Ansor~\cite{Ansor}, and AMOS~\cite{AMOS}, rely on loop transformation primitives to optimize tensor computations. Ladder~\cite{wang2024ladder}, a recent TVM-based~\cite{Chen2018TVMAA} system, extends this approach to low-precision computations. However, loop transformation primitives still require manual layout annotations and cannot automatically infer the sophisticated layouts needed for modern GPUs. In contrast, Hexcute automatically generates layout specifications in GPU programs.

\subsubsection*{Programming Frameworks with Manual Layout Specification}
Programming frameworks with explicit layout specifications generally offer greater expressiveness. Graphene~\cite{graphene} introduces a Layout abstraction to express GPU optimizations. Hidet~\cite{Hidet} employs task-mapping constructs to represent tensor layouts. ThunderKittens~\cite{thunderkittens} is implemented as a C++ template library adopting a tile-based programming model, but it requires programmers to explicitly specify the layouts for tiles. CuTe~\cite{CuTe-doc,Thakkar_CUTLASS_2023}, another C++ template library, utilizes the Layout concept to represent tensor layouts and provides algebraic operations for constructing layouts. However, all these frameworks require manual layout specification, which is error-prone and requires intimate hardware knowledge. In contrast, Hexcute automates layout specifications and thus reduces programming effort.

\subsubsection*{Programming Languages with Automatic Layout Synthesis}
Triton~\cite{Triton} is a tile-based programming language that simplifies GPU programming effort. Initially, Triton employs a case-by-case approach to modeling tensor layouts, which lacks extensibility for emerging deep learning operators. LinearLayout~\cite{linearlayouts} is a recent layout system proposed to address Triton’s limitations. However, LinearLayout does not support non-power-of-two tiles, which are essential for modern GPUs like Hopper and Blackwell that natively support non-power-of-two shaped instructions. Additionally, LinearLayout~\cite{linearlayouts} does not treat tensor layouts as general functions, limiting its ability to synthesize optimal layouts for mixed-type operations. In contrast, Hexcute leverages CuTe’s layout concept~\cite{CuTe-doc}, which inherently supports non-power-of-two tiles. It treats layouts as functions and synthesizes layouts with a type-inference-based algorithm, which systematically explores layouts and instructions on modern GPUs.
\section{Discussion, Limitations, and Future Work}



This paper presents a principled mechanism for synthesizing tensor layouts on modern GPUs, but the approach still has several limitations.

Hexcute synthesizes layouts from performance-critical operators such as \texttt{gemm}. When multiple \texttt{gemm} operations appear in a program and are connected through register tensors, the compiler may introduce register-layout conversions to maintain correctness, which incurs overhead. Hexcute currently requires users to annotate consistent thread arrangements for each \texttt{gemm}; removing this restriction is our future work, potentially by analyzing dependencies and propagating layouts only from constraint-free \texttt{gemm} operations.

Hexcute also exhaustively explores shared-memory layouts that satisfy alignment constraints. Although a naïve implementation is exponential in the number of shared-memory buffers, most buffers in real programs are independent. In such cases, applying the analytical cost model to each buffer independently avoids searching the full space and reduces the effective complexity to linear. Formalizing this decomposition and further pruning the search space remain important directions.
\section{Conclusions}
This paper presents Hexcute, a tile-based compiler framework that automates layout synthesis on GPUs. By treating layouts as functions and solving constraints with a type-inference algorithm, Hexcute systematically explores layouts and instructions while still supporting explicit optimizations such as dataflow and pipelining. It matches or exceeds expert-tuned CUDA libraries, while reducing coding effort by 1.27$\times$-7.94$\times$, outperforms Triton by 6.46$\times$ on mixed-type operators, and achieves up to 2.60$\times$ speedups in the end-to-end evaluation. 

\section*{Acknowledgement}
We sincerely thank the anonymous reviewers for their valuable feedback. We are also grateful to the members of the EcoSystem Research Laboratory at the University of Toronto for their discussions and suggestions, with special thanks to Christina Giannoula. We also thank Shang Wang and Kaihang Jiang at NVIDIA for their contributions, and Jie Ren at KAUST for his feedback on the presentation. The authors at the University of Toronto are supported by Vector Institute research grants, the Canada Foundation for Innovation JELF grant, the NSERC Discovery Grant, the Google Scholar Research Award, and the VMware Early Career Faculty Grant.

\appendices
\section{Artifact Appendix}

\subsection{Abstract}
We provide the source code for Hexcute, along with scripts to reproduce the experimental results reported in the evaluation section. This appendix contains step-by-step instructions for generating the geometric-mean speedups in Table~\ref{general-operators}, as well as reproducing the plots corresponding to Fig.~\ref{fig:moe}, Fig.~\ref{fig:cost_model}, Fig.~\ref{fig:ablation}, Fig.~\ref{fig:selective_scan}, and Fig.~\ref{fig:ampere_gemm}–Fig.~\ref{fig:hopper_mha}. We also provide a Docker image that includes the complete runtime environment.

All experiments must be run on an x86-64 Linux host equipped with at least 20 CPU cores, 100 GB of RAM, and 100 GB of free disk space. The evaluation requires access to two GPUs: an NVIDIA A100 PCIe (80 GB) and an NVIDIA H100 PCIe or SXM (80 GB). The experiments were conducted on the H100 PCIe GPU. The results on the H100 SXM GPU may vary slightly, but the overall trends should match those reported in the paper. 

\subsection{Artifact check-list (meta-information)}

{\small
\begin{itemize}
  \item {\bf Compilation:} The artifact is pre-compiled. 
  \item {\bf Run-time environment:} Docker.
  \item {\bf Hardware:} A 20-core CPU with 100 GB RAM, an NVIDIA A100 PCIe GPU (80 GB), and an NVIDIA H100 PCIe or SXM GPU (80 GB).
  \item {\bf Metrics:} Average latency, throughput, and geometric-mean speedups.
  \item {\bf Output:} The geometric-mean speedups in Table~\ref{general-operators} and plots similar to Fig.~\ref{fig:moe}, Fig.~\ref{fig:cost_model},  Fig.~\ref{fig:ablation},  Fig.~\ref{fig:selective_scan}, and Fig.~\ref{fig:ampere_gemm}–Fig.~\ref{fig:hopper_mha}.
  \item {\bf Experiments:} Kernel performance evaluation (Section~\ref{evl:programmability}), mixed-type MoE and Mamba scan evaluation (Section~\ref{evl:mixed_type_moe}), cost model accuracy evaluation (Section~\ref{evl:comp_time}), and ablation study (Section~\ref{evl:ablation}).
  \item {\bf How much disk space required (approximately)?:} Approximately 100 GB.
  \item {\bf How much time is needed to prepare workflow (approximately)?:} About 10 minutes to download and initialize the Docker image.
  \item {\bf How much time is needed to complete experiments (approximately)?:} Approximately 9-10 hours in total. 
  \item {\bf Publicly available?:} Yes.
  \item {\bf Code licenses (if publicly available)?:} Apache 2.0.
\end{itemize}
}

\subsection{Description}

\subsubsection{How delivered}

The artifact is publicly available on GitHub: \href{https://github.com/hexcute/hexcute-bench}{https://github.com/hexcute/hexcute-bench}

\subsubsection{Hardware dependencies}

An x86-64 Linux host with at least 20 CPU cores,  100 GB RAM, 100 GB free disk space, and access to both an NVIDIA A100 PCIe GPU (80 GB) and an NVIDIA H100 PCIe or SXM GPU (80 GB).

\subsubsection{Software dependencies} 

Docker is required when running inside the provided container. When running directly on the local environment, the following dependencies are required: CUDA Toolkit $\ge$ 12.6, CMake version $\ge$ 3.19, Python 3.10 (validated with 3.12), Ubuntu 22.04 (validated with 24.04).

\subsection{Installation}
\noindent 1. Clone the GitHub repository.  
{\small
\begin{lstlisting}[numbers=none,language=bash]
$ git clone https://github.com/hexcute/hexcute-bench
$ cd hexcute-bench
\end{lstlisting}
}
2. Pull the Docker image
{\small
\begin{lstlisting}[numbers=none,language=bash]
$ docker pull ghcr.io/hexcute/hexcute-bench/hexcute:latest
\end{lstlisting}
}
3. Launch the Docker container
{\small
\begin{lstlisting}[numbers=none,language=bash]
$ docker run --privileged \
    -v /path/to/hexcute-bench:/workspace/hexcute-bench \
    --gpus all -it \
    ghcr.io/hexcute/hexcute-bench/hexcute:latest /bin/bash
\end{lstlisting}
}
The \texttt{--privileged} option is required because the benchmark scripts lock the GPU frequency for reproducibility.

\subsection{Experiment workflow}
The commands below are run inside the Docker container. 

\noindent 1. Set up the environment variables. 
{\small
\begin{lstlisting}[numbers=none,language=bash]
$ export CUDA_HOME=/path/to/cuda
$ export PATH=$CUDA_HOME/bin:$PATH
\end{lstlisting}
}
2. Enter the scripts directory.

The benchmark scripts (run in Step 3) lock the GPU's frequency for reproducibility. When running inside Docker, the \texttt{--privileged} flag provides the required permissions.
{
\small
\begin{lstlisting}[numbers=none,language=bash]
$ cd /workspace/hexcute-bench/scripts
\end{lstlisting}
}
3. Run the experiments
\begin{itemize}
\item A100 kernel performance evaluation (approximately 5 hours): 
{\small
\begin{lstlisting}[numbers=none,language=bash]
$ bash run_a100.sh
\end{lstlisting}
}
\item H100 kernel performance evaluation: 
{\small
\begin{lstlisting}[numbers=none,language=bash]
$ bash run_h100.sh
\end{lstlisting}
}
\item H100 mixed-type MoE evaluation and ablation study: 
{\small
\begin{lstlisting}[numbers=none,language=bash]
$ bash run_moe.sh
\end{lstlisting}
}
\item H100 Mamba scan evaluation: 
{\small
\begin{lstlisting}[numbers=none,language=bash]
$ bash run_scan.sh
\end{lstlisting}
}
\end{itemize}

When running outside Docker (i.e., directly on the host machine), the scripts require the \texttt{--host} flag.

\subsection{Evaluation and expected results}
\noindent 1. Generate the results
{\small
\begin{lstlisting}[numbers=none,language=bash]
$ bash parse_results.sh
\end{lstlisting}
}
This script creates a directory named \texttt{hexcute\_results/} under the \texttt{hexcute-bench} directory, and moves all generated raw logs and plots into it.  

\noindent 2. Running the scripts produces the following outputs:
\begin{itemize}
\item A table equivalent to Table~\ref{general-operators}, including the geometric-mean speedups across all evaluated shapes. The table can be rendered using \texttt{pdflatex}. For example:
{\small
\begin{lstlisting}[numbers=none,language=bash]
$ pdflatex Table_II.tex
\end{lstlisting}
}
This command can be run on the host machine or inside the Docker container. 
\item The kernel performance plots corresponding to Fig.~\ref{fig:ampere_gemm}-Fig.~\ref{fig:hopper_mha} are also generated. The geometric-mean speedups reported in Table ~\ref{general-operators} are computed from these figures. 
\item Plots reproducing the results shown in Fig.~\ref{fig:moe} and Fig.~\ref{fig:ablation}, including latency curves and ablation study results.  
\item A performance plot matching Fig.~\ref{fig:selective_scan}.
\item A cost model accuracy plot corresponding to Fig.~\ref{fig:cost_model}. 
\end{itemize}






\section*{Data-Availability Statement}
Additional data related to this
publication may be found in the repository at~\cite{hexcute-artifact}.

\bibliographystyle{IEEEtran}
\bibliography{references}

@inproceedings{vLLM,
author = {Kwon, Woosuk and Li, Zhuohan and Zhuang, Siyuan and Sheng, Ying and Zheng, Lianmin and Yu, Cody Hao and Gonzalez, Joseph and Zhang, Hao and Stoica, Ion},
title = {Efficient Memory Management for Large Language Model Serving with PagedAttention},
year = {2023},
isbn = {9798400702297},
publisher = {Association for Computing Machinery},
address = {New York, NY, USA},
rmurl = {https://doi.org/10.1145/3600006.3613165},
doi = {10.1145/3600006.3613165},
abstract = {High throughput serving of large language models (LLMs) requires batching sufficiently many requests at a time. However, existing systems struggle because the key-value cache (KV cache) memory for each request is huge and grows and shrinks dynamically. When managed inefficiently, this memory can be significantly wasted by fragmentation and redundant duplication, limiting the batch size. To address this problem, we propose PagedAttention, an attention algorithm inspired by the classical virtual memory and paging techniques in operating systems. On top of it, we build vLLM, an LLM serving system that achieves (1) near-zero waste in KV cache memory and (2) flexible sharing of KV cache within and across requests to further reduce memory usage. Our evaluations show that vLLM improves the throughput of popular LLMs by 2--4\texttimes{} with the same level of latency compared to the state-of-the-art systems, such as FasterTransformer and Orca. The improvement is more pronounced with longer sequences, larger models, and more complex decoding algorithms. vLLM's source code is publicly available at https://github.com/vllm-project/vllm.},
booktitle = {Proceedings of the 29th Symposium on Operating Systems Principles},
pages = {611–626},
numpages = {16},
location = {, Koblenz, Germany, },
series = {SOSP '23},
}

@inproceedings{AWQ,
 author = {Lin, Ji and Tang, Jiaming and Tang, Haotian and Yang, Shang and Chen, Wei-Ming and Wang, Wei-Chen and Xiao, Guangxuan and Dang, Xingyu and Gan, Chuang and Han, Song},
 booktitle = {Proceedings of Machine Learning and Systems},
 editor = {P. Gibbons and G. Pekhimenko and C. De Sa},
 pages = {87--100},
 title = {AWQ: Activation-aware Weight Quantization for On-Device LLM Compression and Acceleration},
 url = {https://proceedings.mlsys.org/paper_files/paper/2024/file/42a452cbafa9dd64e9ba4aa95cc1ef21-Paper-Conference.pdf},
 volume = {6},
 year = {2024},
 doi = {10.48550/arXiv.2306.00978}
}

@inproceedings{Hidet,
author = {Ding, Yaoyao and Yu, Cody Hao and Zheng, Bojian and Liu, Yizhi and Wang, Yida and Pekhimenko, Gennady},
title = {Hidet: Task-Mapping Programming Paradigm for Deep Learning Tensor Programs},
year = {2023},
isbn = {9781450399166},
publisher = {Association for Computing Machinery},
address = {New York, NY, USA},
rmurl = {https://doi.org/10.1145/3575693.3575702},
doi = {10.1145/3575693.3575702},
abstract = {As deep learning models nowadays are widely adopted by both cloud services and edge devices, reducing the latency of deep learning model inferences becomes crucial to provide efficient model serving. However, it is challenging to develop efficient tensor programs for deep learning operators due to the high complexity of modern accelerators (e.g., NVIDIA GPUs and Google TPUs) and the rapidly growing number of operators.

Deep learning compilers, such as Apache TVM, adopt declarative scheduling primitives to lower the bar of developing tensor programs. However, we show that this approach is insufficient to cover state-of-the-art tensor program optimizations (e.g., double buffering). In this paper, we propose to embed the scheduling process into tensor programs and use dedicated mappings, called task mappings, to define the computation assignment and ordering directly in the tensor programs. This new approach greatly enriches the expressible optimizations by allowing developers to manipulate tensor programs at a much finer granularity (e.g., allowing program-statement-level optimizations). We call the proposed method the task-mapping programming paradigm. In addition, we propose a new post-scheduling fusion optimization that allows developers to focus on scheduling every single operator and automates the fusion after scheduling. It greatly reduces the engineering efforts for operator fusion. Our proposed paradigm also constructs an efficient hardware-centric schedule space, which is agnostic to the program input size and greatly reduces the tuning time.

With the proposed paradigm, we implement a deep learning compiler Hidet. Extensive experiments on modern convolution and transformer models show that Hidet outperforms state-of-the-art DNN inference framework, ONNX Runtime, and compiler, TVM equipped with scheduler AutoTVM and Ansor, by up to 1.48x (1.22x on average). It also reduces the tuning time by 20x and 11x compared with AutoTVM and Ansor, respectively. We open-sourced hidet at https://www.github.com/hidet-org/hidet.},
booktitle = {Proceedings of the 28th ACM International Conference on Architectural Support for Programming Languages and Operating Systems, Volume 2},
pages = {370–384},
numpages = {15},
keywords = {tensor computation, systems for machine learning, programming models, deep learning systems, compilation},
location = {Vancouver, BC, Canada},
series = {ASPLOS 2023}
}

@misc{Thakkar_CUTLASS_2023,
author = {Thakkar, Vijay and Ramani, Pradeep and Cecka, Cris and Shivam, Aniket and Lu, Honghao and Yan, Ethan and Kosaian, Jack and Hoemmen, Mark and Wu, Haicheng and Kerr, Andrew and Nicely, Matt and Merrill, Duane and Blasig, Dustyn and Qiao, Fengqi and Majcher, Piotr and Springer, Paul and Hohnerbach, Markus and Wang, Jin and Gupta, Manish},
license = {BSD-3-Clause},
month = nov,
title = {{CUTLASS}},
rmurl = {https://github.com/NVIDIA/cutlass},
url = {https://github.com/NVIDIA/cutlass},
version = {4.3.0},
year = {2025}
}

@misc{frantar2023gptq,
author = {Frantar, Elias and Ashkboos, Saleh and Hoefler, Torsten and Alistarh, Dan},
year = {2022},
month = {10},
pages = {},
title = {GPTQ: Accurate Post-Training Quantization for Generative Pre-trained Transformers},
doi = {10.48550/arXiv.2210.17323},
}

@inproceedings{attention,
author = {Vaswani, Ashish and Shazeer, Noam and Parmar, Niki and Uszkoreit, Jakob and Jones, Llion and Gomez, Aidan N. and Kaiser, \L{}ukasz and Polosukhin, Illia},
title = {Attention is all you need},
year = {2017},
isbn = {9781510860964},
publisher = {Curran Associates Inc.},
address = {Red Hook, NY, USA},
abstract = {The dominant sequence transduction models are based on complex recurrent or convolutional neural networks that include an encoder and a decoder. The best performing models also connect the encoder and decoder through an attention mechanism. We propose a new simple network architecture, the Transformer, based solely on attention mechanisms, dispensing with recurrence and convolutions entirely. Experiments on two machine translation tasks show these models to be superior in quality while being more parallelizable and requiring significantly less time to train. Our model achieves 28.4 BLEU on the WMT 2014 English-to-German translation task, improving over the existing best results, including ensembles, by over 2 BLEU. On the WMT 2014 English-to-French translation task, our model establishes a new single-model state-of-the-art BLEU score of 41.0 after training for 3.5 days on eight GPUs, a small fraction of the training costs of the best models from the literature.},
booktitle = {Proceedings of the 31st International Conference on Neural Information Processing Systems},
pages = {6000–6010},
numpages = {11},
location = {Long Beach, California, USA},
series = {NIPS'17},
doi = {10.5555/3295222.3295349},
url = {https://doi.org/10.5555/3295222.3295349}
}

@online{CuTe-doc,
  author = {NVIDIA}, 
  title = {{CUTLASS - CuTe} documentation},
  year = 2025,
  url = {https://docs.nvidia.com/cutlass/media/docs/cpp/cute/index.html},
  urldate = {2025-11-27}
}

@online{mma-ptx-inst,
  author = {NVIDIA},
  title = {Multiply-and-Accumulate Instruction: mma},
  year = 2025,
  url = {https://docs.nvidia.com/cuda/parallel-thread-execution/#warp-level-matrix-instructions-mma},
  urldata = {2025-11-27}
}

@online{ldmatrix-ptx-inst,
  author = {NVIDIA},
  title = {{warp-level-matrix-instructions-ldmatrix}},
  year = 2025,
  url = {https://docs.nvidia.com/cuda/parallel-thread-execution/#warp-level-matrix-instructions-ldmatrix},
  urldata = {2025-11-27}
}

@inproceedings{Triton,
author = {Tillet, Philippe and Kung, H. T. and Cox, David},
title = {Triton: An Intermediate Language and Compiler for Tiled Neural Network Computations},
year = {2019},
isbn = {9781450367196},
publisher = {Association for Computing Machinery},
address = {New York, NY, USA},
rmurl = {https://doi.org/10.1145/3315508.3329973},
doi = {10.1145/3315508.3329973},
abstract = {The validation and deployment of novel research ideas in the field of Deep Learning is often limited by the availability of efficient compute kernels for certain basic primitives. In particular, operations that cannot leverage existing vendor libraries (e.g., cuBLAS, cuDNN) are at risk of facing poor device utilization unless custom implementations are written by experts – usually at the expense of portability. For this reason, the development of new programming abstractions for specifying custom Deep Learning workloads at a minimal performance cost has become crucial. We present Triton, a language and compiler centered around the concept of tile, i.e., statically shaped multi-dimensional sub-arrays. Our approach revolves around (1) a C-based language and an LLVM-based intermediate representation (IR) for expressing tensor programs in terms of operations on parametric tile variables and (2) a set of novel tile-level optimization passes for compiling these programs into efficient GPU code. We demonstrate how Triton can be used to build portable implementations of matrix multiplication and convolution kernels on par with hand-tuned vendor libraries (cuBLAS / cuDNN), or for efficiently implementing recent research ideas such as shift convolutions.},
booktitle = {Proceedings of the 3rd ACM SIGPLAN International Workshop on Machine Learning and Programming Languages},
pages = {10–19},
numpages = {10},
keywords = {neural networks, GPU, compiler},
location = {Phoenix, AZ, USA},
series = {MAPL 2019},
}

@inproceedings{Graphene,
author = {Hagedorn, Bastian and Fan, Bin and Chen, Hanfeng and Cecka, Cris and Garland, Michael and Grover, Vinod},
title = {Graphene: An IR for Optimized Tensor Computations on GPUs},
year = {2023},
isbn = {9781450399180},
publisher = {Association for Computing Machinery},
address = {New York, NY, USA},
rmurl = {https://doi.org/10.1145/3582016.3582018},
doi = {10.1145/3582016.3582018},
abstract = {Modern GPUs accelerate computations and data movements of multi-dimensional tensors in hardware. However, expressing optimized tensor computations in software is extremely challenging even for experts. Languages like CUDA C++ are centered around flat buffers in one-dimensional memory and lack reasonable abstractions for multi-dimensional data and threads. Existing tensor IRs are not expressive enough to represent the complex data-to-thread mappings required by the GPU tensor instructions. In this paper, we introduce Graphene, an intermediate representation (IR) for optimized tensor computations on GPUs. Graphene is a low-level target language for tensor compilers and performance experts while being closer to the domain of tensor computations than languages offering the same level of control such as CUDA C++ and PTX. In Graphene, multi-dimensional data and threads are represented as first-class tensors. Graphene’s tensors are hierarchically decomposable into tiles allowing to represent optimized tensor computations as mappings between data and thread tiles. We evaluate Graphene using some of the most important tensor computations in deep learning today, including GEMM, Multi-Layer Perceptron (MLP), Layernorm, LSTM, and Fused Multi-Head Attention (FMHA). We show that Graphene is capable of expressing all optimizations required to achieve the same practical peak performance as existing library implementations. Fused kernels beyond library routines expressed in Graphene significantly improve the end-to-end inference performance of Transformer networks and match or outperform the performance of cuBLAS(Lt), cuDNN, and custom handwritten kernels.},
booktitle = {Proceedings of the 28th ACM International Conference on Architectural Support for Programming Languages and Operating Systems, Volume 3},
pages = {302–313},
numpages = {12},
keywords = {Optimization, Tensor Computations, Compiler, Deep Learning, GPU, Intermediate Representation, Code Generation},
location = {Vancouver, BC, Canada},
series = {ASPLOS 2023},
}

@inproceedings{feng2022tensorir,
author = {Feng, Siyuan and Hou, Bohan and Jin, Hongyi and Lin, Wuwei and Shao, Junru and Lai, Ruihang and Ye, Zihao and Zheng, Lianmin and Yu, Cody Hao and Yu, Yong and Chen, Tianqi},
title = {TensorIR: An Abstraction for Automatic Tensorized Program Optimization},
year = {2023},
isbn = {9781450399166},
publisher = {Association for Computing Machinery},
address = {New York, NY, USA},
url = {https://doi.org/10.1145/3575693.3576933},
doi = {10.1145/3575693.3576933},
abstract = {Deploying deep learning models on various devices has become an important topic. The wave of hardware specialization brings a diverse set of acceleration primitives for multi-dimensional ten- sor computations. These new acceleration primitives, along with the emerging machine learning models, bring tremendous engineering challenges. In this paper, we present TensorIR, a compiler abstraction for optimizing programs with these tensor computation primitives. TensorIR generalizes the loop nest representation used in existing machine learning compilers to bring tensor computation as the first-class citizen. Finally, we build an end-to-end framework on top of our abstraction to automatically optimize deep learning models for given tensor computation primitives. Experimental results show that TensorIR compilation automatically uses the tensor computation primitives for given hardware backends and delivers performance that is competitive to state-of-art hand-optimized systems across platforms.},
booktitle = {Proceedings of the 28th ACM International Conference on Architectural Support for Programming Languages and Operating Systems, Volume 2},
pages = {804–817},
numpages = {14},
keywords = {Tensor Computation, Machine Learning Compiler, Deep Neural Network},
location = {Vancouver, BC, Canada},
series = {ASPLOS 2023}
}

@article{Halide,
author = {Ragan-Kelley, Jonathan and Barnes, Connelly and Adams, Andrew and Paris, Sylvain and Durand, Fr\'{e}do and Amarasinghe, Saman},
title = {Halide: A Language and Compiler for Optimizing Parallelism, Locality, and Recomputation in Image Processing Pipelines},
year = {2013},
issue_date = {June 2013},
publisher = {Association for Computing Machinery},
address = {New York, NY, USA},
volume = {48},
number = {6},
issn = {0362-1340},
rmurl = {https://doi.org/10.1145/2499370.2462176},
doi = {10.1145/2499370.2462176},
abstract = {Image processing pipelines combine the challenges of stencil computations and stream programs. They are composed of large graphs of different stencil stages, as well as complex reductions, and stages with global or data-dependent access patterns. Because of their complex structure, the performance difference between a naive implementation of a pipeline and an optimized one is often an order of magnitude. Efficient implementations require optimization of both parallelism and locality, but due to the nature of stencils, there is a fundamental tension between parallelism, locality, and introducing redundant recomputation of shared values.We present a systematic model of the tradeoff space fundamental to stencil pipelines, a schedule representation which describes concrete points in this space for each stage in an image processing pipeline, and an optimizing compiler for the Halide image processing language that synthesizes high performance implementations from a Halide algorithm and a schedule. Combining this compiler with stochastic search over the space of schedules enables terse, composable programs to achieve state-of-the-art performance on a wide range of real image processing pipelines, and across different hardware architectures, including multicores with SIMD, and heterogeneous CPU+GPU execution. From simple Halide programs written in a few hours, we demonstrate performance up to 5x faster than hand-tuned C, intrinsics, and CUDA implementations optimized by experts over weeks or months, for image processing applications beyond the reach of past automatic compilers.},
journal = {SIGPLAN Not.},
month = {jun},
pages = {519–530},
numpages = {12},
keywords = {gpu, domain specific language, vectorization, redundant computation, parallelism, optimization, locality, compiler, image processing, autotuning}
}

@inproceedings{Chen2018TVMAA,
author = {Chen, Tianqi and Moreau, Thierry and Jiang, Ziheng and Zheng, Lianmin and Yan, Eddie and Cowan, Meghan and Shen, Haichen and Wang, Leyuan and Hu, Yuwei and Ceze, Luis and Guestrin, Carlos and Krishnamurthy, Arvind},
title = {TVM: an automated end-to-end optimizing compiler for deep learning},
year = {2018},
isbn = {9781931971478},
publisher = {USENIX Association},
address = {USA},
abstract = {There is an increasing need to bring machine learning to a wide diversity of hardware devices. Current frameworks rely on vendor-specific operator libraries and optimize for a narrow range of server-class GPUs. Deploying workloads to new platforms - such as mobile phones, embedded devices, and accelerators (e.g., FPGAs, ASICs) - requires significant manual effort. We propose TVM, a compiler that exposes graph-level and operator-level optimizations to provide performance portability to deep learning workloads across diverse hardware back-ends. TVM solves optimization challenges specific to deep learning, such as high-level operator fusion, mapping to arbitrary hardware primitives, and memory latency hiding. It also automates optimization of low-level programs to hardware characteristics by employing a novel, learning-based cost modeling method for rapid exploration of code optimizations. Experimental results show that TVM delivers performance across hardware back-ends that are competitive with state-of-the-art, hand-tuned libraries for low-power CPU, mobile GPU, and server-class GPUs. We also demonstrate TVM's ability to target new accelerator back-ends, such as the FPGA-based generic deep learning accelerator. The system is open sourced and in production use inside several major companies.},
booktitle = {Proceedings of the 13th USENIX Conference on Operating Systems Design and Implementation},
pages = {579–594},
numpages = {16},
location = {Carlsbad, CA, USA},
series = {OSDI'18},
doi = {10.5555/3291168.3291211}
}

@inproceedings{FlexTensor,
author = {Zheng, Size and Liang, Yun and Wang, Shuo and Chen, Renze and Sheng, Kaiwen},
title = {FlexTensor: An Automatic Schedule Exploration and Optimization Framework for Tensor Computation on Heterogeneous System},
year = {2020},
isbn = {9781450371025},
publisher = {Association for Computing Machinery},
address = {New York, NY, USA},
rmurl = {https://doi.org/10.1145/3373376.3378508},
doi = {10.1145/3373376.3378508},
abstract = {Tensor computation plays a paramount role in a broad range of domains, including machine learning, data analytics, and scientific computing. The wide adoption of tensor computation and its huge computation cost has led to high demand for flexible, portable, and high-performance library implementation on heterogeneous hardware accelerators such as GPUs and FPGAs. However, the current tensor library implementation mainly requires programmers to manually design low-level implementation and optimize from the algorithm, architecture, and compilation perspectives. Such a manual development process often takes months or even years, which falls far behind the rapid evolution of the application algorithms.In this paper, we introduce FlexTensor, which is a schedule exploration and optimization framework for tensor computation on heterogeneous systems. FlexTensor can optimize tensor computation programs without human interference, allowing programmers to only work on high-level programming abstraction without considering the hardware platform details. FlexTensor systematically explores the optimization design spaces that are composed of many different schedules for different hardware. Then, FlexTensor combines different exploration techniques, including heuristic method and machine learning method to find the optimized schedule configuration. Finally, based on the results of exploration, customized schedules are automatically generated for different hardware. In the experiments, we test 12 different kinds of tensor computations with totally hundreds of test cases and FlexTensor achieves average 1.83x performance speedup on NVIDIA V100 GPU compared to cuDNN; 1.72x performance speedup on Intel Xeon CPU compared to MKL-DNN for 2D convolution; 1.5x performance speedup on Xilinx VU9P FPGA compared to OpenCL baselines; 2.21x speedup on NVIDIA V100 GPU compared to the state-of-the-art.},
booktitle = {Proceedings of the Twenty-Fifth International Conference on Architectural Support for Programming Languages and Operating Systems},
pages = {859–873},
numpages = {15},
keywords = {machine learning, heterogeneous systems, code generation, compiler optimization},
location = {Lausanne, Switzerland},
series = {ASPLOS '20}
}

@inproceedings{Ansor,
author = {Zheng, Lianmin and Jia, Chengfan and Sun, Minmin and Wu, Zhao and Yu, Cody Hao and Haj-Ali, Ameer and Wang, Yida and Yang, Jun and Zhuo, Danyang and Sen, Koushik and Gonzalez, Joseph E. and Stoica, Ion},
title = {Ansor: Generating High-Performance Tensor Programs for Deep Learning},
year = {2020},
isbn = {978-1-939133-19-9},
publisher = {USENIX Association},
address = {USA},
abstract = {High-performance tensor programs are crucial to guarantee efficient execution of deep neural networks. However, obtaining performant tensor programs for different operators on various hardware platforms is notoriously challenging. Currently, deep learning systems rely on vendor-provided kernel libraries or various search strategies to get performant tensor programs. These approaches either require significant engineering effort to develop platform-specific optimization code or fall short of finding high-performance programs due to restricted search space and ineffective exploration strategy.We present Ansor, a tensor program generation framework for deep learning applications. Compared with existing search strategies, Ansor explores many more optimization combinations by sampling programs from a hierarchical representation of the search space. Ansor then fine-tunes the sampled programs with evolutionary search and a learned cost model to identify the best programs. Ansor can find high-performance programs that are outside the search space of existing state-of-the-art approaches. In addition, Ansor utilizes a task scheduler to simultaneously optimize multiple subgraphs in deep neural networks. We show that Ansor improves the execution performance of deep neural networks relative to the state-of-the-art on the Intel CPU, ARM CPU, and NVIDIA GPU by up to 3.8\texttimes{}, 2.6\texttimes{}, and 1.7\texttimes{}, respectively.},
booktitle = {Proceedings of the 14th USENIX Conference on Operating Systems Design and Implementation},
articleno = {49},
numpages = {17},
series = {OSDI'20},
doi = {10.5555/3488766.3488815}
}

@inproceedings{AMOS,
  author    = {Size Zheng and
               Renze Chen and
               Anjiang Wei and
               Yicheng Jin and
               Qin Han and
               Liqiang Lu and
               Bingyang Wu and
               Xiuhong Li and
               Shengen Yan and
               Yun Liang},
  editor    = {Valentina Salapura and
               Mohamed Zahran and
               Fred Chong and
               Lingjia Tang},
  title     = {{AMOS:} enabling automatic mapping for tensor computations on spatial
               accelerators with hardware abstraction},
  booktitle = {{ISCA} '22: The 49th Annual International Symposium on Computer Architecture,
               New York, New York, USA, June 18 - 22, 2022},
  pages     = {874--887},
  publisher = {{ACM}},
  year      = {2022},
  rmurl       = {https://doi.org/10.1145/3470496.3527440},
  doi       = {10.1145/3470496.3527440},
  timestamp = {Wed, 01 Jun 2022 14:59:23 +0200},
  biburl    = {https://dblp.org/rec/conf/isca/0001CWJHLWLY022.bib},
  bibsource = {dblp computer science bibliography, https://dblp.org}
}

@inproceedings{dao2022flashattention,
author = {Dao, Tri and Fu, Daniel Y. and Ermon, Stefano and Rudra, Atri and R\'{e}, Christopher},
title = {FLASHATTENTION: fast and memory-efficient exact attention with IO-awareness},
year = {2022},
isbn = {9781713871088},
publisher = {Curran Associates Inc.},
address = {Red Hook, NY, USA},
abstract = {Transformers are slow and memory-hungry on long sequences, since the time and memory complexity of self-attention are quadratic in sequence length. Approximate attention methods have attempted to address this problem by trading off model quality to reduce the compute complexity, but often do not achieve wall-clock speedup. We argue that a missing principle is making attention algorithms IO-aware— accounting for reads and writes between levels of GPU memory. We propose FLASHATTENTION, an IO-aware exact attention algorithm that uses tiling to reduce the number of memory reads/writes between GPU high bandwidth memory (HBM) and GPU on-chip SRAM. We analyze the IO complexity of FLASHATTENTION, showing that it requires fewer HBM accesses than standard attention, and is optimal for a range of SRAM sizes. We also extend FLASHATTENTION to block-sparse attention, yielding an approximate attention algorithm that is faster than any existing approximate attention method. FLASHATTENTION trains Transformers faster than existing baselines: 15\% end-to-end wall-clock speedup on BERT-large (seq. length 512) compared to the MLPerf 1.1 training speed record, 3\texttimes{} speedup on GPT-2 (seq. length 1K), and 2.4\texttimes{} speedup on long-range arena (seq. length 1K-4K). FLASHATTENTION and block-sparse FLASHATTENTION enable longer context in Transformers, yielding higher quality models (0.7 better perplexity on GPT-2 and 6.4 points of lift on long-document classification) and entirely new capabilities: the first Transformers to achieve better-than-chance performance on the Path-X challenge (seq. length 16K, 61.4\% accuracy) and Path-256 (seq. length 64K, 63.1\% accuracy).},
booktitle = {Proceedings of the 36th International Conference on Neural Information Processing Systems},
articleno = {1189},
numpages = {16},
location = {New Orleans, LA, USA},
series = {NIPS '22},
doi = {10.5555/3600270.3601459}
}

@inproceedings{frantar2024marlin,
author = {Frantar, Elias and Castro, Roberto L. and Chen, Jiale and Hoefler, Torsten and Alistarh, Dan},
title = {MARLIN: Mixed-Precision Auto-Regressive Parallel Inference on Large Language Models},
year = {2025},
isbn = {9798400714436},
publisher = {Association for Computing Machinery},
address = {New York, NY, USA},
url = {https://doi.org/10.1145/3710848.3710871},
doi = {10.1145/3710848.3710871},
abstract = {As inference on Large Language Models (LLMs) emerges as an important workload in machine learning applications, model weight quantization has become a standard technique for efficient GPU deployment. Quantization not only reduces model size, but has also been shown to yield substantial speedups for single-user inference, due to reduced memory movement, with low accuracy impact. Yet, it remains a key open question whether speedups are achievable also in batched settings with multiple parallel clients, which are highly relevant for practical serving. It is unclear whether GPU kernels can be designed to remain practically memory-bound, while supporting the substantially increased compute requirements of batched workloads.In this paper, we resolve this question positively by introducing a new design for Mixed-precision Auto-Regressive LINear kernels, called MARLIN. Concretely, given a model whose weights are compressed via quantization to, e.g., 4 bits per element, MARLIN shows that batchsizes up to 16-32 can be practically supported with close to maximum (4\texttimes{}) quantization speedup, and larger batchsizes up to 64-128 with gradually decreasing, but still significant, acceleration. MARLIN accomplishes this via a combination of techniques, such as asynchronous memory access, complex task scheduling and pipelining, and bespoke quantization support. Our experiments show that MARLIN's near-optimal performance on individual LLM layers across different scenarios can also lead to significant end-to-end LLM inference speedups (of up to 2.8\texttimes{}) when integrated with the popular vLLM open-source serving engine. Finally, we show that MARLIN is extensible to further compression techniques, like NVIDIA 2:4 sparsity, leading to additional speedups.},
booktitle = {Proceedings of the 30th ACM SIGPLAN Annual Symposium on Principles and Practice of Parallel Programming},
pages = {239–251},
numpages = {13},
keywords = {Batch parallelism, GPU programming, Large language model (LLM) inference},
location = {Las Vegas, NV, USA},
series = {PPoPP '25}
}

@inproceedings{wang2024ladder,
author = {Wang, Lei and Ma, Lingxiao and Cao, Shijie and Zhang, Quanlu and Xue, Jilong and Shi, Yining and Zheng, Ningxin and Miao, Ziming and Yang, Fan and Cao, Ting and Yang, Yuqing and Yang, Mao},
title = {LADDER: enabling efficient low-precision deep learning computing through hardware-aware tensor transformation},
year = {2024},
isbn = {978-1-939133-40-3},
publisher = {USENIX Association},
address = {USA},
abstract = {The increasing demand for improving deep learning model performance has led to a paradigm shift in supporting low-precision computation to harness the robustness of deep learning to errors. Despite the emergence of new low-precision data types and optimization approaches, existing hardware and software have insufficient and inefficient support for those evolving data types, making it challenging to achieve real performance gains through low-precision computing.This paper introduces LADDER, a novel compiler designed to bridge the gap between evolving custom data types and the fixed precision formats supported by current hardware. Leveraging a general type system, tType, and an extended tensor expression, LADDER transforms deep neural network (DNN) computations into optimized computing pipelines with custom data types as the first-class citizen, exposing an optimization space for efficiently handling data storage, accesses, and type conversions. LADDER employs a new set of tensor scheduling primitives and a hardware-aware optimization policy to navigate the complex transformation space, ensuring optimal performance across different memory layers and DNN operators. Our evaluation demonstrates LADDER's capability to systematically support a wide array of low-bit precision custom data types, significantly enhancing the performance of DNN computations on modern accelerators without necessitating hardware modifications. This innovation empowers model designers with the ability to explore data type optimizations and offers hardware vendors a flexible solution to expand their support for diverse precision formats.},
booktitle = {Proceedings of the 18th USENIX Conference on Operating Systems Design and Implementation},
articleno = {17},
numpages = {17},
location = {Santa Clara, CA, USA},
series = {OSDI'24},
doi = {10.5555/3691938.3691955}
}

@inproceedings{tpu,
author = {Jouppi, Norm and Kurian, George and Li, Sheng and Ma, Peter and Nagarajan, Rahul and Nai, Lifeng and Patil, Nishant and Subramanian, Suvinay and Swing, Andy and Towles, Brian and Young, Clifford and Zhou, Xiang and Zhou, Zongwei and Patterson, David A},
title = {TPU v4: An Optically Reconfigurable Supercomputer for Machine Learning with Hardware Support for Embeddings},
year = {2023},
isbn = {9798400700958},
publisher = {Association for Computing Machinery},
address = {New York, NY, USA},
rmurl = {https://doi.org/10.1145/3579371.3589350},
doi = {10.1145/3579371.3589350},
abstract = {In response to innovations in machine learning (ML) models, production workloads changed radically and rapidly. TPU v4 is the fifth Google domain specific architecture (DSA) and its third supercomputer for such ML models. Optical circuit switches (OCSes) dynamically reconfigure its interconnect topology to improve scale, availability, utilization, modularity, deployment, security, power, and performance; users can pick a twisted 3D torus topology if desired. Much cheaper, lower power, and faster than Infiniband, OCSes and underlying optical components are <5\% of system cost and <3\% of system power. Each TPU v4 includes SparseCores, dataflow processors that accelerate models that rely on embeddings by 5x--7x yet use only 5\% of die area and power. Deployed since 2020, TPU v4 outperforms TPU v3 by 2.1x and improves performance/Watt by 2.7x. The TPU v4 supercomputer is 4x larger at 4096 chips and thus nearly 10x faster overall, which along with OCS flexibility and availability allows a large language model to train at an average of ~60\% of peak FLOPS/second. For similar sized systems, it is ~4.3x--4.5x faster than the Graphcore IPU Bow and is 1.2x--1.7x faster and uses 1.3x--1.9x less power than the Nvidia A100. TPU v4s inside the energy-optimized warehouse scale computers of Google Cloud use ~2--6x less energy and produce ~20x less CO2e than contemporary DSAs in typical on-premise data centers.},
booktitle = {Proceedings of the 50th Annual International Symposium on Computer Architecture},
articleno = {82},
numpages = {14},
keywords = {CO2 equivalent emissions, energy, carbon emissions, warehouse scale computer, power usage effectiveness, large language model, embeddings, reconfigurable, optical interconnect, supercomputer, IPU, GPU, TPU, domain specific architecture, machine learning},
location = {Orlando, FL, USA},
series = {ISCA '23}
}

@INPROCEEDINGS{gpu,
  author={Huang, Qihang and Huang, Zhiyi and Werstein, Paul and Purvis, Martin},
  booktitle={2008 Ninth International Conference on Parallel and Distributed Computing, Applications and Technologies}, 
  title={GPU as a General Purpose Computing Resource}, 
  year={2008},
  volume={},
  number={},
  pages={151-158},
  keywords={Computer graphics;Computer architecture;Programming profession;Application software;Central Processing Unit;Pipelines;Distributed computing;Grid computing;Multicore processing;Rendering (computer graphics)},
  doi={10.1109/PDCAT.2008.38}}

@INPROCEEDINGS{npu,
  author={Mahurin, Eric},
  booktitle={2023 IEEE Hot Chips 35 Symposium (HCS)}, 
  title={Qualcomm® Hexagon™ NPU}, 
  year={2023},
  volume={},
  number={},
  pages={1-19},
  keywords={Instruction sets;Tensors;Registers;VLIW;Security;Hardware;Software tools},
  doi={10.1109/HCS59251.2023.10254715}}

@inproceedings{sparsetir,
author = {Ye, Zihao and Lai, Ruihang and Shao, Junru and Chen, Tianqi and Ceze, Luis},
title = {SparseTIR: Composable Abstractions for Sparse Compilation in Deep Learning},
year = {2023},
isbn = {9781450399180},
publisher = {Association for Computing Machinery},
address = {New York, NY, USA},
rmurl = {https://doi.org/10.1145/3582016.3582047},
doi = {10.1145/3582016.3582047},
abstract = {Sparse tensors are rapidly becoming critical components of modern deep learning workloads. However, developing high-performance sparse operators can be difficult and tedious, and existing vendor libraries cannot satisfy the escalating demands from new operators. Sparse tensor compilers simplify the development of operators, but efficient sparse compilation for deep learning remains challenging because a single sparse format cannot maximize hardware efficiency, and single-shot compilers cannot keep up with latest hardware and system advances. In this paper, we observe that the key to addressing both these challenges is to leverage composable formats and composable transformations. We propose SparseTIR, a sparse tensor compilation abstraction that offers composable formats and composable transformations for deep learning workloads. SparseTIR constructs a search space over these composable components for performance tuning. With these improvements, SparseTIR obtains consistent performance speedups vs vendor libraries on GPUs for single operators: 1.20-2.34x for GNN operators, 1.05-2.98x for sparse attention operators, and 0.56-7.45x for sparse convolution operators. SparseTIR also accelerates end-to-end GNNs by 1.08-1.52x for GraphSAGE training, and 4.20-40.18x for RGCN inference.},
booktitle = {Proceedings of the 28th ACM International Conference on Architectural Support for Programming Languages and Operating Systems, Volume 3},
pages = {660–678},
numpages = {19},
keywords = {Vectorization, Tensor Cores, Tensor Compilers, Sparse Computation, Scheduling, Kernel Fusion, Code Generation and Optimizations},
location = {Vancouver, BC, Canada},
series = {ASPLOS 2023}
}

@misc{cublas,
author = {{NVIDIA}},
title = {cuBLAS},
URL = {https://docs.nvidia.com/cuda/cublas/},
date = {2025-11}
}

@misc{hidetscript,
author = {{Hidet Team}},
title = {Hidet Script},
URL = {https://hidet.org/docs/stable/hidet-script/index.html},
date = {2025-11}
}

@misc{mambalib,
author = {Albert Gu},
title = {Mamba library},
URL = {https://github.com/state-spaces/mamba},
data = {2025-11}
}

@inproceedings{flashinfer,
title={FlashInfer: Efficient and Customizable Attention Engine for {LLM} Inference Serving},
author={Zihao Ye and Lequn Chen and Ruihang Lai and Wuwei Lin and Yineng Zhang and Stephanie Wang and Tianqi Chen and Baris Kasikci and Vinod Grover and Arvind Krishnamurthy and Luis Ceze},
booktitle={Eighth Conference on Machine Learning and Systems},
year={2025},
url={https://openreview.net/forum?id=RXPofAsL8F},
doi={10.48550/arXiv.2501.01005}
}

@InProceedings{He_2016_CVPR,
author = {He, Kaiming and Zhang, Xiangyu and Ren, Shaoqing and Sun, Jian},
title = {Deep Residual Learning for Image Recognition},
booktitle = {Proceedings of the IEEE Conference on Computer Vision and Pattern Recognition (CVPR)},
month = {June},
year = {2016},
rmurl = {https://doi.org/10.1109/CVPR.2016.90},
doi = {10.1109/CVPR.2016.90}
}

@inproceedings{lang_few_shot,
author = {Brown, Tom B. and Mann, Benjamin and Ryder, Nick and Subbiah, Melanie and Kaplan, Jared and Dhariwal, Prafulla and Neelakantan, Arvind and Shyam, Pranav and Sastry, Girish and Askell, Amanda and Agarwal, Sandhini and Herbert-Voss, Ariel and Krueger, Gretchen and Henighan, Tom and Child, Rewon and Ramesh, Aditya and Ziegler, Daniel M. and Wu, Jeffrey and Winter, Clemens and Hesse, Christopher and Chen, Mark and Sigler, Eric and Litwin, Mateusz and Gray, Scott and Chess, Benjamin and Clark, Jack and Berner, Christopher and McCandlish, Sam and Radford, Alec and Sutskever, Ilya and Amodei, Dario},
title = {Language models are few-shot learners},
year = {2020},
isbn = {9781713829546},
publisher = {Curran Associates Inc.},
address = {Red Hook, NY, USA},
abstract = {We demonstrate that scaling up language models greatly improves task-agnostic, few-shot performance, sometimes even becoming competitive with prior state-of-the-art fine-tuning approaches. Specifically, we train GPT-3, an autoregressive language model with 175 billion parameters, 10x more than any previous non-sparse language model, and test its performance in the few-shot setting. For all tasks, GPT-3 is applied without any gradient updates or fine-tuning, with tasks and few-shot demonstrations specified purely via text interaction with the model. GPT-3 achieves strong performance on many NLP datasets, including translation, question-answering, and cloze tasks. We also identify some datasets where GPT-3's few-shot learning still struggles, as well as some datasets where GPT-3 faces methodological issues related to training on large web corpora.},
booktitle = {Proceedings of the 34th International Conference on Neural Information Processing Systems},
articleno = {159},
numpages = {25},
location = {Vancouver, BC, Canada},
series = {NIPS '20},
doi = {10.5555/3495724.3495883}
}

@INPROCEEDINGS {diffuser,
author = { Rombach, Robin and Blattmann, Andreas and Lorenz, Dominik and Esser, Patrick and Ommer, Bjorn },
booktitle = { 2022 IEEE/CVF Conference on Computer Vision and Pattern Recognition (CVPR) },
title = {{ High-Resolution Image Synthesis with Latent Diffusion Models }},
year = {2022},
volume = {},
ISSN = {},
pages = {10674-10685},
abstract = { By decomposing the image formation process into a sequential application of denoising autoencoders, diffusion models (DMs) achieve state-of-the-art synthesis results on image data and beyond. Additionally, their formulation allows for a guiding mechanism to control the image generation process without retraining. However, since these models typically operate directly in pixel space, optimization of powerful DMs often consumes hundreds of GPU days and inference is expensive due to sequential evaluations. To enable DM training on limited computational resources while retaining their quality and flexibility, we apply them in the latent space of powerful pretrained autoencoders. In contrast to previous work, training diffusion models on such a representation allows for the first time to reach a near-optimal point between complexity reduction and detail preservation, greatly boosting visual fidelity. By introducing cross-attention layers into the model architecture, we turn diffusion models into powerful and flexible generators for general conditioning inputs such as text or bounding boxes and high-resolution synthesis becomes possible in a convolutional manner. Our latent diffusion models (LDMs) achieve new state of the art scores for image inpainting and class-conditional image synthesis and highly competitive performance on various tasks, including unconditional image generation, text-to-image synthesis, and super-resolution, while significantly reducing computational requirements compared to pixel-based DMs. },
keywords = {Training;Visualization;Image synthesis;Computational modeling;Noise reduction;Superresolution;Process control},
doi = {10.1109/CVPR52688.2022.01042},
rmurl = {https://doi.ieeecomputersociety.org/10.1109/CVPR52688.2022.01042},
publisher = {IEEE Computer Society},
address = {Los Alamitos, CA, USA},
month =Jun}

@article{gpu_2,  
  author={Nickolls, John and Dally, William J.},
  journal={IEEE Micro},   
  title={The GPU Computing Era},   
  year={2010},
  volume={30},
  number={2},
  pages={56-69},
  doi={10.1109/MM.2010.41}
}

@inproceedings{davidson1994memory,
author = {Davidson, Jack W. and Jinturkar, Sanjay},
title = {Memory access coalescing: a technique for eliminating redundant memory accesses},
year = {1994},
isbn = {089791662X},
publisher = {Association for Computing Machinery},
address = {New York, NY, USA},
url = {https://doi.org/10.1145/178243.178259},
doi = {10.1145/178243.178259},
abstract = {As microprocessor speeds increase, memory bandwidth is increasingly the performance bottleneck for microprocessors. This has occurred because innovation and technological improvements in processor design have outpaced advances in memory design. Most attempts at addressing this problem have involved hardware solutions. Unfortunately, these solutions do little to help the situation with respect to current microprocessors. In previous work, we developed, implemented, and evaluated an algorithm that exploited the ability of newer machines with wide-buses to load/store multiple floating-point operands in a single memory reference. This paper describes a general code improvement algorithm that transforms code to better exploit the available memory bandwidth on existing microprocessors as well as wide-bus machines. Where possible and advantageous, the algorithm coalesces narrow memory references into wide ones. An interesting characteristic of the algorithm is that some decisions about the applicability of the transformation are made at run time. This dynamic analysis significantly increases the probability of the transformation being applied. The code improvement transformation was implemented and added to the repertoire of code improvements of an existing retargetable optimizing back end. Using three current architectures as evaluation platforms, the effectiveness of the transformation was measured on a set of compute- and memory-intensive programs. Interestingly, the effectiveness of the transformation varied significantly with respect to the instruction-set architecture of the tested platform. For one of the tested architectures, improvements in execution speed ranging from 5 to 40 percent were observed. For another, the improvements in execution speed ranged from 5 to 20 percent, while for yet another, the transformation resulted in slower code for all programs.},
booktitle = {Proceedings of the ACM SIGPLAN 1994 Conference on Programming Language Design and Implementation},
pages = {186–195},
numpages = {10},
location = {Orlando, Florida, USA},
series = {PLDI '94}
}

@article{tretter2017minimising,
author = {Tretter, Andreas and Giannopoulou, Georgia and Baer, Matthias and Thiele, Lothar},
title = {Minimising Access Conflicts on Shared Multi-Bank Memory},
year = {2017},
issue_date = {October 2017},
publisher = {Association for Computing Machinery},
address = {New York, NY, USA},
volume = {16},
number = {5s},
issn = {1539-9087},
url = {https://doi.org/10.1145/3126535},
doi = {10.1145/3126535},
abstract = {A common multi-core pattern consists of processors communicating through shared, multi-banked on-chip memory. Two approaches exist: Interleaved address mapping, which spreads consecutive data over all banks, and contiguous address mapping, which stores consecutive data on a single bank.In this work, we compare both approaches on the Kalray MPPA-256 platform. For contiguous mapping, we propose an algorithm, based on graph colouring techniques, to automatically perform the assignment of data blocks to memory banks with the goal of minimising access collisions and delays. Experiments with representative, parallel real-world benchmarks show that 69\% of the tested configurations, when optimised for contiguous mapping by our algorithm, run up to 86\% faster on average than with interleaved mapping.},
journal = {ACM Trans. Embed. Comput. Syst.},
month = sep,
articleno = {135},
numpages = {20},
keywords = {Contention, interleaved addressing, memory allocation, memory banks},
}

@inproceedings{type_infer,
author = {Garcia, Ronald and Cimini, Matteo},
title = {Principal Type Schemes for Gradual Programs},
year = {2015},
isbn = {9781450333009},
publisher = {Association for Computing Machinery},
address = {New York, NY, USA},
url = {https://doi.org/10.1145/2676726.2676992},
doi = {10.1145/2676726.2676992},
abstract = {Gradual typing is a discipline for integrating dynamic checking into a static type system. Since its introduction in functional languages, it has been adapted to a variety of type systems, including object-oriented, security, and substructural. This work studies its application to implicitly typed languages based on type inference. Siek and Vachharajani designed a gradual type inference system and algorithm that infers gradual types but still rejects ill-typed static programs. However, the type system requires local reasoning about type substitutions, an imperative inference algorithm, and a subtle correctness statement.This paper introduces a new approach to gradual type inference, driven by the principle that gradual inference should only produce static types. We present a static implicitly typed language, its gradual counterpart, and a type inference procedure. The gradual system types the same programs as Siek and Vachharajani, but has a modular structure amenable to extension. The language admits let-polymorphism, and its dynamics are defined by translation to the Polymorphic Blame Calculus.The principal types produced by our initial type system mask the distinction between static parametric polymorphism and polymorphism that can be attributed to gradual typing. To expose this difference, we distinguish static type parameters from gradual type parameters and reinterpret gradual type consistency accordingly. The resulting extension enables programs to be interpreted using either the polymorphic or monomorphic Blame Calculi.},
booktitle = {Proceedings of the 42nd Annual ACM SIGPLAN-SIGACT Symposium on Principles of Programming Languages},
pages = {303–315},
numpages = {13},
keywords = {type inference, gradual typing},
location = {Mumbai, India},
series = {POPL '15}
}

@unpublished{llama,
  title={Llama: Open and efficient foundation language models},
  author={Touvron, Hugo and Lavril, Thibaut and Izacard, Gautier and Martinet, Xavier and Lachaux, Marie-Anne and Lacroix, Timoth{\'e}e and Rozi{\`e}re, Baptiste and Goyal, Naman and Hambro, Eric and Azhar, Faisal and others},
  note={unpublished},
  journal={arXiv preprint arXiv:2302.13971},
  year={2023},
  doi={10.48550/arXiv.2302.13971}
}

@article{yurtsever2020survey,
  title={A survey of autonomous driving: Common practices and emerging technologies},
  author={Yurtsever, Ekim and Lambert, Jacob and Carballo, Alexander and Takeda, Kazuya},
  journal={IEEE access},
  volume={8},
  pages={58443--58469},
  year={2020},
  publisher={IEEE},
  doi = {10.1109/ACCESS.2020.2983149}
}

@inproceedings{
MoE,
title={ Outrageously Large Neural Networks: The Sparsely-Gated Mixture-of-Experts Layer},
author={Noam Shazeer and *Azalia Mirhoseini and *Krzysztof Maziarz and Andy Davis and Quoc Le and Geoffrey Hinton and Jeff Dean},
booktitle={International Conference on Learning Representations},
year={2017},
url={https://openreview.net/forum?id=B1ckMDqlg},
doi={10.48550/arXiv.1701.06538}
}

@unpublished{deepseekr1,
      title={DeepSeek-R1: Incentivizing Reasoning Capability in LLMs via Reinforcement Learning}, 
      author={DeepSeek-AI and Daya Guo and Dejian Yang and Haowei Zhang and Junxiao Song and Ruoyu Zhang and Runxin Xu and Qihao Zhu and Shirong Ma and Peiyi Wang and Xiao Bi and Xiaokang Zhang and Xingkai Yu and Yu Wu and Z. F. Wu and Zhibin Gou and Zhihong Shao and Zhuoshu Li and Ziyi Gao and Aixin Liu and Bing Xue and Bingxuan Wang and Bochao Wu and Bei Feng and Chengda Lu and Chenggang Zhao and Chengqi Deng and Chenyu Zhang and Chong Ruan and Damai Dai and Deli Chen and Dongjie Ji and Erhang Li and Fangyun Lin and Fucong Dai and Fuli Luo and Guangbo Hao and Guanting Chen and Guowei Li and H. Zhang and Han Bao and Hanwei Xu and Haocheng Wang and Honghui Ding and Huajian Xin and Huazuo Gao and Hui Qu and Hui Li and Jianzhong Guo and Jiashi Li and Jiawei Wang and Jingchang Chen and Jingyang Yuan and Junjie Qiu and Junlong Li and J. L. Cai and Jiaqi Ni and Jian Liang and Jin Chen and Kai Dong and Kai Hu and Kaige Gao and Kang Guan and Kexin Huang and Kuai Yu and Lean Wang and Lecong Zhang and Liang Zhao and Litong Wang and Liyue Zhang and Lei Xu and Leyi Xia and Mingchuan Zhang and Minghua Zhang and Minghui Tang and Meng Li and Miaojun Wang and Mingming Li and Ning Tian and Panpan Huang and Peng Zhang and Qiancheng Wang and Qinyu Chen and Qiushi Du and Ruiqi Ge and Ruisong Zhang and Ruizhe Pan and Runji Wang and R. J. Chen and R. L. Jin and Ruyi Chen and Shanghao Lu and Shangyan Zhou and Shanhuang Chen and Shengfeng Ye and Shiyu Wang and Shuiping Yu and Shunfeng Zhou and Shuting Pan and S. S. Li and Shuang Zhou and Shaoqing Wu and Shengfeng Ye and Tao Yun and Tian Pei and Tianyu Sun and T. Wang and Wangding Zeng and Wanjia Zhao and Wen Liu and Wenfeng Liang and Wenjun Gao and Wenqin Yu and Wentao Zhang and W. L. Xiao and Wei An and Xiaodong Liu and Xiaohan Wang and Xiaokang Chen and Xiaotao Nie and Xin Cheng and Xin Liu and Xin Xie and Xingchao Liu and Xinyu Yang and Xinyuan Li and Xuecheng Su and Xuheng Lin and X. Q. Li and Xiangyue Jin and Xiaojin Shen and Xiaosha Chen and Xiaowen Sun and Xiaoxiang Wang and Xinnan Song and Xinyi Zhou and Xianzu Wang and Xinxia Shan and Y. K. Li and Y. Q. Wang and Y. X. Wei and Yang Zhang and Yanhong Xu and Yao Li and Yao Zhao and Yaofeng Sun and Yaohui Wang and Yi Yu and Yichao Zhang and Yifan Shi and Yiliang Xiong and Ying He and Yishi Piao and Yisong Wang and Yixuan Tan and Yiyang Ma and Yiyuan Liu and Yongqiang Guo and Yuan Ou and Yuduan Wang and Yue Gong and Yuheng Zou and Yujia He and Yunfan Xiong and Yuxiang Luo and Yuxiang You and Yuxuan Liu and Yuyang Zhou and Y. X. Zhu and Yanhong Xu and Yanping Huang and Yaohui Li and Yi Zheng and Yuchen Zhu and Yunxian Ma and Ying Tang and Yukun Zha and Yuting Yan and Z. Z. Ren and Zehui Ren and Zhangli Sha and Zhe Fu and Zhean Xu and Zhenda Xie and Zhengyan Zhang and Zhewen Hao and Zhicheng Ma and Zhigang Yan and Zhiyu Wu and Zihui Gu and Zijia Zhu and Zijun Liu and Zilin Li and Ziwei Xie and Ziyang Song and Zizheng Pan and Zhen Huang and Zhipeng Xu and Zhongyu Zhang and Zhen Zhang},
      note={unpublished},
      year={2025},
      eprint={2501.12948},
      archivePrefix={arXiv},
      primaryClass={cs.CL},
      url={https://arxiv.org/abs/2501.12948}, 
}

@article{cypress,
author = {Yadav, Rohan and Garland, Michael and Aiken, Alex and Bauer, Michael},
title = {Task-Based Tensor Computations on Modern GPUs},
year = {2025},
issue_date = {June 2025},
publisher = {Association for Computing Machinery},
address = {New York, NY, USA},
volume = {9},
number = {PLDI},
url = {https://doi.org/10.1145/3729262},
doi = {10.1145/3729262},
abstract = {Domain-specific, fixed-function units are becoming increasingly common in modern processors. As the computational demands of applications evolve, the capabilities and programming interfaces of these fixed-function units continue to change. NVIDIA’s Hopper GPU architecture contains multiple fixed-function units per compute unit, including an asynchronous data movement unit (TMA) and an asynchronous matrix multiplication unit (Tensor Core). Efficiently utilizing these units requires a fundamentally different programming style than previous architectures; programmers must now develop warp-specialized kernels that orchestrate producer-consumer pipelines between the asynchronous units. To manage the complexity of programming these new architectures, we introduce Cypress, a task-based programming model with sequential semantics. Cypress programs are a set of designated functions called tasks that operate on tensors and are free of communication and synchronization. Cypress programs are bound to the target machine through a mapping specification that describes where tasks should run and in which memories tensors should be materialized. We present a compiler architecture that lowers Cypress programs into CUDA programs that perform competitively with expert-written codes. Cypress achieves 0.88x-1.06x the performance of cuBLAS on GEMM, and between 0.80x-0.98x the performance of the currently best-known Flash Attention implementation while eliminating all aspects of explicit data movement and asynchronous computation from application code.},
journal = {Proc. ACM Program. Lang.},
month = jun,
articleno = {163},
numpages = {25},
keywords = {Domain Specific Languages, GPUs, Tensor Cores}
}

@unpublished{linearlayouts,
    author={Keren Zhou and Mario Lezcano and Adam Goucher and Akhmed Rakhmati and Jeff Niu and Justin Lebar and Pawel Szczerbuk and Peter Bell and Phil Tillet and Thomas Raoux and Zahi Moudallal},
    title={Linear Layouts: Robust Code Generation of Efficient Tensor Computation Using $\mathbb{F}_2$},
    note={unpublished},
}

@inproceedings{layoutselectionpbqp,
author = {Anderson, Andrew and Gregg, David},
title = {Optimal DNN primitive selection with partitioned boolean quadratic programming},
year = {2018},
isbn = {9781450356176},
publisher = {Association for Computing Machinery},
address = {New York, NY, USA},
url = {https://doi.org/10.1145/3168805},
doi = {10.1145/3168805},
abstract = {Deep Neural Networks (DNNs) require very large amounts of computation, and many different algorithms have been proposed to implement their most expensive layers, each of which has a large number of variants with different trade-offs of parallelism, locality, memory footprint, and execution time. In addition, specific algorithms operate much more efficiently on specialized data layouts. We state the problem of optimal primitive selection in the presence of data layout transformations, and show that it is NP-hard by demonstrating an embedding in the Partitioned Boolean Quadratic Assignment problem (PBQP). We propose an analytic solution via a PBQP solver, and evaluate our approach experimentally by optimizing several popular DNNs using a library of more than 70 DNN primitives, on an embedded platform and a general purpose platform. We show experimentally that significant gains are possible versus the state of the art vendor libraries by using a principled analytic solution to the problem of primitive selection in the presence of data layout transformations.},
booktitle = {Proceedings of the 2018 International Symposium on Code Generation and Optimization},
pages = {340–351},
numpages = {12},
keywords = {Deep Neural Networks, Primitive Selection},
location = {Vienna, Austria},
series = {CGO '18}
}

@inproceedings{thunderkittens,
title={ThunderKittens: Simple, Fast, and \${\textbackslash}textit\{Adorable\}\$ Kernels},
author={Benjamin Frederick Spector and Simran Arora and Aaryan Singhal and Arjun Parthasarathy and Daniel Y Fu and Christopher Re},
booktitle={The Thirteenth International Conference on Learning Representations},
year={2025},
url={https://openreview.net/forum?id=0fJfVOSUra},
doi = {10.48550/arXiv.2410.20399}
}

@inproceedings{
mamba,
title={Mamba: Linear-Time Sequence Modeling with Selective State Spaces},
author={Albert Gu and Tri Dao},
booktitle={First Conference on Language Modeling},
year={2024},
url={https://openreview.net/forum?id=tEYskw1VY2}
}

@Manual{cub,
  title = {{CCCL}: {CUDA} {C++} {C}ore {L}ibraries},
  author = {{CCCL Development Team}},
  year = {2023},
  url = {https://github.com/NVIDIA/cccl},
}

@INPROCEEDINGS{microbench,
  author={Wong, Henry and Papadopoulou, Misel-Myrto and Sadooghi-Alvandi, Maryam and Moshovos, Andreas},
  booktitle={2010 IEEE International Symposium on Performance Analysis of Systems and Software (ISPASS)}, 
  title={Demystifying GPU microarchitecture through microbenchmarking}, 
  year={2010},
  volume={},
  number={},
  pages={235-246},
  keywords={Microarchitecture;Graphics processing unit;Samarium;Computer architecture;Delay;Clocks;Kernel;Registers;Performance analysis;Hardware},
  doi={10.1109/ISPASS.2010.5452013}}

@inproceedings{instsche,
author = {Gibbons, Philip B. and Muchnick, Steven S.},
title = {Efficient instruction scheduling for a pipelined architecture},
year = {1986},
isbn = {0897911970},
publisher = {Association for Computing Machinery},
address = {New York, NY, USA},
url = {https://doi.org/10.1145/12276.13312},
doi = {10.1145/12276.13312},
abstract = {As part of an effort to develop an optimizing compiler for a pipelined architecture, a code reorganization algorithm has been developed that significantly reduces the number of runtime pipeline interlocks. In a pass after code generation, the algorithm uses a dag representation to heuristically schedule the instructions in each basic block.Previous algorithms for reducing pipeline interlocks have had worst-case runtimes of at least O (n4). By using a dag representation which prevents scheduling deadlocks and a selection method that requires no lookahead, the resulting algorithm reorganizes instructions almost as effectively in practice, while having an O (n2) worst-case runtime.},
booktitle = {Proceedings of the 1986 SIGPLAN Symposium on Compiler Construction},
pages = {11–16},
numpages = {6},
location = {Palo Alto, California, USA},
series = {SIGPLAN '86}
}

@inproceedings{warpspecialization,
author = {Bauer, Michael and Treichler, Sean and Aiken, Alex},
title = {Singe: leveraging warp specialization for high performance on GPUs},
year = {2014},
isbn = {9781450326568},
publisher = {Association for Computing Machinery},
address = {New York, NY, USA},
url = {https://doi.org/10.1145/2555243.2555258},
doi = {10.1145/2555243.2555258},
abstract = {We present Singe, a Domain Specific Language (DSL) compiler for combustion chemistry that leverages warp specialization to produce high performance code for GPUs. Instead of relying on traditional GPU programming models that emphasize data-parallel computations, warp specialization allows compilers like Singe to partition computations into sub-computations which are then assigned to different warps within a thread block. Fine-grain synchronization between warps is performed efficiently in hardware using producer-consumer named barriers. Partitioning computations using warp specialization allows Singe to deal efficiently with the irregularity in both data access patterns and computation. Furthermore, warp-specialized partitioning of computations allows Singe to fit extremely large working sets into on-chip memories. Finally, we describe the architecture and general compilation techniques necessary for constructing a warp-specializing compiler. We show that the warp-specialized code emitted by Singe is up to 3.75X faster than previously optimized data-parallel GPU kernels.},
booktitle = {Proceedings of the 19th ACM SIGPLAN Symposium on Principles and Practice of Parallel Programming},
pages = {119–130},
numpages = {12},
keywords = {DSL, GPU, warp specialization, warp-specializing compiler},
location = {Orlando, Florida, USA},
series = {PPoPP '14}
}

@unpublished{affinemap,
      title={Towards a high-performance AI compiler with upstream MLIR}, 
      author={Renato Golin and Lorenzo Chelini and Adam Siemieniuk and Kavitha Madhu and Niranjan Hasabnis and Hans Pabst and Evangelos Georganas and Alexander Heinecke},
      year={2024},
      eprint={2404.15204},
      archivePrefix={arXiv},
      primaryClass={cs.PL},
      url={https://arxiv.org/abs/2404.15204}, 
      note={unpublished},
}

@inproceedings{
nvdsl,
title={{NVDSL}: Simplifying Tensor Cores with Python-Driven {MLIR} Metaprogramming},
author={G\"{u}ray \"{O}zen},
booktitle={Workshop on Efficient Systems for Foundation Models II @ ICML2024},
year={2024},
url={https://openreview.net/forum?id=rBgK6Gv8jD}
}

@misc{amdgpu,
  author = {AMD},
  title = {AMD CDNA$^{\text{\uppercase{TM}}}$ 3 Architecture: The All-New AMD GPU Architecture for the Modern Era of HPC and AI},
  howpublished = {\url{https://www.amd.com/content/dam/amd/en/documents/instinct-tech-docs/white-papers/amd-cdna-3-white-paper.pdf}},
  year = {2025},
  note = {Accessed: 2025-08-27},
}

@unpublished{huaweinpu,
      title={Ascend-CC: Confidential Computing on Heterogeneous NPU for Emerging Generative AI Workloads}, 
      author={Aritra Dhar and Clément Thorens and Lara Magdalena Lazier and Lukas Cavigelli},
      year={2024},
      eprint={2407.11888},
      archivePrefix={arXiv},
      primaryClass={cs.CR},
      url={https://arxiv.org/abs/2407.11888}, 
      note={unpublished},
      doi = {10.48550/arXiv.2407.11888}
}

@INPROCEEDINGS{fastrcnn,
  author={Girshick, Ross},
  booktitle={2015 IEEE International Conference on Computer Vision (ICCV)}, 
  title={Fast R-CNN}, 
  year={2015},
  volume={},
  number={},
  pages={1440-1448},
  keywords={Training;Proposals;Feature extraction;Object detection;Pipelines;Computer architecture;Open source software},
  doi={10.1109/ICCV.2015.169}}

@article{multitask,
author = {Chen, Shijie and Zhang, Yu and Yang, Qiang},
title = {Multi-Task Learning in Natural Language Processing: An Overview},
year = {2024},
issue_date = {December 2024},
publisher = {Association for Computing Machinery},
address = {New York, NY, USA},
volume = {56},
number = {12},
issn = {0360-0300},
url = {https://doi.org/10.1145/3663363},
doi = {10.1145/3663363},
abstract = {Deep learning approaches have achieved great success in the field of Natural Language Processing (NLP). However, directly training deep neural models often suffer from overfitting and data scarcity problems that are pervasive in NLP tasks. In recent years, Multi-Task Learning (MTL), which can leverage useful information of related tasks to achieve simultaneous performance improvement on these tasks, has been used to handle these problems. In this article, we give an overview of the use of MTL in NLP tasks. We first review MTL architectures used in NLP tasks and categorize them into four classes, including parallel architecture, hierarchical architecture, modular architecture, and generative adversarial architecture. Then we present optimization techniques on loss construction, gradient regularization, data sampling, and task scheduling to properly train a multi-task model. After presenting applications of MTL in a variety of NLP tasks, we introduce some benchmark datasets. Finally, we make a conclusion and discuss several possible research directions in this field.},
journal = {ACM Comput. Surv.},
month = jul,
articleno = {295},
numpages = {32},
keywords = {Multi-task learning}
}

@article{drug,
title = {Artificial intelligence and machine learning in drug discovery and development},
journal = {Intelligent Medicine},
volume = {2},
number = {3},
pages = {134-140},
year = {2022},
issn = {2667-1026},
doi = {https://doi.org/10.1016/j.imed.2021.10.001},
url = {https://www.sciencedirect.com/science/article/pii/S2667102621001066},
author = {Veer Patel and Manan Shah},
keywords = {Artificial intelligence, Machine learning, Drug discovery},
abstract = {The current rise of artificial intelligence and machine learning has been significant. It has reduced the human workload improved quality of life significantly. This article describes the use of artificial intelligence and machine learning to augment drug discovery and development to make them more efficient and accurate. In this study, a systematic evaluation of studies was carried out; these were selected based on prior knowledge of the authors and a keyword search in publicly available databases which were filtered based on related context, abstract, methodology, and full text. This body of work supported the roles of machine learning and artificial intelligence in facilitating drug development and discovery processes, making them more cost-effective or altogether eliminating the need for clinical trials, owing to the ability to conduct simulations using these technologies. They also enabled researchers to study different molecules more extensively, without any trials. The results of this paper demonstrate the prevalent application of machine learning and artificial intelligence methods in drug discovery, and indicate a promising future for these technologies; these results should enable researchers, students, and pharmaceutical industry to dive deeper into machine learning and artificial intelligence in a drug discovery and development context.}
}

@unpublished{autonomousdriving,
      title={Recent Advancements in End-to-End Autonomous Driving using Deep Learning: A Survey}, 
      author={Pranav Singh Chib and Pravendra Singh},
      year={2023},
      eprint={2307.04370},
      archivePrefix={arXiv},
      primaryClass={cs.RO},
      url={https://arxiv.org/abs/2307.04370}, 
      note={unpublished},
      doi={10.48550/arXiv.2307.04370}
}

@inproceedings{flashattention3,
author = {Shah, Jay and Bikshandi, Ganesh and Zhang, Ying and Thakkar, Vijay and Ramani, Pradeep and Dao, Tri},
title = {FlashAttention-3: fast and accurate attention with asynchrony and low-precision},
year = {2024},
isbn = {9798331314385},
publisher = {Curran Associates Inc.},
address = {Red Hook, NY, USA},
abstract = {Attention, as a core layer of the ubiquitous Transformer architecture, is the bottleneck for large language models and long-context applications. FLASHATTENTION elaborated an approach to speed up attention on GPUS through minimizing memory reads/writes. However, it has yet to take advantage of new capabilities present in recent hardware, with FLASHATTENTION-2 achieving only 35\% utilization on the H100 GPU. We develop three main techniques to speed up attention on Hopper GPUs: exploiting asynchrony of the Tensor Cores and TMA to (1) overlap overall computation and data movement via warp-specialization and (2) interleave block-wise matmul and softmax operations, and (3) block quantization and incoherent processing that leverages hardware support for FP8 low-precision. We demonstrate that our method, FLASHATTENTION-3, achieves speedup on H100 GPUs by 1.5-2.0\texttimes{} with BF16 reaching up to 840 TFLOPs/s (85\% utilization), and with FP8 reaching 1.3 PFLOPs/s. We validate that FP8 FLASHATTENTION-3 achieves 2.6\texttimes{} lower numerical error than a baseline FP8 attention.},
booktitle = {Proceedings of the 38th International Conference on Neural Information Processing Systems},
articleno = {2193},
numpages = {28},
location = {Vancouver, BC, Canada},
series = {NIPS '24},
doi = {10.5555/3737916.3740109}
}

@misc{hexcute-artifact,
  author       = {Zhang, Xiao and Ding, Yaoyao and Sun, Bolin and Hu, Yang and Shpeisman, Tatiana and Pekhimenko, Gennady},
  title        = {Hexcute: Artifact for CGO 2026},
  year         = {2025},
  howpublished = {\url{https://doi.org/10.5281/zenodo.18005106}},
  doi          = {10.5281/zenodo.18005106}
}


\appendices
\begin{figure}[b]
    \centering
    \includegraphics[width=\linewidth]{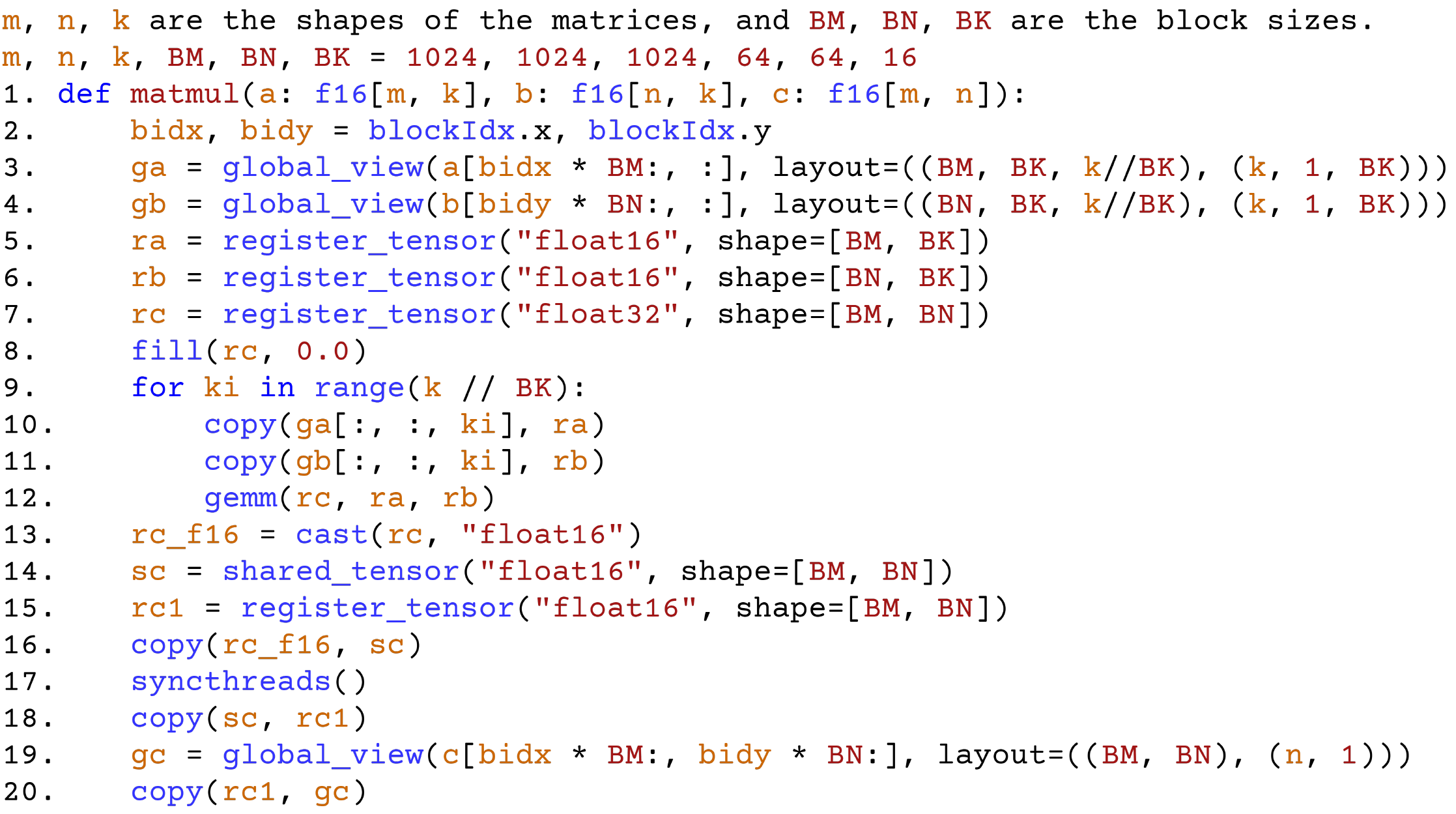}
\caption{Example: a simple GEMM kernel in Hexcute.}
\label{fig:sample_program}
\end{figure}

\section{Example: A Simple GEMM Kernel}
\label{apx:gemm}
Fig.~\ref{fig:sample_program} shows the complete FP16 \texttt{gemm} kernel used in Fig.~\ref{fig:hexcute_overview} (b), expressed with Hexcute’s tile-level primitives. Each thread block computes a $BM \times BN$ tile of the
output matrix~$c$.

\textbf{Lines} 3-4 define global memory tensors with user-specified layouts. As a kernel programming language, Hexcute requires the user to specify the layout for each global memory tensor. Specifically, \textbf{Line} 3 interprets a sub-tensor from matrix \lstinline{a} as a three-dimensional tensor of shape \( (BM, BK, k/BK) \) with strides \( (k, 1, BK) \). Similarly, \textbf{Line} 4 reshapes a sub-tensor of \lstinline{b} into shape \( (BN, BK, k/BK) \). The decomposition converts the tensor into an iterator, allowing for efficient address calculation within the loop (\textbf{Lines} 9-12). 

\textbf{Lines} 5-7 create register tensors \lstinline{ra}, \lstinline{rb}, and \lstinline{rc}, whose layouts are automatically inferred by the compiler. In \textbf{Line} 8, the accumulator tensor \lstinline{rc} is initialized to zero. The loop in \textbf{Lines} 9-12 iterates over \( k/BK \) tiles, loads tiles of \lstinline{a} and \lstinline{b} into registers (\textbf{Lines} 10-11) and performs matrix multiplication (\lstinline{gemm}) operations (\textbf{Line} 12). 

After the loop, the FP32 accumulator \lstinline{rc} is cast to FP16 in \textbf{Line} 13, stored temporarily in shared memory (\textbf{Line} 16), and read back into registers (\textbf{Line} 18).  This redistribution step ensures that threads write contiguous segments of the output tensor, enabling coalesced global-memory stores (\textbf{Line} 20).

Overall, Hexcute lets users express tile-level computation while automatically inferring register and shared-memory layouts, relieving the programmer from manual layout specification.


\section{Tile-level Operation Specifications}
\label{apx:lang_spec}
\begin{figure}[t]
{\fontsize{6.75pt}{6.5pt}\selectfont
  \begin{syntax}
    \category{\texttt{scope}}
    \alternative{\texttt{Global}}
    \alternative{\texttt{Shared}}
    \alternative{\texttt{Register}}
    \category{\texttt{type}}
    \alternative{\texttt{float16}}
    \alternative{\texttt{bfloat16}}
    \alternative{\texttt{float32}}
    \alternative{\texttt{...}}\\
    \alternative{\texttt{int8}}
    \alternative{\texttt{int32}}
    \alternative{\texttt{uint8}}
    \alternative{\texttt{uint32}}
    \alternative{\texttt{...}}\\
    \alternative{\texttt{int1}}
    \alternative{\texttt{int2}}
    \alternative{\texttt{int4}}
    \alternative{\texttt{uint1}}
    \alternative{\texttt{uint2}}
    \alternative{\texttt{uint4}}\\
    \alternative{\texttt{float8\_e5m2}}
    \alternative{\texttt{float8\_e4m3}}\\
    \alternative{\texttt{...}}
    \category{\texttt{tuple\_var}}
    \alternative{\texttt{int}}
    \alternative{\texttt{'(' int ',' ... ',' int ')'}}\\
    \alternative{\texttt{'(' tuple\_var ',' ... ',' tuple\_var ')'}}
    \category{\texttt{tuple}}
    \alternative{\texttt{'(' tuple\_var ',' ... ',' tuple\_var ')'}}
    \category{\texttt{shape}}
    \alternative{\texttt{'(' tuple ')'}}
    \category{\texttt{layout}}
    \alternative{\texttt{'(' tuple ':' tuple ')'}}
    \category{\texttt{operator}}
    \alternative{\texttt{global\_view '(' buffer ',' layout ')'}}\\
    \alternative{\texttt{register\_tensor '(' type ',' shape ')'}}\\
    \alternative{\texttt{shared\_tensor '(' type ',' shape ')'}}\\
    \alternative{\texttt{copy '(' tensor ',' tensor ')'}}\\
    \alternative{\texttt{gemm '(' tensor ',' tensor ',' tensor ')'}}\\
    \alternative{\texttt{cast '(' tensor ',' type ')'}}\\
    \alternative{\texttt{rearrange '(' tensor ',' layout ')'}}\\
    \alternative{\texttt{elementwise '(' tensor ',' ... ',' tensor ')'}}\\
    \alternative{\texttt{reduce '(' tensor ',' int ')'}}
    \category{\texttt{expr}}
    \alternative{\texttt{tensor}}\\
    \alternative{\texttt{operator}}\\
    \alternative{\texttt{expr '+' expr}}\\
    \alternative{\texttt{expr '-' expr}}\\
    \alternative{\texttt{expr '/' expr}}\\
    \alternative{\texttt{...}}
  \end{syntax}
}
  \caption{Syntax of the tile-level operations in Hexcute.}
  \label{fig:layout-syntax}
\end{figure}

This appendix describes the syntax of Hexcute's tile-level operations, shown in Fig.~\ref{fig:layout-syntax}. Hexcute provides explicit control
over memory placement (\texttt{Global}, \texttt{Shared}, \texttt{Register}) and supports low-precision integer types such as 
\texttt{int\{1,2,4\}} and \texttt{uint\{1,2,4\}}, along with 
FP8 formats \texttt{float8\_e5m2} and \texttt{float8\_e4m3}.

Following CuTe’s methodology, shapes and layouts are described using recursively defined integer tuples. A layout is written as \texttt{(tuple : tuple)}, specifying a hierarchical shape and its corresponding strides. These constructs allow users to describe sophisticated register and shared-memory layouts
compactly.

Hexcute supports tile-level primitives for data movement (\texttt{copy}), matrix multiplication (\texttt{gemm}), type conversion (\texttt{cast}), register-layout conversion (\texttt{rearrange}), elementwise computation, and reduction along a given dimension. These primitives form the core language interface of Hexcute’s tile-level programming model.

\section{Example: Computing Composite Function $g\circ q^{-1}$}
\label{apx:composite_func}
\begin{figure*}[t]
\centering
\includegraphics[width=\linewidth]{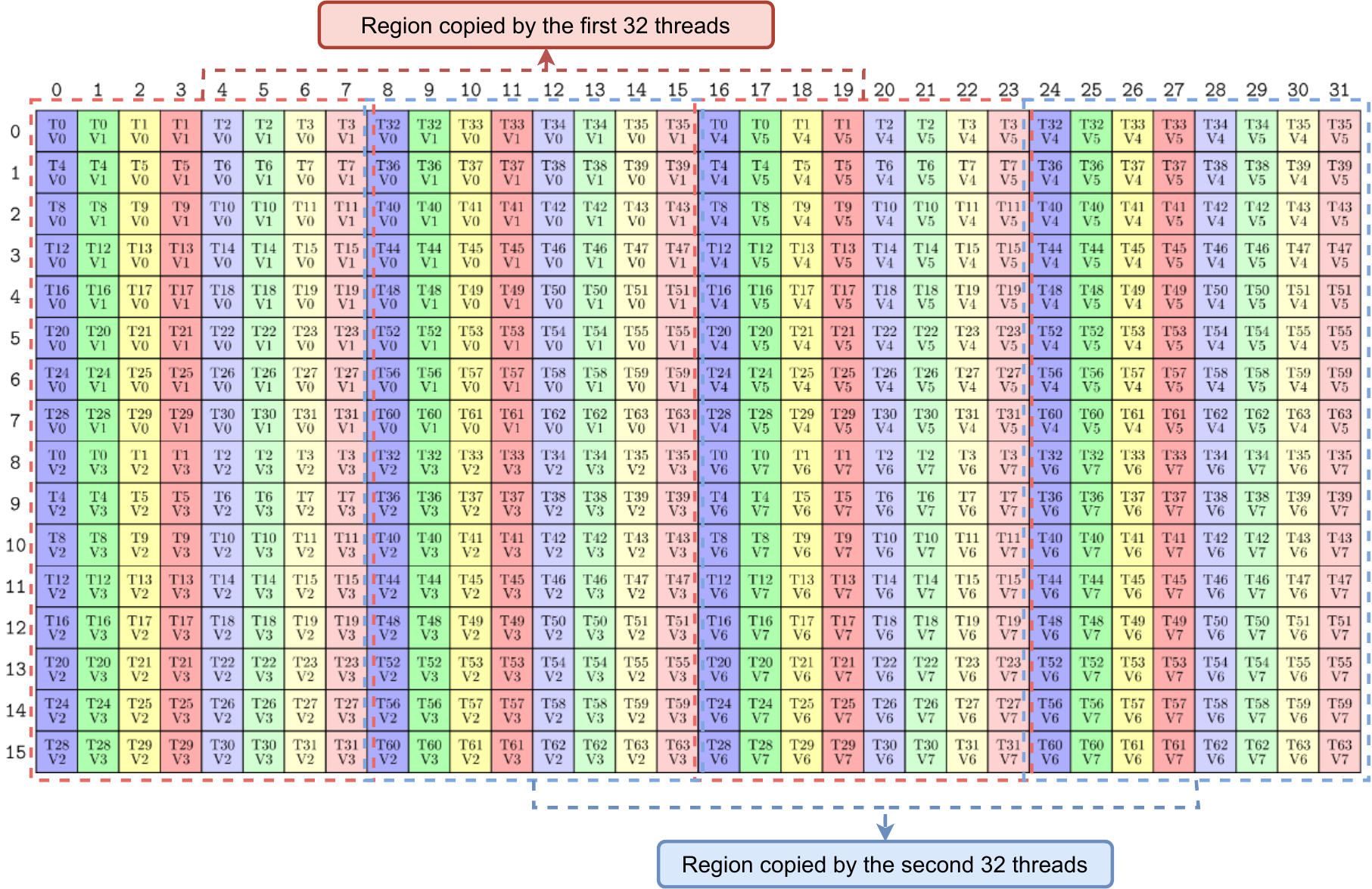}
\caption{The data distribution of register tensor \texttt{b} represented with layout $g$.}
\label{fig:layout_g}
\end{figure*}
\begin{figure*}[t]
\centering
\includegraphics[width=0.8\linewidth]{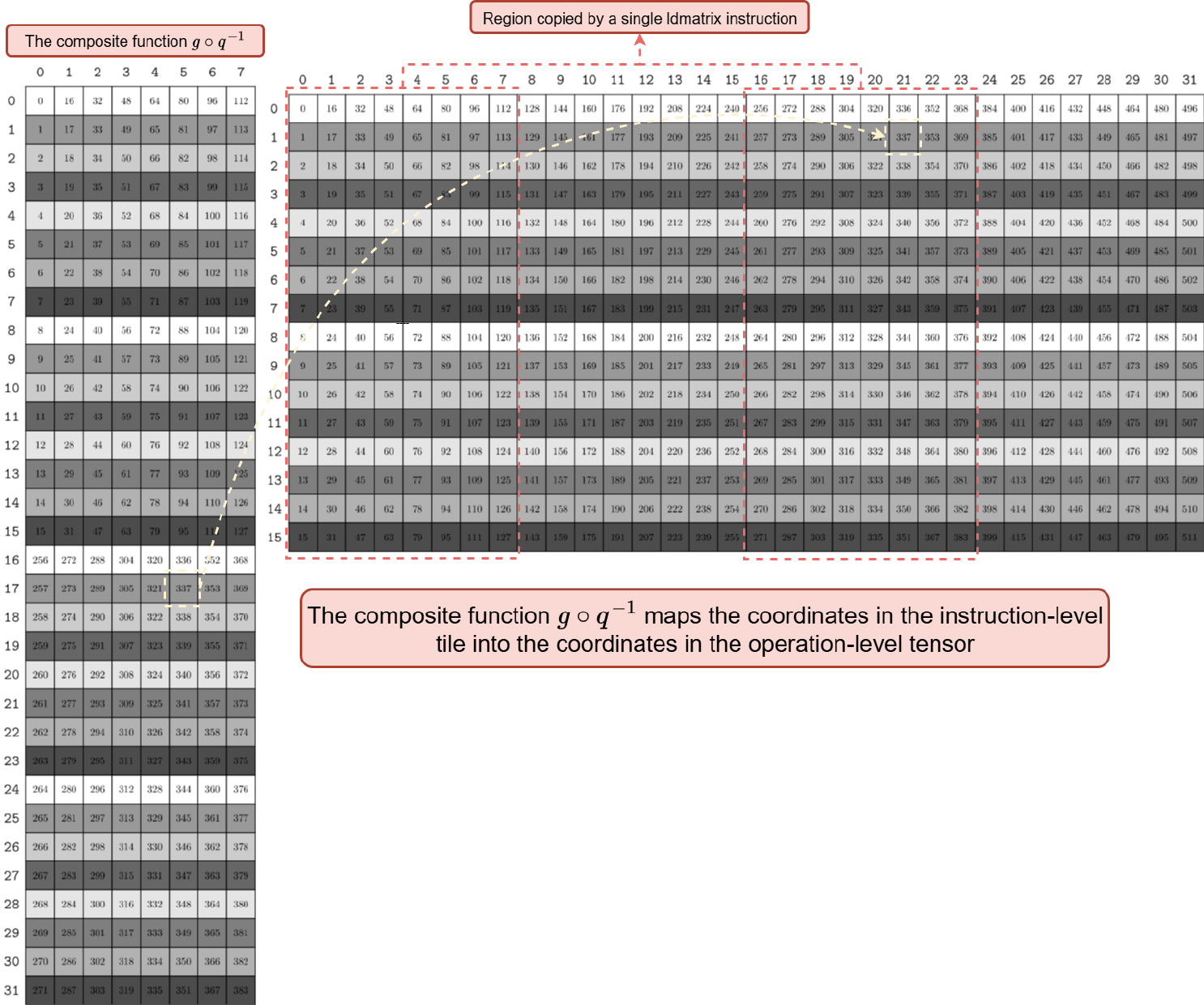}
\caption{The composite function $g\circ q^{-1}$ maps coordinates \((17, 5)\) in the instruction-level tile onto coordinates \((1, 21)\) in the operation-level tensor.}
\label{fig:mapping_g_q}
\end{figure*}
We illustrate how to compute the composite function $g \circ q^{-1}$ with a concrete example.  

Given the thread-value layout $q$ in Fig.~\ref{fig:mapping} (b), its inverse $q^{-1}$ is obtained directly using the inversion operation in CuTe~\cite{CuTe-doc}:   
\[
q^{-1} = ((8, 4), (2, 4)):((4, 64), (32, 1)).
\]  
The codomain of this function is the Cartesian product of two integer spaces, \(\mathbb{T}\times\mathbb{V}\), where \(\mathbb{T}=\{t_0,t_1,\ldots,t_{31}\}\) denotes thread indices and \(\mathbb{V}=\{v_0,v_1,\ldots,v_{7}\}\) denotes value indices.  

Consider a shared-memory \texttt{copy} that loads a $16 \times 32$ tile into registers with 64 threads. After loading, the data
is distributed according to the layout  
\[
g = ((4, 8, 2), (2, 2, 2)):((32, 1, 128), (16, 8, 256)).
\]  
Intuitively, this corresponds to 64 threads cooperatively loading the 16$\times$32 tile: every group of 32 threads loads two 16$\times$8 submatrices, as shown in Fig.~\ref{fig:layout_g}. 

To compute $g\circ q^{-1}$, we first restrict $g$ to the 32-thread subspace to align it with the codomain of $q^{-1}$:
\[
g = ((4, 8), (2, 2, 2)):((32, 1), (16, 8, 256)).
\]  
Applying the composition operation in CuTe~\cite{CuTe-doc} yields  
\[
g \circ q^{-1} = ((8, 2, 2), (2, 4)):((1, 8, 256), (16, 32)).
\]  
This resulting layout is shown on the left of Fig.~\ref{fig:mapping_g_q}, where the numbers in the boxes indicate the output of the function. The right-hand side shows the logical codomain of layout $g$, $(16, 32):(1, 16)$. Fig.~\ref{fig:mapping_g_q} allows us to verify that $g \circ q^{-1}$ correctly maps instruction-level tile coordinates to operation-level tensor coordinates.  

For example, consider input coordinates $(17, 5)$:  
\[
g \circ q^{-1}(17, 5) = 337.
\]  
Since the tensor has shape $16\times32$, this column-major order linear index corresponds to  
\begin{align*}  
r &= 337 \bmod 16 = 1, \\  
c &= 337 \div 16 = 21.  
\end{align*}  
This matches the expected two-dimensional coordinates shown in Fig.~\ref{fig:mapping_g_q}.

\section{Generalized Constraints}
\label{apx:general_constraints}
Fig.~\ref{fig:thread_value_layout_constraints} summarizes the thread-value layout constraints for the \texttt{copy}, \texttt{gemm}, \texttt{elementwise}, and \texttt{reduce} operations.

For the \texttt{copy} operation in Fig.~\ref{fig:thread_value_layout_constraints} (a), the constraint
requires the composite mappings $f \circ p^{-1}$ and $g \circ q^{-1}$ to be identical, ensuring that a single instruction moves the same logical region of data from the source tensor to the destination tensor.

For the \texttt{gemm(a, b, c)} operation in Fig.~\ref{fig:thread_value_layout_constraints} (b), the tensor operands $a$, $b$, and $c$ reside in different coordinate systems ($M{\times}K$, $N{\times}K$, and $M{\times}N$). Fig.~\ref{fig:thread_value_layout_constraints} (b) ensures that the instruction's operand tiles ($A$, $B$, $C$) must correspond consistently to these tensor-level spaces along each dimension. In Fig.~\ref{fig:thread_value_layout_constraints} (b), projection functions ($\mu_\ast$) extract one dimension of the tensor coordinates, while embedding functions ($\eta_\ast$) lift the instruction coordinates into the 2D spaces. Matching the resulting composite mappings in Fig.~\ref{fig:thread_value_layout_constraints} ensures that Tensor Core fragments align correctly with the tile-level layouts of the \texttt{gemm} operation. 

For \texttt{elementwise} operations in Fig.~\ref{fig:thread_value_layout_constraints} (c), all tensors must share the same thread-value layout.

For the \texttt{reduce} operation in Fig.~\ref{fig:thread_value_layout_constraints} (d), the output layout $g$ must match the input layout $f$ after collapsing the reduced dimension. This collapse is expressed by composing $f$ with a projection that removes the reduction axis.

\begin{figure}[t!]
\begin{prooftree}
{\small
\hypo{
\begin{tabular}{p{0.78\linewidth}}
\textbf{(a)} \lstinline[basicstyle=\bfseries\small]|copy(a, b)|\\
\text{Tensor \lstinline[basicstyle=\ttfamily\small]|a| and \lstinline[basicstyle=\ttfamily\small]|b| have thread-value layouts $f$ and $g$.}\\
\text{Instruction $I$ is represented by thread-value layouts $p$ and $q$ for}\\
\text{its input and output operands. $I$ can execute the \lstinline[basicstyle=\ttfamily\small]{copy} operation.}
\end{tabular}}
\infer1[tp/tp-copy]{f\circ p^{-1} = g\circ q^{-1}}
}
\end{prooftree}
\begin{prooftree}
{\small
\hypo{
\begin{tabular}{p{0.78\linewidth}}
\textbf{(b)} \lstinline[basicstyle=\bfseries\small]|gemm(a, b, c)|\\
\text{Tensor \lstinline[basicstyle=\ttfamily\small]|a|, \lstinline[basicstyle=\ttfamily\small]|b|, and \lstinline[basicstyle=\ttfamily\small]|c| have thread-value layouts $f_a$, $f_b$ and $f_c$.}\\
\text{Instruction $I$ is represented by thread-value layouts $p_A$, $p_B$ and $p_C$.}\\
\text{Instruction $I$ can execute the \lstinline[basicstyle=\ttfamily\small]{gemm} operation.}
\end{tabular}}
\infer1[tp/tp-gemm]{
\begin{tabular}{c}
$\mu_M\circ \left(f_c\circ p_C^{-1}\right)\circ \eta_M=\tilde{\mu}_M\circ \left(f_a\circ p_A^{-1}\right)\circ \tilde{\eta}_M$\\
$\mu_N\circ \left(f_c\circ p_C^{-1}\right)\circ \eta_N=\tilde{\mu}_N\circ \left(f_b\circ p_B^{-1}\right)\circ \tilde{\eta}_N$\\
$\mu_K\circ \left(f_a\circ p_A^{-1}\right)\circ \eta_K=\tilde{\mu}_K\circ \left(f_b\circ p_B^{-1}\right)\circ \tilde{\eta}_K$\\
\end{tabular}}
}
\end{prooftree}
\begin{prooftree}
{\small
\hypo{
\begin{tabular}{p{0.78\linewidth}}
\textbf{(c)} \lstinline[basicstyle=\bfseries\small]|elementwise(a1, a2, a3, ..., an)|\\
\text{Tensors \lstinline[basicstyle=\ttfamily\small]|a1, ..., an| have thread-value layouts $f_1, f_2, \cdots, f_n$} 
\end{tabular}}
\infer1[tp/tp-elem]{
f_1=f_2=\cdots=f_n}
}
\end{prooftree}
\begin{prooftree}
{\small
\hypo{
\begin{tabular}{p{0.76\linewidth}}
\textbf{(d)} \lstinline[basicstyle=\bfseries\small]|b = reduce(a)|\\
\text{Tensors \lstinline[basicstyle=\ttfamily\small]|a| and \lstinline[basicstyle=\ttfamily\small]|b| have thread-value layouts $f$ and $g$.} 
\end{tabular}}
\infer1[tp/tp-reduce]{\mu \circ f = g,\,\mu\,\text{collapses the reduced dimension.}}
}
\end{prooftree}
\caption{The thread-value layout constraints for \lstinline{copy}, \lstinline{gemm}, \lstinline{elementwise}, and \lstinline{reduce} operations.}
\label{fig:thread_value_layout_constraints}
\end{figure}

\section{A Formal Proof for the Thread-value Layout Constraints of \texttt{gemm} Operation}
\label{apx:proof_tv_layouts}
In this appendix, we present a formal proof of the thread-value layout constraints for the \lstinline{gemm} operation.

\begin{figure}[t]
\centering
\begin{tikzcd}[row sep=huge]
I_{M}\times I_{K} \arrow[swap]{d}{f_a\circ p_A^{-1}} & I_{M} \arrow[swap]{l}{\tilde{\eta}_M} \arrow{r}{\eta_M} \arrow{d} & I_{M}\times I_{N} \arrow{d}{f_c\circ p_C^{-1}} \\%
T_{M} \times T_{K} \arrow{r}{\tilde{\mu}_M} & T_{M} & T_{M} \times T_{N} \arrow[swap]{l}{\mu_M} 
\end{tikzcd}\\
\vspace{0.5cm}
\begin{tikzcd}[row sep=huge]
I_{N}\times I_{K} \arrow[swap]{d}{f_b\circ p_B^{-1}} & I_{N} \arrow[swap]{l}{\tilde{\eta}_N} \arrow{r}{\eta_N} \arrow{d} & I_{M}\times I_{N} \arrow{d}{f_c\circ p_C^{-1}} \\%
T_{N} \times T_{K} \arrow{r}{\tilde{\mu}_N} & T_{N} & T_{M} \times T_{N} \arrow[swap]{l}{\mu_N} 
\end{tikzcd}\\
\vspace{0.5cm}
\begin{tikzcd}[row sep=huge]
I_{M}\times I_{K} \arrow[swap]{d}{f_a\circ p_A^{-1}} & I_{K} \arrow[swap]{l}{\tilde{\eta}_K} \arrow{r}{\eta_K} \arrow{d} & I_{N}\times I_{K} \arrow{d}{f_b\circ p_B^{-1}} \\%
T_{M} \times T_{K} \arrow{r}{\tilde{\mu}_K} & T_{K} & T_{M} \times T_{K} \arrow[swap]{l}{\mu_K} 
\end{tikzcd}
\caption{The thread-value constraints for matrices \lstinline{a}, \lstinline{b}, and \lstinline{c} in the \lstinline{gemm} operation represented with commutative diagrams. The projection functions, $\mu_M$, $\tilde{\mu}_M$, $\mu_N$, $\tilde{\mu}_N$, $\mu_K$, and $\tilde{\mu}_K$, and the natural embedding functions, $\eta_M$, $\tilde{\eta}_M$, $\eta_N$, $\tilde{\eta}_N$, $\eta_K$, and $\tilde{\eta}_K$, are defined in Theorem \ref{thm:mma}. The Cartesian product $T_M\times T_N$, $T_M\times T_K$, and $T_N\times T_K$, represent the coordinate spaces of matrices, \lstinline{a}, \lstinline{b}, and \lstinline{c} in the \lstinline{gemm} operation, while the Cartesian product $I_M\times I_N$, $I_M\times I_K$, and $I_N\times I_K$, represent the coordinate spaces of $A$, $B$, and $C$ operands in instruction $I$.} 
\label{fig:commut_diag_mma}
\end{figure}
\begin{theorem}
\label{thm:mma}
Consider a \lstinline{gemm} operation where the tensors \lstinline{a}, \lstinline{b}, and \lstinline{c} have the thread-value layouts $f_a$, $f_b$, and $f_c$, respectively. Suppose we use a Tensor Core instruction $I$, represented by the thread-value layouts $p_A$, $p_B$, and $p_C$ for the $A$, $B$, and $C$ operands, to implement this operation. Let the projection be defined as follows: 
\begin{align*}
& \mu_M:(m_T, k_T)\mapsto m_T, &\tilde{\mu}_M: (m_T, n_T) \mapsto m_T \\
& \mu_N:(n_T, k_T)\mapsto n_T, &\tilde{\mu}_N: (m_T, n_T) \mapsto n_T \\
& \mu_K:(m_T, k_T)\mapsto k_T, &\tilde{\mu}_K: (n_T, k_T) \mapsto k_T
\end{align*}

The natural embedding functions are defined: 
{
\fontsize{8}{9}\selectfont
\begin{align*}
& \eta_M:m_I\mapsto (m_I, 0)\in I_M\times I_N, &\tilde{\eta}_M:m_I\mapsto (m_I, 0) \in I_M\times I_K \\
& \eta_N:n_I\mapsto (0, n_I)\in I_M\times I_N, &\tilde{\eta}_N:n_I\mapsto (n_I, 0) \in I_N\times I_K \\
& \eta_K:k_I\mapsto (0, k_I)\in I_M\times I_K, &\tilde{\eta}_K:k_I\mapsto (0, k_I) \in I_N\times I_K
\end{align*}
}

where $m_T$, $n_T$, and $k_T$ are the coordinates within tensors of the \lstinline{gemm} operation, and $m_I$, $n_I$, and $k_I$ are the coordinates within the operands of the instruction $I$. The Cartesian product $I_M\times I_N$, $I_M\times I_K$, and $I_N\times I_K$, represent the coordinate spaces of $A$, $B$, and $C$ operands of instruction $I$.
Then, the following consistency equations must hold:
\begin{align*}
& \mu_M\circ \left(f_c\circ p_C^{-1}\right)\circ \eta_M=\tilde{\mu}_M\circ \left(f_a\circ p_A^{-1}\right)\circ \tilde{\eta}_M \\
& \mu_N\circ \left(f_c\circ p_C^{-1}\right)\circ \eta_N=\tilde{\mu}_N\circ \left(f_b\circ p_B^{-1}\right)\circ \tilde{\eta}_N\\
& \mu_K\circ \left(f_a\circ p_A^{-1}\right)\circ \eta_K=\tilde{\mu}_K\circ \left(f_b\circ p_B^{-1}\right)\circ \tilde{\eta}_K\\
\end{align*}
\end{theorem}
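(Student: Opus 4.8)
The plan is to reduce the three consistency equations to a single structural fact about how instruction $I$ tiles the \lstinline{gemm}, and then to derive the equations by a short composition chase. Throughout I treat the thread-value layouts $p_A,p_B,p_C$ as bijections onto their operand coordinate spaces, so that the inverses in the composites $f_a\circ p_A^{-1}$, $f_b\circ p_B^{-1}$, $f_c\circ p_C^{-1}$ are genuine function inverses.

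First I would state precisely what it means for the Tensor Core instruction $I$ to \emph{implement} the \lstinline{gemm} operation. A single invocation of $I$ reads the $A$-operand element at $(m_I,k_I)$, reads the $B$-operand element at $(n_I,k_I)$, and accumulates $\sum_{k_I} A[m_I,k_I]\,B[n_I,k_I]$ into the $C$-operand element at $(m_I,n_I)$. When these operand tiles are embedded into $a$, $b$, $c$ along the composite maps above, correctness of this partial product forces the tiles to be \emph{consistent rectangular blocks}: there must exist injective index maps $\rho_M:I_M\to T_M$, $\rho_N:I_N\to T_N$, $\rho_K:I_K\to T_K$ with
\begin{align*}
f_c\circ p_C^{-1} &= \rho_M\times\rho_N, \\
f_a\circ p_A^{-1} &= \rho_M\times\rho_K, \\
f_b\circ p_B^{-1} &= \rho_N\times\rho_K,
\end{align*}
where $(\rho\times\sigma)(x,y)=(\rho(x),\sigma(y))$. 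The essential content is that the \emph{same} $\rho_M$ occurs in the $A$- and $C$-factorizations (the instruction row $m_I$ must name the same tensor row in the $a$ that is read and the $c$ that is updated), the same $\rho_N$ in the $B$- and $C$-factorizations, and the same $\rho_K$ in the $A$- and $B$-factorizations, so that the hardware contraction $\sum_{k_I}$ is a genuine sub-sum of the tensor contraction. This step is the crux of the proof.

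Granting these product-map factorizations, I would verify the first equation by evaluating both sides at an arbitrary $m_I\in I_M$. Using $\eta_M(m_I)=(m_I,0)$, $\tilde{\eta}_M(m_I)=(m_I,0)$, and the fact that $\mu_M$ and $\tilde{\mu}_M$ each extract the $M$-coordinate,
\begin{align*}
\mu_M\circ(f_c\circ p_C^{-1})\circ\eta_M(m_I) &= \mu_M(\rho_M(m_I),\rho_N(0)) = \rho_M(m_I), \\
\tilde{\mu}_M\circ(f_a\circ p_A^{-1})\circ\tilde{\eta}_M(m_I) &= \tilde{\mu}_M(\rho_M(m_I),\rho_K(0)) = \rho_M(m_I),
\end{align*}
so both sides equal $\rho_M$; the fixed value $0$ in the embeddings is harmless precisely because the composites are product maps, so the $M$-output does not depend on the second input coordinate. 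The $N$- and $K$-equations follow in exactly the same way, both sides collapsing to $\rho_N$ and $\rho_K$ respectively. Equivalently, every square of the commutative diagrams in Fig.~\ref{fig:commut_diag_mma} commutes, the common diagonal through $T_M$ (resp.\ $T_N$, $T_K$) being precisely the shared index map.

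The main obstacle is the first step: axiomatizing what it means for $I$ to implement the \lstinline{gemm} and extracting the product-map factorizations with shared factors. The delicate point is that the composites must be genuine product maps rather than merely having rectangular image --- the $m_I$-th fragment row of $A$ must land in a single tensor row of $a$ independently of $k_I$, and symmetrically for the other operands --- which rests on the dense block-product semantics of \lstinline{mma}, together with injectivity of $\rho_K$ so that distinct $k_I$ contribute distinct terms to the contraction. Once this lemma is established the three equations are immediate, and the proof itself requires no explicit CuTe layout-algebra computation; layout algebra enters only afterwards, when one checks for a concrete instruction and tiling that the criterion holds.
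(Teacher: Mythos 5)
Your proposal is correct, and it uses the same basic machinery as the paper (the composites $f_a\circ p_A^{-1}$, $f_b\circ p_B^{-1}$, $f_c\circ p_C^{-1}$ together with the projections and embeddings), but it places the load-bearing step differently. The paper's proof simply writes down the two composites $\mu_M\circ(f_c\circ p_C^{-1})\circ\eta_M$ and $\tilde{\mu}_M\circ(f_a\circ p_A^{-1})\circ\tilde{\eta}_M$, observes that both map the instruction's $m$-coordinate to the tensors' $m$-coordinate, and asserts they ``must be identical,'' packaging this as the commutative diagrams of Fig.~\ref{fig:commut_diag_mma}; the semantic reason why they coincide is left implicit in the phrase ``$I$ can execute the \texttt{gemm} operation.'' You instead make that semantic content explicit: you axiomatize what it means for the \texttt{mma} to implement the contraction at the level of individual FMAs, and extract from it the shared-factor product factorizations $f_c\circ p_C^{-1}=\rho_M\times\rho_N$, $f_a\circ p_A^{-1}=\rho_M\times\rho_K$, $f_b\circ p_B^{-1}=\rho_N\times\rho_K$, after which the three equations are a two-line evaluation. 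This buys two things the paper does not spell out: a justification that the $M$- (resp.\ $N$-, $K$-) component of each composite is independent of the coordinate that the embedding pins to $0$, so the choice of $0$ in $\eta_\ast$, $\tilde{\eta}_\ast$ is immaterial; and an identification of exactly which hardware assumption (dense block-product semantics with injective index maps) the constraints rest on. The paper's version buys brevity and a clean diagrammatic statement, but your factorization lemma is the more informative formulation --- it is in fact slightly stronger than the stated equations, since the equations only record the restrictions of the product maps along the axes. No gap; the remaining informality (what ``implements'' means) is shared with, and handled no less carefully than, the paper's own proof.
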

\begin{proof}
In the \lstinline{gemm} operation, the thread-value layouts $f_a$, $f_b$, and $f_c$ of the matrices \lstinline{a}, \lstinline{b}, and \lstinline{c} map thread ID \(t\) and value ID \(v\) to the logical positions within the tensors: 
\begin{align*}
& f_a:(t, v)\mapsto(m_T, k_T)\\
& f_b:(t, v)\mapsto(n_T, k_T)\\
& f_c:(t, v)\mapsto(m_T, n_T)
\end{align*} 
Similarly, for the Tensor Core instruction $I$, represented by the thread-value layouts $p_A$, $p_B$, and $p_C$ for the $A$, $B$, and $C$ operands, these layouts map thread ID and value ID to the operand coordinates: 
\pagebreak
\begin{align*}
& p_A:(t, v)\mapsto(m_I, k_I)\\
& p_B:(t, v)\mapsto(n_I, k_I)\\
& p_C:(t, v)\mapsto(m_I, n_I)
\end{align*}

To relate the tensor layouts with the instruction layouts, we construct composite functions from the instruction operands to the matrix tensors:
\begin{align*}
& f_a\circ p_A^{-1}:(m_I, k_I)\mapsto(m_T, k_T) \\
& f_b\circ p_B^{-1}:(n_I, k_I)\mapsto(n_T, k_T) \\
& f_c\circ p_C^{-1}:(m_I, n_I)\mapsto(m_T, n_T)
\end{align*}
Now, let us consider the first equation in Theorem \ref{thm:mma}. To relate the mapping functions $f_a\circ p_A^{-1}$ (for matrix \lstinline{a}) and $f_c \circ p_C^{-1}$ (for matrix \lstinline{c}), we construct the following mappings by composing the projection and embedding functions:  
\begin{align*}
& \mu_M\circ \left(f_c\circ p_C^{-1}\right)\circ \eta_M:m_I \mapsto m_T \\
& \tilde{\mu}_M\circ \left(f_a\circ p_A^{-1}\right)\circ \tilde{\eta}_M:m_I \mapsto m_T
\end{align*}

Since both functions map from the instruction operands'(\(A\) and \(C\)) $m$-coordinate to the matrix tensors'(\lstinline{a} and \lstinline{c}) $m$-coordinate, these functions must be identical. This relationship is represented by a commutative diagram (top of Fig.~\ref{fig:commut_diag_mma}). Similarly, we can relate the mappings for matrix \lstinline{b} and matrix \lstinline{c} (the second equation in Theorem \ref{thm:mma}) and for matrix \lstinline{a} and matrix \lstinline{b} (the third equation in Theorem \ref{thm:mma}). These relationships are represented with commutative diagrams, as shown in the middle and bottom of Fig.~\ref{fig:commut_diag_mma}. 
\end{proof}

\section{Evaluation Details}
This appendix provides the detailed per-shape results that support the evaluation in Section~VII. Fig.~\ref{fig:selective_scan} shows the full
selective-scan results for the Mamba experiment in Section~\ref{evl:mamba_scan}. Figs.~\ref{fig:ampere_gemm}–Fig.~\ref{fig:hopper_mha} report the per-layer performance for all operators summarized in Table~\ref{general-operators}, including GEMM, fused MHA, and FP8 GEMM benchmarks on A100 and H100 GPUs.

\begin{figure}[t!]
    \begin{subfigure}{\linewidth}
        \centering
        \includegraphics[width=\textwidth]{Figures/selective_scan.pdf}
    \end{subfigure}
  \caption{Performance comparison of the selective scan kernels in the Mamba library and Hexcute across 20 shapes. Hexcute achieves an average speedup of 4.17$\times$ on the H100 GPU. }
\label{fig:selective_scan}
\end{figure}

\begin{figure*}[t!]
\centering
\includegraphics[width=\textwidth]{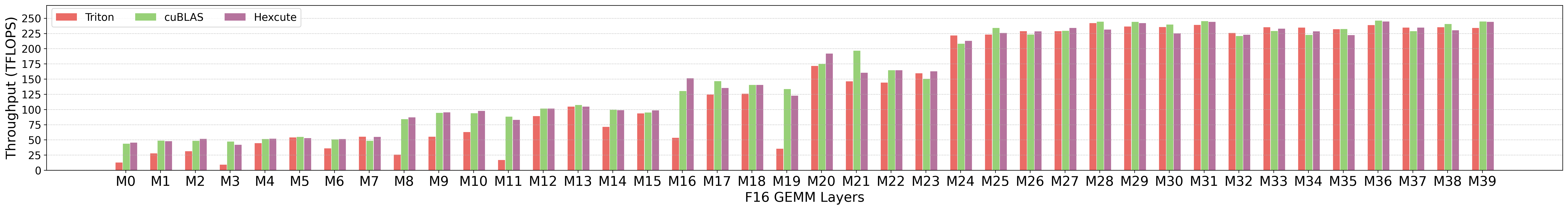}
\caption{Performance comparison of Triton, cuBLAS, and Hexcute on FP16 GEMM layers on an A100 PCIe GPU. Hexcute achieves an average speedup of 1.00$\times$ over cuBLAS and 1.33$\times$ over Triton.}  
\label{fig:ampere_gemm}
\end{figure*}

\begin{figure*}[t!]
\centering
\includegraphics[width=\textwidth]{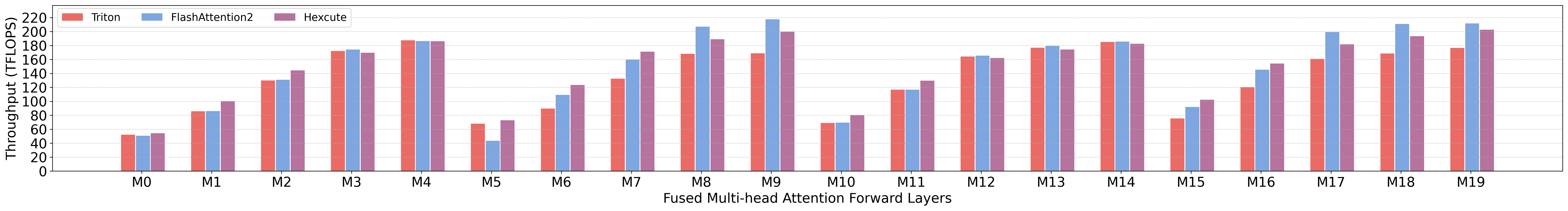}
\caption{Performance comparison of Triton, FlashAttention2, and Hexcute on fused MHA forward layers on an A100 PCIe GPU. Hexcute achieves an average speedup of 1.05$\times$ over FlashAttention2 and 1.13$\times$ over Triton.}
\label{fig:ampere_mha}
\end{figure*}

\begin{figure*}[t!]
\centering
\includegraphics[width=\textwidth]{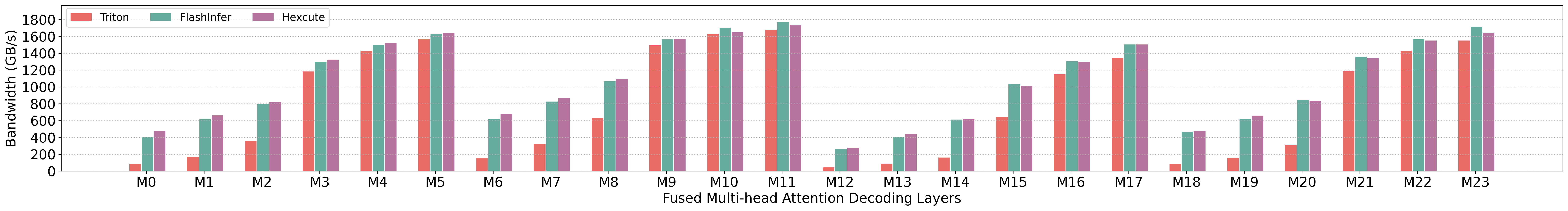}
\caption{Performance comparison of Triton, FlashInfer, and Hexcute on fused MHA decoding layers on an A100 PCIe GPU. Hexcute achieves an average speedup of 1.02$\times$ over FlashInfer and 2.06$\times$ over Triton.}
\label{fig:ampere_dec}
\end{figure*}

\begin{figure*}[t!]
\centering
\includegraphics[width=\textwidth]{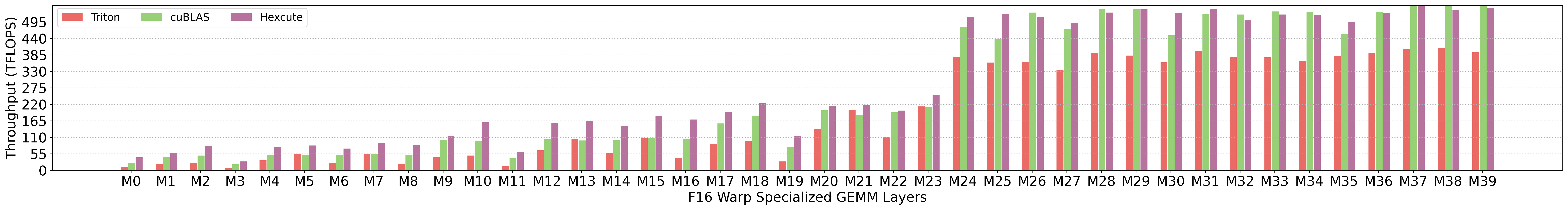}
\caption{Performance comparison of Triton, cuBLAS, and Hexcute on FP16 warp-specialized GEMM layers on an H100 PCIe GPU. Hexcute achieves an average speedup of 1.25$\times$ over cuBLAS and 1.94$\times$ over Triton.}
\label{fig:hopper_gemm}
\end{figure*}

\begin{figure*}[t!]
\centering
\includegraphics[width=\textwidth]{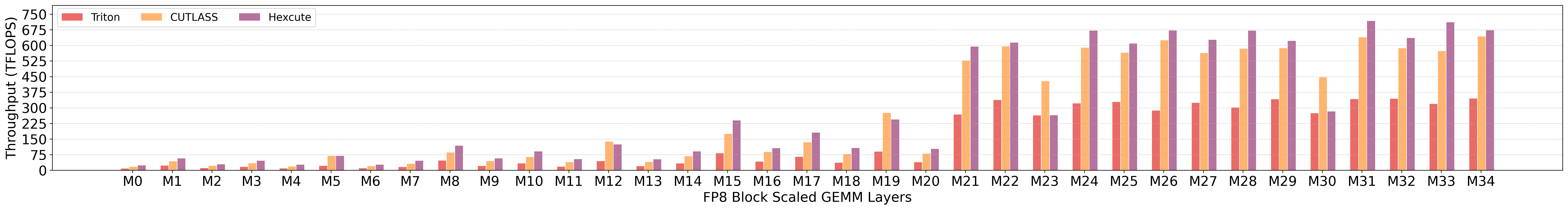}
\caption{Performance comparison of Triton, CUTLASS, and Hexcute on blockwise scaled FP8 GEMM layers on an H100 PCIe GPU. Hexcute achieves an average speedup of 1.17$\times$ over CUTLASS and 2.36$\times$ over Triton.}
\label{fig:hopper_scaled_mm}
\end{figure*}

\begin{figure*}[t!]
\centering
\includegraphics[width=\textwidth]{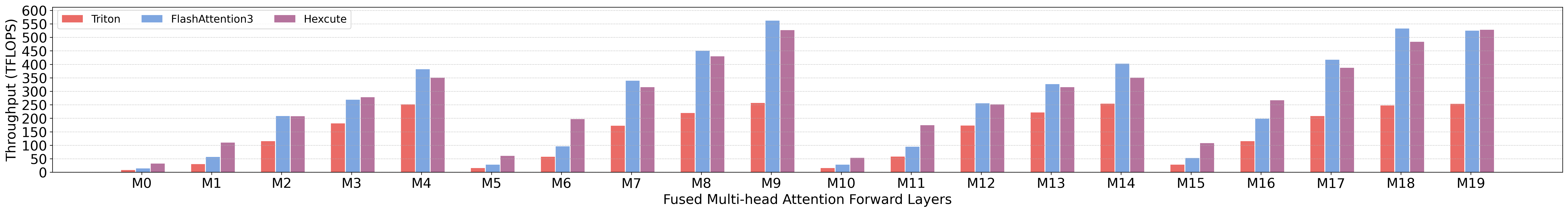}
\caption{Performance comparison of Triton, FlashAttention3, and Hexcute on fused MHA forward layers on an H100 PCIe GPU. Hexcute achieves an average speedup of 1.27$\times$ over FlashAttention3 and 2.25$\times$ over Triton.}
\label{fig:hopper_mha}
\end{figure*}

\end{document}